\theoremstyle{plain}
\newtheorem{theorem}{Theorem}[section]
\newtheorem{lemma}[theorem]{Lemma}
\theoremstyle{definition}
\newtheorem{definition}[theorem]{Definition}
\theoremstyle{remark}
\def\eqref#1{equation~\ref{#1}}
\def\1{\bm{1}}
\DeclareMathAlphabet{\mathsfit}{\encodingdefault}{\sfdefault}{m}{sl}
\SetMathAlphabet{\mathsfit}{bold}{\encodingdefault}{\sfdefault}{bx}{n}
\newcommand{\R}{\mathbb{R}}
\DeclareMathOperator*{\argmax}{arg\,max}
\newcommand{\NSA}{\mathsf{GNSA}}
\newcommand{\ID}{\mathsf{IntrDim}}
\newcommand{\LID}{\mathsf{lid}}
\newcommand{\SLID}{\overline{\mathsf{lid}}}
\newcommand{\LNSA}{\mathsf{LNSA}}
\newcommand{\GNSA}{\mathsf{GNSA}}
\newcommand{\MLNSA}{\mathsf{LNSA}^{\mathsf{metric}}}
\newcommand{\KNN}[2]{{#1}NN_{#2}}
\newcommand{\DA}[3]{\mathsf{AD}_{#3}({#1,#2})}
\newcommand{\LocalNSA}{\mathsf{LNSA}}
\newcommand{\GlobalNSA}{\mathsf{GNSA}}
\title{Normalized Space Alignment: A Versatile \\ Metric for Representation Analysis}
\author{
  Danish Ebadulla \\
  Department of Computer Science\\
  University of California, Santa Barbara\\
  \href{mailto:danish_ebadulla@ucsb.edu}{danish\_ebadulla@ucsb.edu} \\
  \And
  Aditya Gulati \\
  Department of Computer Science\\
  University of California, Santa Barbara\\
  \href{mailto:adityagulati@ucsb.edu}{adityagulati@ucsb.edu} \\
  \And
  Ambuj Singh \\
  Department of Computer Science\\
  University of California, Santa Barbara\\
  \href{mailto:ambuj@ucsb.edu}{ambuj@ucsb.edu} \\
}
\begin{document}

\maketitle

\begin{abstract}
We introduce a manifold analysis technique for neural network representations. Normalized Space Alignment (NSA) compares pairwise distances between two point clouds derived from the same source and having the same size, while potentially possessing differing dimensionalities. NSA can act as both an analytical tool and a differentiable loss function, providing a robust means of comparing and aligning representations across different layers and models. It satisfies the criteria necessary for both a similarity metric and a neural network loss function. We showcase NSA's versatility by illustrating its utility as a representation space analysis metric, a structure-preserving loss function, and a robustness analysis tool. NSA is not only computationally efficient but it can also approximate the global structural discrepancy during mini-batching, facilitating its use in a wide variety of neural network training paradigms.
\end{abstract}

\section{Introduction}\label{sec:introduction}
Deep learning and representation learning have rapidly emerged as cornerstones of modern artificial intelligence, revolutionizing how machines interpret complex data. At the heart of this evolution is the ability of deep learning models to learn efficient representations from vast amounts of unstructured data, transforming it into a format where patterns become discernible and actionable.  As these models delve deeper into learning intricate structures and relationships within data, the necessity for advanced metrics that can accurately assess and preserve the integrity of these learned representations becomes increasingly evident. The ability to preserve and align the representation structure of models would enhance the performance, interpretability and robustness ~\citep{FAL17} of these models.

Despite the significant advancements achieved through representation learning in understanding complex datasets, the challenge of precisely maintaining the structural integrity of representation spaces remains. Current methodologies excel at optimizing the relative positions of embeddings \citep{contrastive, triplet}, yet they often overlook the challenge of preserving exact structural relationships to improve the performance, interpretability, and robustness of neural networks. A similarity index that is computationally efficient and differentiable would not only empower practitioners to navigate high-dimensional spaces with ease but also facilitate the seamless integration of similarity-based techniques into gradient-based optimization pipelines. %

We present a distance preserving structural similarity index called ``Normalized Space Alignment (NSA).'' NSA measures both the global and the local discrepancy in structure between two spaces while ensuring computational efficiency. The compared representations can lie in different dimensional spaces provided there is a one to one mapping between the points. Globally, NSA measures the average discrepancy in pairwise distances; locally, it measures the average discrepancy in Local Intrinsic Dimensionality \citep{houle} of each point. 

While manifold learning \citep{ISOMAP,islam_revealing_2023,fefferman_testing_2016,genovese_manifold_2012} has become integral to deep learning, when applied to deep learning, they encounter three primary challenges \citep{psenka2023representationlearningmanifoldflattening}: (1) the lack of an explicit projection map between the original data and its corresponding low-dimensional representation, making adaptation to unseen data difficult; (2) the inability to scale to large datasets due to computational constraints; and (3) the extensive parameter tuning required, along with the need for a predefined understanding of the data. NSA aims to address these challenges while offering versatility in deep learning applications.

In line with previous works that introduce and theoretically establish similarity indices \citep{DBLP:conf/icml/Kornblith0LH19,RTDPaper,GBSPaper}, we adopt a breadth-first approach in this paper. Rather than focusing on a specific application, we present NSA’s broad applicability across various use cases, establishing its viability in a wide range of scenarios. Future work can explore NSA's impact in greater depth within specialized fields. Our contributions are summarized below: 
\begin{itemize}
    \item NSA is proposed as a differentiable and efficiently computable index. NSA's quadratic computational complexity (in the number of points) is much better than some of the existing measures (e.g., cubic in the number of simplices for RTD \citep{RTDPaper}). NSA is shown to be a viable loss function and its computation over mini-batches of the data is proven to be representative of the global NSA value.
    
    \item NSA's viability as a similarity index is evaluated in Section \ref{sec:exper_emp} and as a loss function in Section \ref{sec:loss_func}. It can be used as a supplementary loss in autoencoders for dimensionality reduction. Results on several datasets and downstream task analyses show that NSA preserves structural characteristics and semantic relationships of the original data better compared to previous works. %

    \item NSA is used to analyse perturbations that result from adversarial attacks on GNNs(Section \ref{sec:adversary_analysis}). NSA shows a high correlation with misclassification rate across varying perturbation rates and can also provide insights on node vulnerability and the inner workings of popular defense methods. Simple tests with NSA can provide insights on par with other works that review and investigate the robustness of Neural Networks \citep{are_defenses_for_gnns_robust, adversary_review}.%

\end{itemize}

\section{Related Work}\label{sec:prev_work}

Early proposed measures of Structural Similarity Indices were based on variants of ``Canonical Correlation Analysis (CCA)''~\citep{PWCCAPaper, SVCCAPaper}. ``Centered Kernel Alignment (CKA)''~\citep{DBLP:conf/icml/Kornblith0LH19} utilizes a Representational Similarity Matrix (RSM) with mean-centered representations and a linear kernel to measure structural similarity. But it was shown to lack sensitivity to certain representational changes \citep{ding2021grounding} and not satisfy the triangle inequality \citep{williams2021generalized}. \citet{RTDPaper} proposed ``Representation Topology Divergence (RTD),'' to measure the dissimilarity in multi-scale topology between two point clouds of equal size with a one-to-one point correspondence. RTD has better correlation with model prediction disagreements than CKA but its high computational complexity and limited practicality for larger sample sizes constrain its wider application. Recently, \citet{chen2023inner} proposed a filter subspace based similarity measure which can compare neural network similarity by utilising the weights from the convolutional filter of CNNs. Although not strictly a feature representation based similarity measure, it is capable of evaluating neural network similarity more efficiently than CKA or CCA.

Besides RTD and CKA, multiple other similarity indices have been proposed. Alignment based measures \citep{ding2021grounding, williams2022generalized, li2016convergent, hamilton2018diachronic,wang2018understanding} aim to compare pairs of representation spaces directly. Representational Similarity Matrix based measures \citep{SHAHBAZI2021118271, 10.3389/neuro.06.004.2008, DBLP:conf/icml/Kornblith0LH19, Sz_kely_2007, tang2020similarity, NEURIPS2019_faf02b23, boixadsera2022gulp, GBSPaper} generate an RSM that describes the similarity between representations of each instance in order to avoid measuring representations directly. This helps overcome the dimensionality limitation, allowing RSM based measures to compute similarity regardless of what dimension the two spaces lie in. Neighborhood based measures \citep{schumacher2020effects, hamilton2016cultural} compare the nearest neighbors of instances in the representation space by looking at the consistency in the k-nearest neighbors of each instance in the representation space. Topology based measures \citep{khrulkov2018geometry, tsitsulin2020shape,RTDPaper} are motivated by the manifold hypothesis. These measures aim to approximate the manifold of the representation space in terms of discrete topological structures such as graphs of simplicial complexes \citep{GoodBengCour16}. Previous works are either incapable of comparing spaces across dimensions, do not satisfy one or more of the invariance conditions required to be an effective similarity index, have high computational complexity, cannot satisfy the conditions necessary to be a pseudometric or are not differentiable.

\section{Normalized Space Alignment}\label{sec:definiton}
NSA comprises of two distinct but complementary components: Local NSA (LNSA) and Global NSA (GNSA). LNSA focuses on preserving the local neighborhoods of points, leveraging Local Intrinsic Dimensionality to ensure nuanced neighborhood consistency. In contrast, GNSA extends this analysis to a broader scale, aiming to maintain relative discrepancies between points across the entire representation space. Together, these components provide a multifaceted understanding of representation spaces. As a similarity index \citep{DBLP:conf/icml/Kornblith0LH19}, NSA does not distinguish between two point clouds that belong to the same $E(n)$ symmetry group. %
 
\subsection{Local Normalized Space Alignment (LNSA)}
LNSA focuses on the alignment of local neighborhoods of individual points within a dataset. This approach is grounded in the extensive research on Local Intrinsic Dimensionality (LID) \citep{LIDdivergences, houle, Tsitsulin2020The, camastra2016intrinsic}. Prior to defining LNSA, we provide a foundational overview of LID, setting the stage for a comprehensive understanding of how LNSA leverages these principles to achieve precise alignment in local representation spaces.

\subsubsection{Local Intrinsic Dimensionality}\label{sec:lid}

Given a point cloud $X = \set{\vec{x}_1,\ldots,\vec{x}_N}\subseteq\R^n$, we say $X$ has intrinsic dimensionality $m$ if all points of $X$ lie in an $m$-dimensional manifold~\citep{fukunaga}. LID is intuitively defined to find the lowest dimensional manifold containing a local neighbor of a point of importance. \citet{houle} formally defined local intrinsic dimensionality as follows:

\begin{definition}
    Given a point cloud $X\subseteq\R^n$ and a point of interest $\vec{x}\in  X$. Let $R_{\vec{x}}$ be the random variable denoting distances of all points from $\vec{x}$ and $F_{\vec{x}}(\cdot)$ be the cumulative distribution function of $R_{\vec{x}}$. Let $F_{\vec{x}}$ be continuous and differentiable, then the LID of $\vec{x}$ is 
    \begin{equation}
        \ID_{X}(\vec{x}) = \lim_{\substack{\varepsilon\to 0\\ r\to 0}}\frac{F_{\vec{x}}((1+\varepsilon)r)-F_{\vec{x}}(r)}{\varepsilon F_{\vec{x}}(r)}.
    \end{equation}
\end{definition}

In practice, we estimate the LID of a point using the distances to its $k$ nearest neighbours. \citet{amsaleg} defined the Maximum Likelihood Estimator for LID and defined $\LID$ of $\vec{x}_i\in X$ as:
\begin{equation}
    \LID(i,X) = -\left(\frac{1}{k}\sum_{\substack{i_j:\\ (i_1,\ldots,i_k)\gets\KNN{k}{X}(i)}} \log\frac{d(\vec{x}_i,\vec{x}_{i_j})}{d(\vec{x}_i,\vec{x}_{i_k})}\right)^{-1}, 
\end{equation}
where $(i_1,\ldots,i_k)\gets\KNN{k}{X}(i)$ denotes that the $k$ nearest neighbours of $\vec{x}_i$ in $X$ are $(\vec{x}_{i_1},\ldots,\vec{x}_{i_k})$. $d(\cdot,\cdot)$ is a differentiable distance measure between two vectors.

When aligning two points in different spaces to ensure they have similar local structures, the objective extends beyond merely matching their local dimensions. It also involves ensuring that both points share the same neighbors. To achieve this we have to identify the same $k$ points near some point $i$ in space $X$ and $Y$. Hence we define the notion of approximate dimensionality ($\mathsf{AD}$) of a point $\vec{x}_i$ with respect to any $k$ tuple $(\vec{x}_{i_1},\ldots,\vec{x}_{i_k})$ as: 
\begin{equation}
    \DA{i}{(i_1,\ldots,i_k)}{X} = -\left(\frac{1}{k}\sum_{j=1}^{k} \log\frac{d(\vec{x}_i,\vec{x}_{i_j})}{d(\vec{x}_i,\vec{x}_{i_k})}\right)^{-1}.
\end{equation}

The measure $\LID$ is then defined as:
$\LID(i,X) = \DA{i}{\KNN{k}{X}(i)}{X}.$

\citet{NIPS2004_74934548} defined the intrinsic dimensionality of a space as the average LID of all its points. This was later corrected by \citet{mackay} to be the reciprocal of the average reciprocal of LID of each point. Formally defined as:
\begin{equation}
    \ID(X) = \left(\frac{1}{|X|}\sum_{\vec{x}_i\in X}\LID(i,X)^{-1}\right)^{-1}.
\end{equation}
\citet{mackay} showed that performing aggregation operations on the reciprocals of the LIDs of each point yields much better approximations. 

\subsubsection{Definition}

To define Local NSA ($\LocalNSA$) using LID, we start with the premise that closely structured point clouds should have similar dimensional manifolds in their local neighborhoods. Formally, for two point clouds $X$ and $Y$, and considering the $k$ nearest neighbours of the $i^{th}$ point in $X$ indexed as $i_1,\ldots,i_k$, we examine the manifold dimensions in $Y$ that contain $i, i_1,\ldots,i_k$. This leads to the concept of "Symmetric $\LID$" defined as:
\begin{equation}
    \SLID_X(i,Y) = \DA{i}{\KNN{k}{X}(i)}{Y}.
\end{equation}
We notice that the mean discrepancy of the reciprocals of the LID is strongly correlated with the local structure being preserved, this is similar to \citet{mackay} where they show that taking a mean of reciprocals of the LID is correlated to the intrinsic dimensionality of the whole space. Hence we define $\LNSA$ of $X$ with respect to $Y$ as:
\begin{equation}
    \LNSA_Y(X) = N^{-1}\sum_{1\leq i\leq N}\left(\LID(i,Y)^{-1}-\SLID_Y(i,X)^{-1}\right)^2.
\end{equation}
Further, we can symmetricise this term to get the premetric $\MLNSA$ as: 
\begin{equation}
\MLNSA(X,Y) = \LNSA_X(Y)+\LNSA_Y(X).
\end{equation}
When using $\mathsf{NSA}$-based loss functions, we use $\LocalNSA$ whereas when using $\mathsf{NSA}$-based similarity indices, we use $\MLNSA$.

\subsubsection{Analytical Properties}

We show that $\MLNSA$ is a premetric by proving the necessary properties in~\Cref{sec:lnsa_pre_proofs}: $\bullet$ $\MLNSA(X,X)=0$, $\bullet$ Symmetry, $\bullet$ Non-negativity.

$\LocalNSA$ is shown to satisfy the necessary conditions of Invariance to Isotropic Scaling, Invariance to Orthogonal Transformation, and (Not) Invariance to Invertible Linear Transformation as proposed by \citet{DBLP:conf/icml/Kornblith0LH19} for a similarity index. These condition are proved in Appendix \ref{sec:lnsa_sim_proofs} and ensure that the similarity index is unaffected by rotation or scaling of the representation space. 

\subsection{Global Normalized Space Alignment (GNSA)}
GlobalNSA is a similarity index based on Representational Similarity Matrices (RSMs) \citep{simsurvey}. Given a representation $\mathbf{R}: N \times D$, all RSM based measures generate a matrix of instance wise similarities $D\in\mathbf{R}^{N\times N}$ where $D_{i,j} := d(R_i, R_j)$. Given two representations $\mathbf{R}$ and $\mathbf{R'}$,  two corresponding RSMs are generated. GlobalNSA is calculated by subtracting the pairwise similarities between the two spaces and averaging these discrepancies to produce a single value that reflects the global difference between the two representation spaces. We present the motivation behind GlobalNSA and how RSMs capture global discrepancy in Appendix \ref{sec:motivate}.  %

\subsubsection{Definition}\label{sec:prel_NSA}
The $\NSA$ between two point clouds $X=\{\vec{x_1},\ldots,\vec{x_N}\}$ and $Y=\{\vec{y_1},\ldots,\vec{y_N}\}$ is computed as:  
\begin{equation}
    \NSA(X,Y,i) = \frac{1}{N}\sum_{1\leq j\leq N}\left|\frac{d(\vec{x_i},\vec{x_j})}{\max\limits_{\vec{x}\in X}{d(\vec{x},\vec{0})}}-\frac{d(\vec{y_i},\vec{y_j})}{\max\limits_{\vec{y}\in Y}{d(\vec{y},\vec{0})}}\right|.\footnote{$\vec{0}$ denotes a vector (of appropriate size) which is the origin of the space.}
\end{equation}
\begin{equation}
    \NSA(X,Y) = \frac{1}{N}\sum_{1\leq i\leq N}\NSA(X,Y,i).
\end{equation}
\noindent Throughout the paper, we use $d(\cdot,\cdot)$ to denote the euclidean distance (unless specified otherwise).

\subsubsection{Analytical Properties}

We show that $\GlobalNSA$ is a pseudometric by proving the necessary properties in the~\Cref{sec:gnsa_pseudo_proofs}: $\bullet$ $\NSA(X,X)=0$, $\bullet$ Symmetry, $\bullet$ Non-negativity, $\bullet$ Triangle Inequality.\\
$\GlobalNSA$ is proven to be a similarity index by satisfying the three established criteria in Appendix \ref{sec:gnsa_sim_proofs}. %

\subsection{Putting it all together}
We utilize $\NSA$ for our empirical analysis as it performs well when comparing entire representation spaces. When using NSA as a structure-preserving loss function, we calculate the final NSA Loss through a weighted sum of local and global NSA, allowing us to adjust the emphasis between local and global perspectives. One could prioritize local neighborhood preservation, such as in classifiers with tightly clustered data, or emphasize global structure, as needed in link prediction models for densely interconnected graphs. Hence, NSA is defined as:
\begin{equation}
    \mathsf{NSA} = l*\LocalNSA + g*\GlobalNSA.
\end{equation} %
Only GNSA is used during empirical analysis while both GNSA and LNSA are used during dimensionality reduction. We present ablation studies on how the two components affect performance, quantifying the difference and visualizing it in Appendix \ref{sec:ablation}.

\subsubsection{Complexity Analysis}

$\LNSA$'s computational complexity is $O(N^2D + kND)$ \footnote{The time complexity of computing the $k$-nearest neighbours of each point is $O(N^2D + N^2\log{k})$ since it takes $O(N^2D)$ time to find all pairwise distances and finding $k$ minimum elements of $N$ lists of size $N$ can be done in $O(N^2\log{k})$ using min-heaps of size $k$. Since $D>>\log{k}$ in most cases, we write the total time complexity as $O(N^2D)$.} and $\GNSA$'s complexity is $O(N^2D)$ where $N$ is the number of data points, $k$ is the number of nearest neighbours considered in $\LNSA$ and $D$ is $\max(D(R),D(R'))$ with $R$ and $R'$ being the two representation spaces and $D(\cdot)$ the dimensionality of the space. Hence, overall NSA has complexity of $O(N^2D + kND)$. Empirically, we find that computing NSA on a GPU scales quadratically up to a very high batch size (Appendix \ref{sec:nsa_runtime_tests}). CKA has similar complexity to NSA while RTD has a complexity of $O(N^2D + R^3)$, where $R$ is the number of simplices (in practice $R>>N$). Running times of NSA-AE are given in Appendix \ref{sec:time_mse_ae}. NSA-AE is several times faster than RTD-AE and it can run on much larger batch sizes.

\section{Applications}
NSA serves a dual purpose, acting not only as a similarity index to compare structural discrepancies in representation spaces, but also as a structure preserving loss function. This enables a wide range of applications where NSA could potentially be useful. We first validate NSA's viability as a similarity index using the tests proposed by \citep{ding2021grounding}. We then present experiments in two broad domains; dimensionality reduction and adversarial attack analysis. Additional experiments on cross-layer analysis (Appendix \ref{sec:app_gnn_analysis} and Appendix \ref{sec:task_specific}), convergence prediction (Appendix \ref{sec:app_conv_amazon}) and manifold approximation (Appendix \ref{sec:manifold_approx}) are presented in the appendix.

\subsection{Analyzing Representation Spaces}\label{sec:exper_emp}
We evaluate the performance of our structural similarity index using statistical tests proposed by \citet{DBLP:conf/icml/Kornblith0LH19} and further refined by \citet{ding2021grounding}. These tests are based on two fundamental intuitions: (1) a good structural similarity index should show high similarity between architecturally identical neural networks with different weight initializations, and (2) it should be sensitive to the removal of high variance principal components from the data. These initial tests are essential for validating the effectiveness of a similarity index. 

\subsubsection{Specificity Test}
A structural similarity index is expected to assign small distances between corresponding layers of neural networks that differ only in their initialization seed. To test this, we trained two ResNet-18 models on the CIFAR-10 dataset, each with a different weight initialization. After training, we extracted intermediate embeddings from each layer during an evaluation pass over the entire dataset. Using a subset of data points\footnote{We set the subset size to 4000 for NSA and use the recommended subset size of 400 over 10 trials for RTD}, we computed the similarity between these embeddings using NSA, CKA, and RTD.

\begin{figure}[htbp]
    \centering
    \begin{subfigure}{0.50\textwidth}
        \centering
        \includegraphics[width=\textwidth]{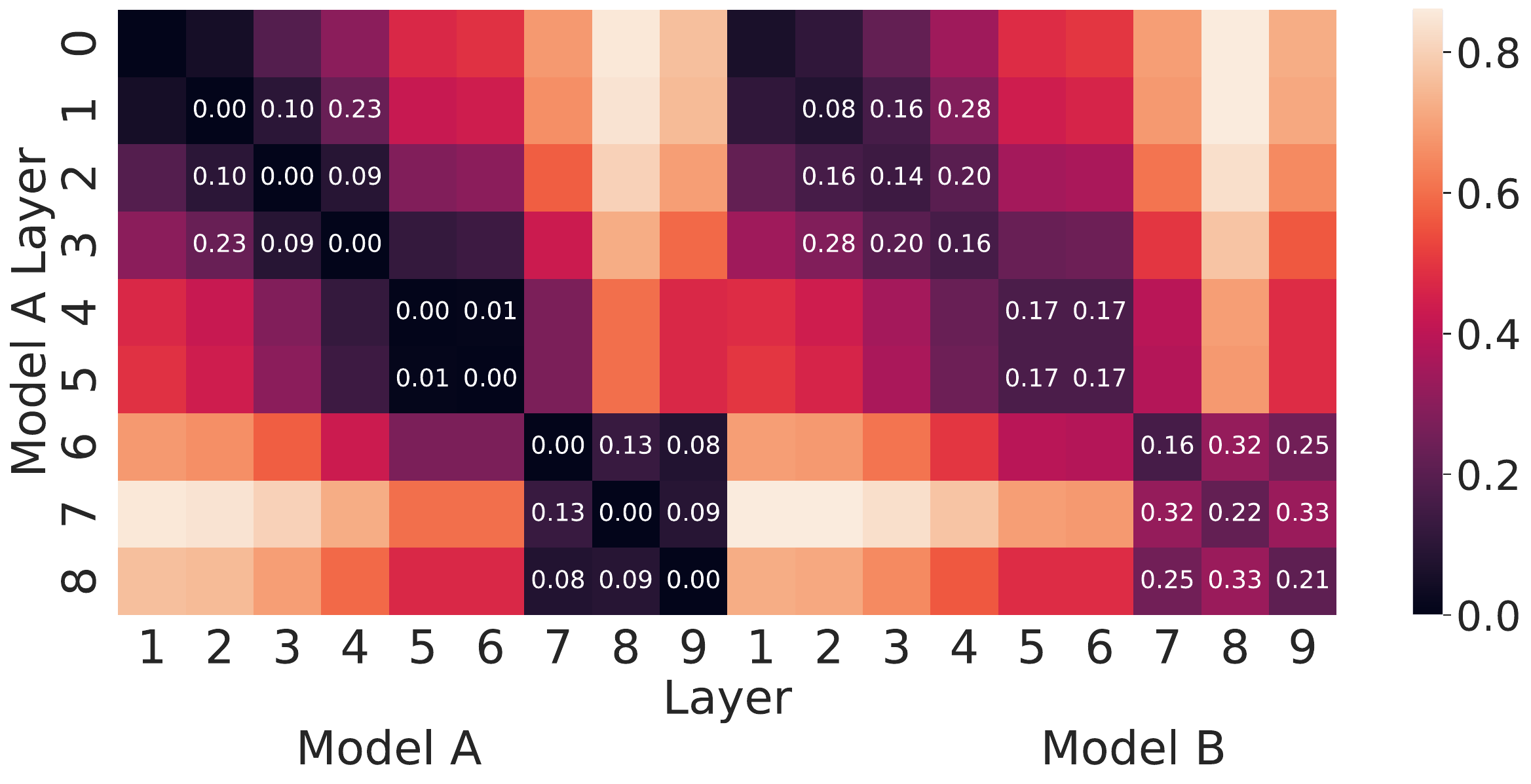}
        \label{fig:subfig1}
    \end{subfigure}
    \begin{subfigure}{0.40\textwidth}
        \centering
        \includegraphics[width=\textwidth]{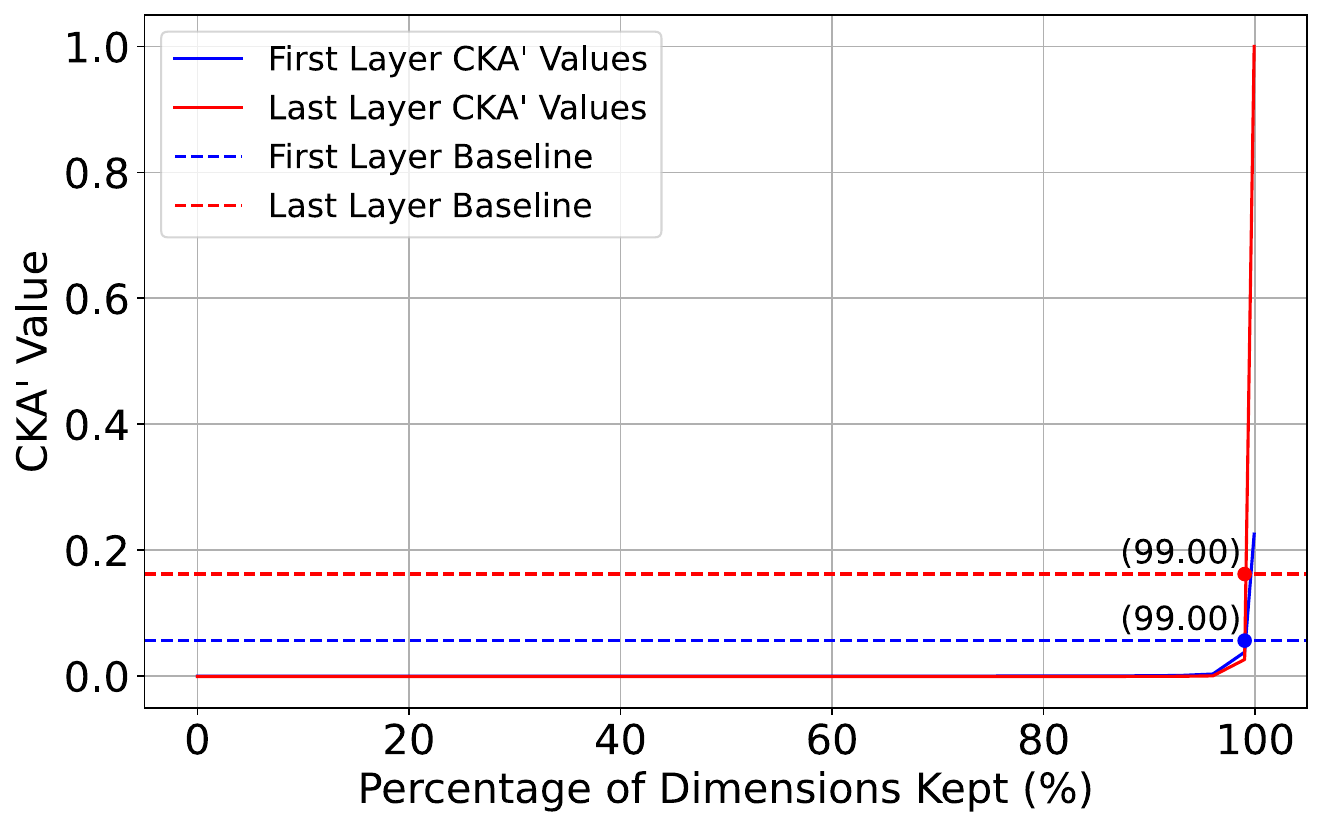}
        \label{fig:subfig2}
    \end{subfigure}
    \begin{subfigure}{0.50\textwidth}
        \centering
        \includegraphics[width=\textwidth]{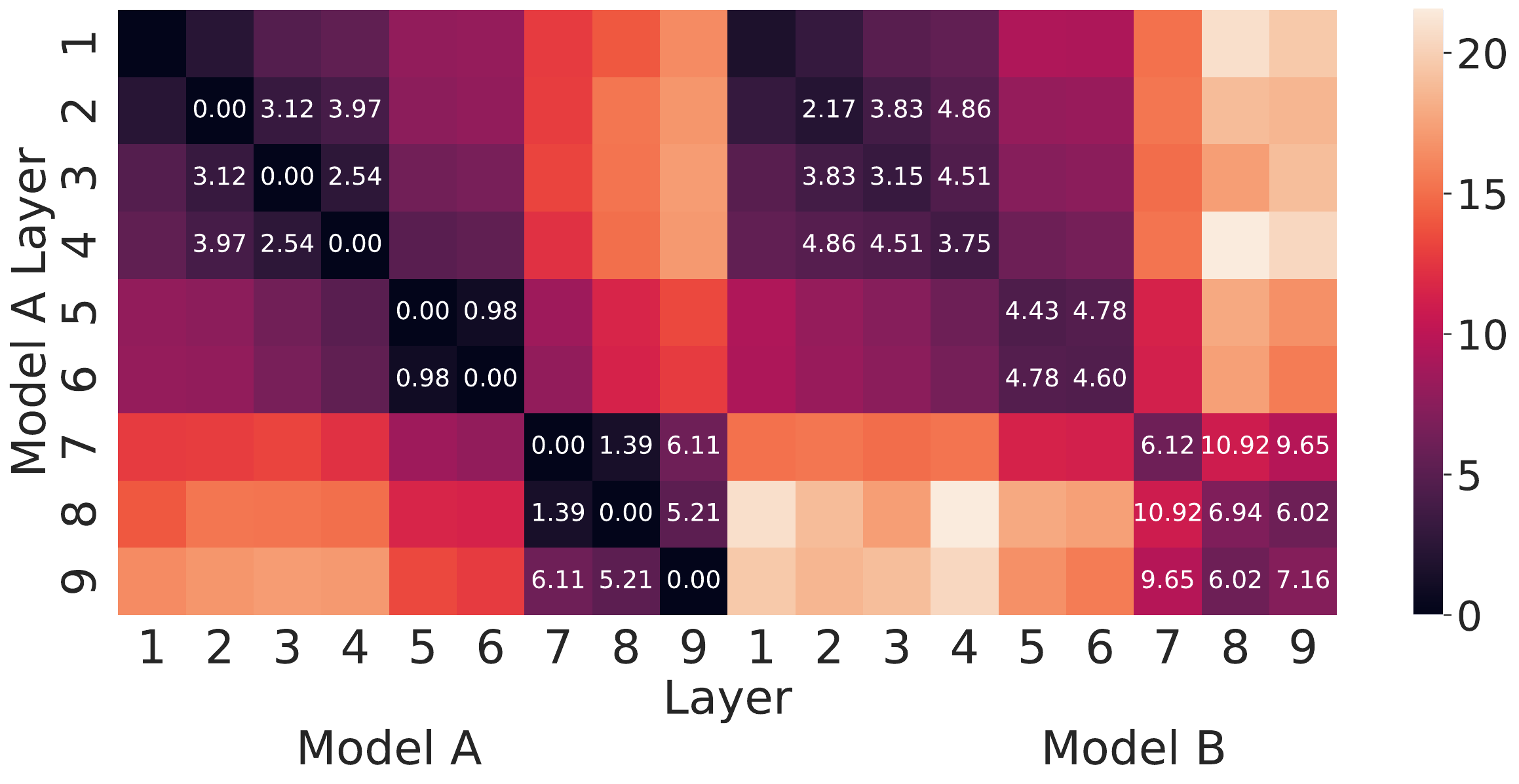}
        \label{fig:subfig3}
    \end{subfigure}
    \begin{subfigure}{0.40\textwidth}
        \centering
        \includegraphics[width=\textwidth]{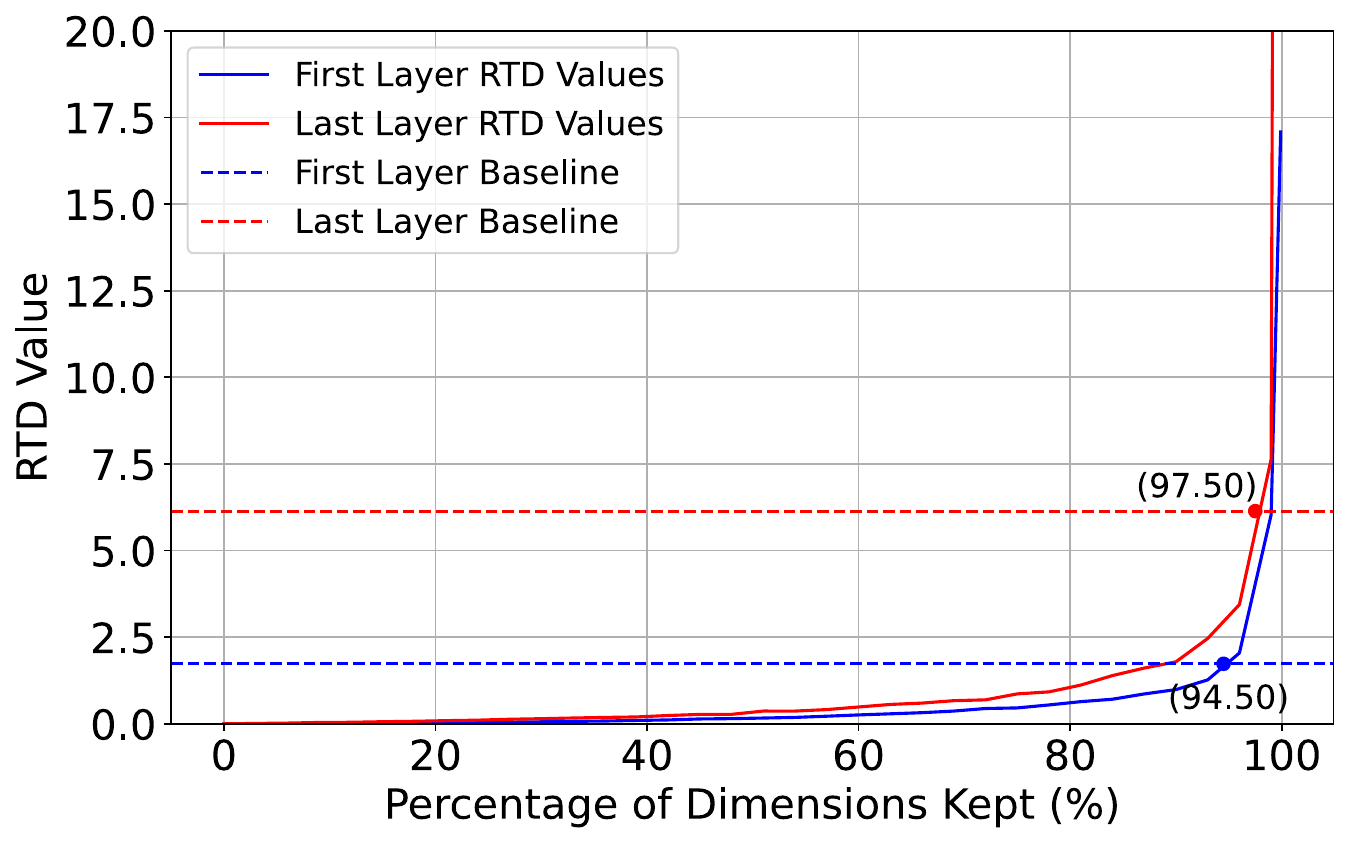}
        \label{fig:subfig4}
    \end{subfigure}
    \hfill
    \begin{subfigure}{0.50\textwidth}
        \centering
        \includegraphics[width=\textwidth]{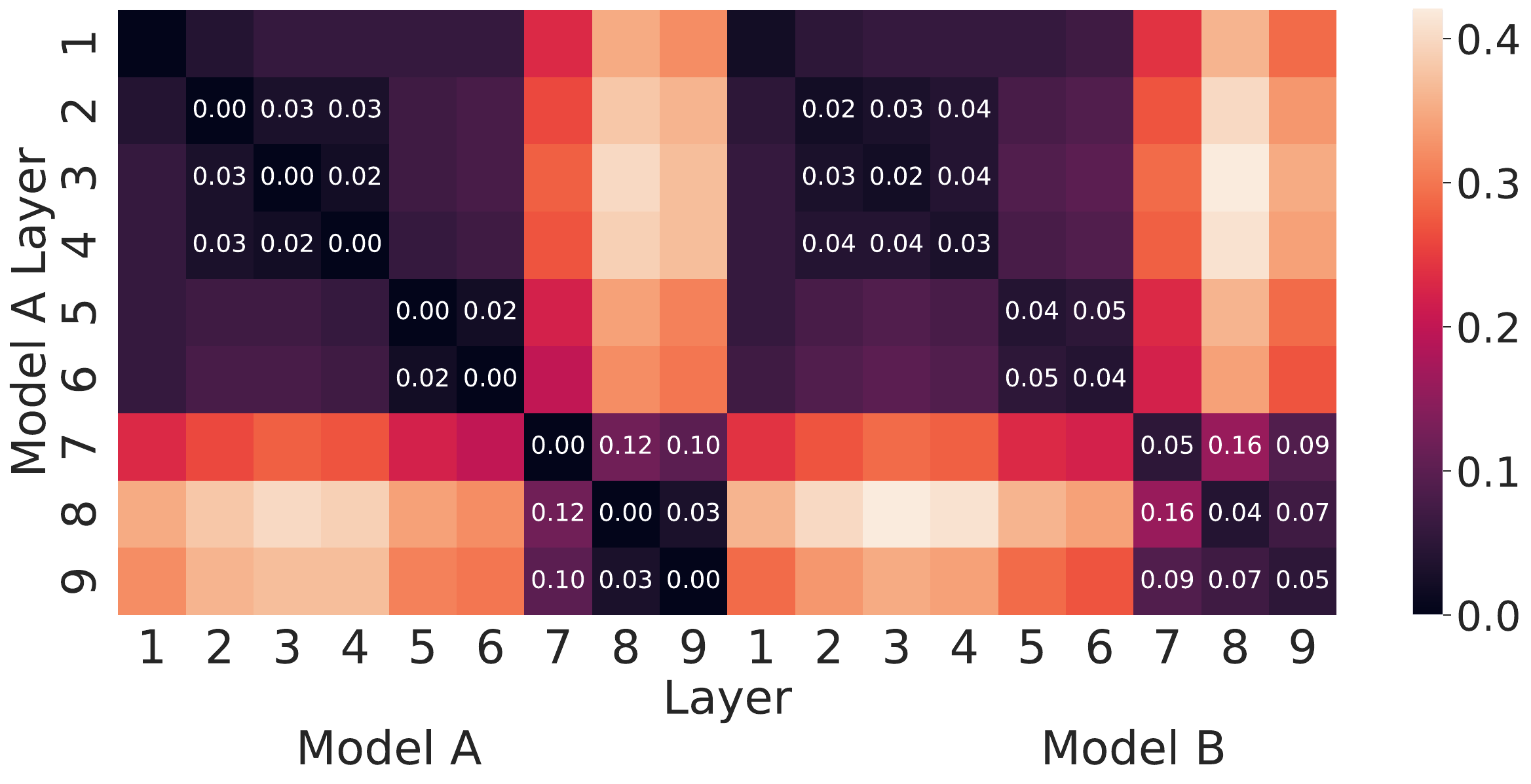}
        \label{fig:subfig5}
    \end{subfigure}
    \begin{subfigure}{0.40\textwidth}
        \centering
        \includegraphics[width=\textwidth]{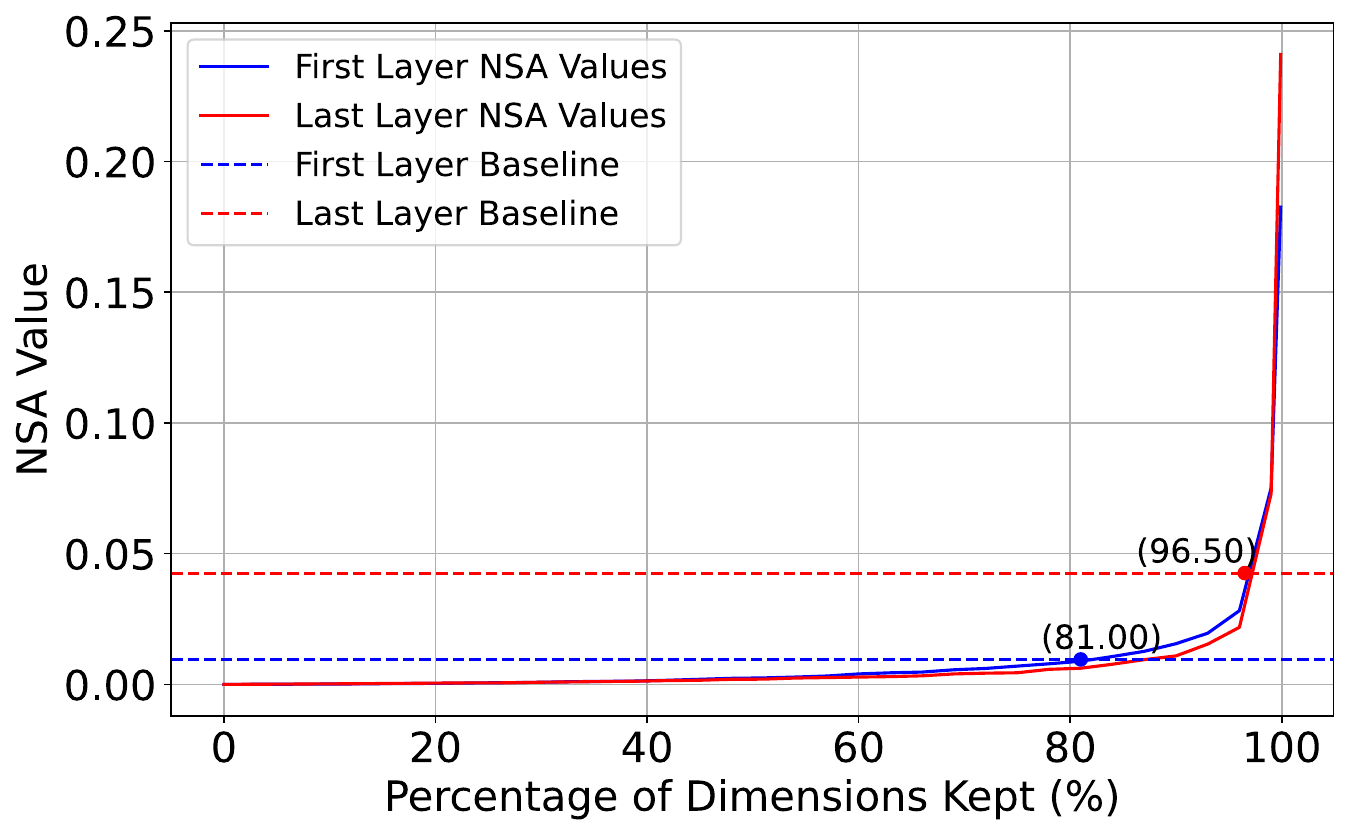}
        \label{fig:subfig6}
    \end{subfigure}
    
    \caption{Specificity (Left) and Sensitivity (Right) Tests. Left (from top to bottom): CKA', RTD and NSA pairwise distances between each layer of two differently initialized networks. Right: Dissimilarities between a layer's representation and its low-rank approximation. Principal components are deleted in order of least variance.}
    \label{fig:new_sanity_test}
\end{figure}

Our results indicate that NSA exhibits the smallest drop in similarity between corresponding layers of networks with different initializations. Both RTD and CKA show a significant decrease in similarity, particularly in the later layers. NSA, on the other hand, maintains a strong layer-wise pattern, closely resembling the ideal expected pattern observed when comparing a model against itself. Ideally, the minimum similarity value should be observed where the layers match. RTD fails to achieve this pattern in the last two layers, while CKA fails in layer five.

\subsubsection{Sensitivity Test}
An effective structural similarity index should be robust to the removal of low variance principal components and sensitive to the removal of high variance principal components. Previous studies have shown that CKA is insensitive to the removal of principal components until the most significant ones are removed. We conducted a sensitivity test with our ResNet-18 models trained on CIFAR-10, following a similar experimental setup to that used by \citet{ding2021grounding}.

For each index, we set the dissimilarity score to be above the detectable threshold if it exceeds the dissimilarity score between representations from different random initializations. The results on the first and last layer, shown in Figure \ref{fig:new_sanity_test}, demonstrate that NSA is more sensitive to the removal of high variance principal components compared to RTD and CKA, with an average detection threshold of 90.25\%, compared to 96\% for RTD and 99\% for CKA.

\subsection{NSA as a Loss Function}\label{sec:loss_func}
The viability of NSA as a loss function is established by demonstrating its non-negativity, nullity and continuity, and by developing a differentiation scheme. Proofs for all four properties are presented in Appendix \ref{sec:gnsa_loss_proofs} and \ref{sec:lnsa_loss_proofs}. We also demonstrate that the expectation of NSA over a minibatch is equal to the NSA of the whole dataset, cementing its feasibility as a loss function in Appendix \ref{sec:subset_conv_proofs}.

\subsubsection{Convergence of Global NSA in mini batching}\label{sec:subset_conv_main}
Neural networks are often trained on mini batches, which means that at any step a global structure preserving loss will only have access to a subset of the data. It is vital that a structure preserving loss function like $\NSA$ converge to its global measure when used as a loss function. Similarity indices usually have a high computation cost, therefore most similarity indices work with a sample of the data and show that the value computed on the sample is a good approximation of the global value. Additionally, for a loss function, the expectation is that the mean of the losses from the mini-batches converges to the true loss over the entire dataset. This principle underlies stochastic gradient descent (SGD) and its variants, which are foundational to training neural networks. SGD assumes that the expected value of the gradients calculated on the mini-batches converges to the true gradient over the entire dataset, and a similar concept applies to the expectation of the loss. In \cref{lem:nsa_subset_conv} we prove theoretically that in expectation, $\NSA$ over a minibatch is equal to $\NSA$ of the whole dataset. 

Figure \ref{fig:subset_conv} shows that $\GNSA$ over a number of trials approximates the $\GNSA$ of the entire dataset while RTD fails to do so. Here, we use the output embeddings of two GCN models with the exact same architecture but different initializations trained on node classification on the Cora dataset. The output embedding is the output of the last layer before softmax is applied to get the classification vectors. We compare the convergences with a batch size of 200 and 500.

While LNSA does not exhibit formal convergence under mini-batching, it intuitively demonstrates favorable behavior in such settings. LNSA’s design, which relies on nearest-neighbor computations to assess structural preservation within local neighborhoods, ensures that these neighborhoods are typically preserved in expectation. This is because the sampling process in mini-batching generally retains a sufficient number of neighboring points in each batch. By appropriately adjusting the number of nearest neighbors considered in each mini-batch, LNSA is able to approximate the structural consistency of local representations, ensuring that the loss function remains representative of the global structure on a local scale.

\begin{figure}[ht]
    \centering
    \begin{subfigure}{0.30\linewidth}
    \centering
        \includegraphics[width=\linewidth]{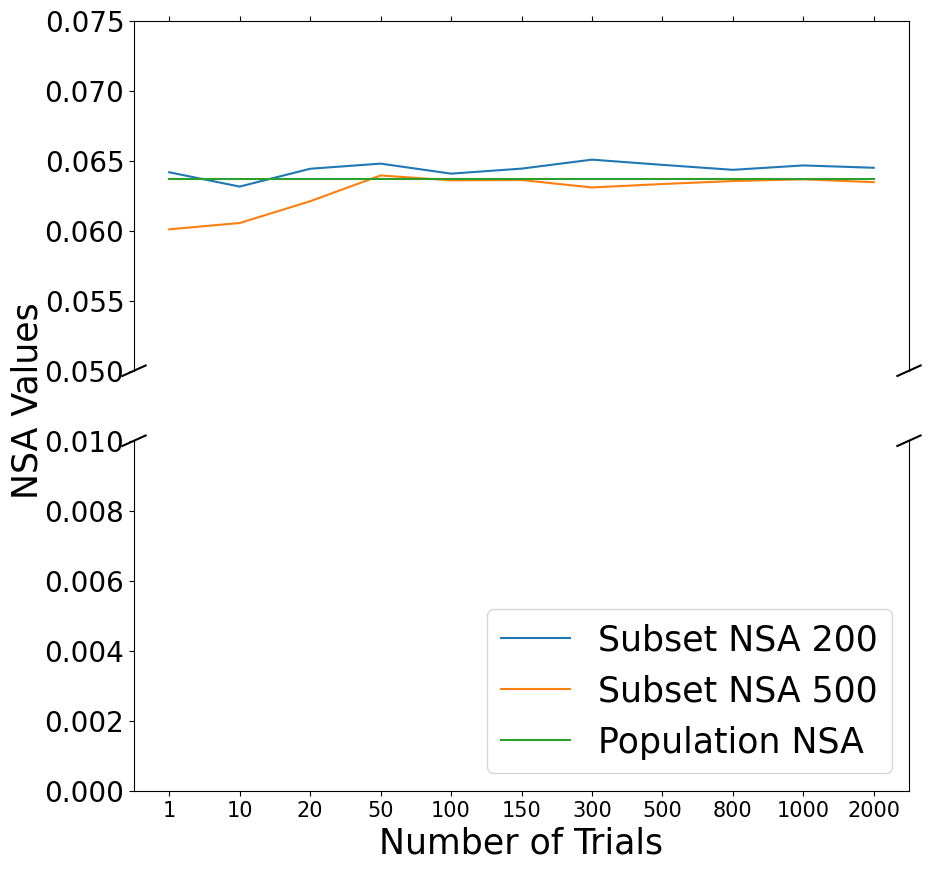}
    \caption{}
    \end{subfigure}
    \begin{subfigure}{0.28\linewidth}
    \centering
        \includegraphics[width=\linewidth]{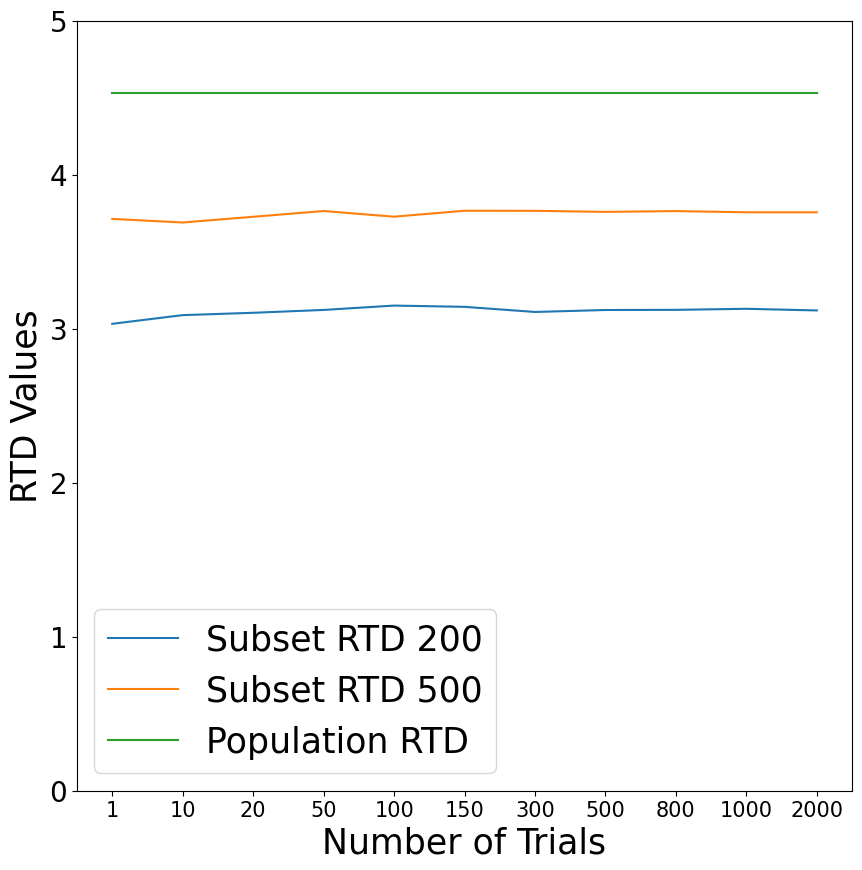}
        \caption{}
    \end{subfigure}
    \caption{Expectation of subset metrics over a large number of trials. (a) Mean subset GlobalNSA variation over increasing trials. (b) Mean subset RTD variation over increasing trials.}
    \label{fig:subset_conv}
\end{figure}

\subsubsection{Defining NSA-AE: Autoencoder using NSA}

To evaluate the efficacy of NSA as a structural discrepancy minimization metric, we take inspiration from TopoAE \citet{TopoAE} and RTD-AE \citet{RTDAEPaper}, and use NSA as loss function in autoencoders for dimensionality reduction. A normal autoencoder minimizes the MSE loss between the original $X$ and reconstructed  embedding $\hat{X}$. We add NSA-Loss as an additional loss term that aims to minimize the discrepancy in representation structure between the original embedding space $X$ and the latent embedding space $Z$. The autoencoder is built as a compression autoencoder where the encoder attempts to reduce the original data to a latent dimension and the decoder attempts to reconstruct the original embedding. Since NSA can be used in autoencoders with mini-batch training, NSA-AE runs almost as fast as a regular autoencoder. We compare the performance of NSA-AE against PCA, UMAP, a regular autoencoder, TopoAE and RTD-AE on four real world datasets. 

The performance of NSA-AE is evaluated on the structural and topological similarity between the input data $X$ and the latent data $Z$. We use the following evaluation metrics: (1) linear correlation of pairwise distances, (2) triplet distance ranking accuracy \citep{PacmapPaper}, (3) RTD \citep{RTDPaper}, (4) GNSA, (5) LNSA and (6) kNN-Consistency. As seen in Table \ref{tab:autoencoder-table}, NSA-AE has better structural similarity between the original data and the latent data compared to the other models. NSA-AE achieves MSE and running times similar to a normal autoencoder while previous works are several times slower, as presented in Appendix \ref{sec:time_mse_ae}. We also present the hyperparameters used along with dataset statistics in Appendices \ref{sec:hyper_ae} and \ref{sec:datasets}.

\begin{table*}[ht]
\tiny
\centering
\renewcommand{\arraystretch}{1.1}
\begin{center}
\begin{tabular}{llllllll}
\hline & & \multicolumn{5}{c}{ Quality measure } \\
\cline { 3 - 8 } Dataset & Method & L. C. $\uparrow$ & T. A. $\uparrow$ & RTD $\downarrow$ & GNSA $\downarrow$ & LNSA @ 100 $\downarrow$ & kNN-C @ 100 $\uparrow$ \\
\hline 
MNIST   & PCA        & 0.910 & $0.871 \pm 0.008$ & $6.69 \pm 0.21$  & $0.0817 \pm 0.0025$ & $0.0267 \pm 0.00$ & 0.584\\
        & UMAP       & 0.424 & $0.620 \pm 0.013$ & $18.06 \pm 0.48$ & $0.2305 \pm 0.0031$ & $1.0353 \pm 0.02$ & 0.353\\
        & AE         & 0.801 & $0.778 \pm 0.007$ & $7.47 \pm 0.20$ &  $0.0571 \pm 0.0011$ & $0.0529 \pm 0.00$ & 0.619\\
        & TopoAE     & 0.765 & $0.771 \pm 0.010$ & $6.16 \pm 0.23$ & $0.0477 \pm 0.0011$ & $0.0335 \pm 0.00$ & 0.631\\
        & RTD-AE     & 0.837 & $0.811 \pm 0.004$ & $ 4.26 \pm 0.14$ & $0.1694 \pm 0.0024$ &$0.0133 \pm 0.00$ & $\mathbf{0.709}$\\
        & NSA-AE & $\mathbf{0.947}$ & $\mathbf{0.886 \pm 0.005}$ & $\mathbf{4.29 \pm 0.11}$ & $\mathbf{0.0222 \pm 0.00}$ & $\mathbf{0.0123 \pm 0.00}$ &0.634 \\

\hline 
F-MNIST & PCA        & 0.978 & $\mathbf{0.951 \pm 0.006}$ & $5.91 \pm 0.12$ & $0.1722 \pm 0.0038$ & $0.0566 \pm 0.00$ & 0.513\\
        & UMAP       & 0.592 & $0.734 \pm 0.012$ & $12.16 \pm 0.39$ & $0.1420 \pm 0.0011$& $0.6738 \pm 0.01$ & 0.365\\
        & AE         & 0.872 & $0.850 \pm 0.008$ & $5.60 \pm 0.21$ & $0.0527 \pm 0.0028$ & $0.0515 \pm 0.00$ &0.614 \\
        & TopoAE     & 0.875 & $0.854 \pm 0.009$ & $4.27 \pm 0.15$ & $0.111 \pm 0.0027$ & $0.0291 \pm 0.00$ & 0.601\\
        & RTD-AE     & 0.949 & $0.902 \pm 0.004$ & $3.05 \pm 0.12$ & $0.0349 \pm 0.0015$ &$\mathbf{0.0104 \pm 0.00}$ & $\mathbf{0.661}$\\
        & NSA-AE     & $\mathbf{0.985}$ & $0.939 \pm 0.003$ & $\mathbf{2.78 \pm 0.09}$ & $\mathbf{0.0114 \pm 0.00}$ & $0.0133 \pm 0.00$& 0.633 \\

\hline 
CIFAR-10& PCA        & 0.972 & $0.926 \pm 0.009$ & $4.99 \pm 0.16$ & $0.1809 \pm 0.0046$ & $0.0175 \pm 0.00$ & 0.432\\
        & UMAP       & 0.756 & $0.786 \pm 0.010$ & $12.21 \pm 0.22$ & $0.1316 \pm 0.0026$ & $0.2880 \pm 0.00$ & 0.119\\
        & AE         & 0.834 & $0.836 \pm 0.006$ & $4.07 \pm 0.28$ & $0.0616 \pm 0.0019$ & $0.0224 \pm 0.00$ & 0.349\\
        & TopoAE     & 0.889 & $0.854 \pm 0.007$ & $3.89 \pm 0.11$ & $0.0625 \pm 0.0014$ & $0.0277 \pm 0.00$ & 0.360\\
        & RTD-AE     & 0.971 & $0.922 \pm 0.002$ & $\mathbf{2.95 \pm 0.08}$ & $0.0113 \pm 0.0003$ & $0.0124 \pm 0.00$ & 0.416\\
        & NSA-AE     & $\mathbf{0.985}$ & $\mathbf{0.941 \pm 0.003}$ & $3.04 \pm 0.16$ & $\mathbf{0.0086 \pm 0.00}$ & $\mathbf{0.0103 \pm 0.00}$ & $\mathbf{0.437}$ \\

\hline 
COIL-20 & PCA    & 0.966 & $\mathbf{0.932 \pm 0.005}$ & $6.49 \pm 0.23$ & $0.2204 \pm 0.0$ & $0.2000 \pm 0.00$ & 0.801\\
    & UMAP       & 0.274 & $0.567 \pm 0.016$ & $15.50 \pm 0.67$ & $0.1104 \pm 0.0$ & $3.3023 \pm 0.00$  & 0.569 \\
    & AE         & 0.850 & $0.836 \pm 0.008$ & $9.57 \pm 0.27$ & $0.0758 \pm 0.0$ & $0.5678 \pm 0.00$ &0.683\\
    & TopoAE     & 0.804 & $0.805 \pm 0.011$ & $7.33 \pm 0.21$ & $0.0676 \pm 0.0$ & $0.1044 \pm 0.00$ &0.654\\
    & RTD-AE     & 0.908 & $0.871 \pm 0.005$ & $5.89 \pm 0.10$ & $0.0523 \pm 0.0$ & $0.0496 \pm 0.00$ &0.761\\
    & NSA-AE     & $\mathbf{0.976}$ & $0.919 \pm 0.005$ & $\mathbf{3.55 \pm 0.18}$ & $\mathbf{0.0139 \pm 0.00}$ & $\mathbf{0.0085 \pm 0.00}$ & $\mathbf{0.809}$ \\

\hline
\end{tabular}

\end{center}
\caption{Autoencoder results. L.C = Linear Correlation, T.A. = Triplet distance ranking accuracy, RTD = Representation Topology Divergence, GNSA = GlobalNSA, LNSA@100 = LocalNSA on 100 nearest neighbors, kNN C @ 100 = 100 nearest neighbors consistency. NSA-AE outperforms or almost matches all other approaches on all the evaluation metrics. RTD-AE, which explicitly minimizes on RTD occasionally has lower LID and kNN-C.}
\label{tab:autoencoder-table}
\end{table*}

\subsection{Downstream Task Analysis on Link Prediction}\label{sec:lp_ae}

We demonstrate that NSA-AE effectively preserves the structural integrity between the original data and the latent space. However, it is imperative to note that such preservation does not inherently ensure the utility of the resultant latent embeddings. Metrics like NSA, which focus on exact structure preservation, excel in scenarios where the preservation of semantic relationships between data points holds paramount importance.

Link prediction is an ideal task to demonstrate NSA's structure preserving ability. Successful link prediction mandates a global consistency within the embedding space, necessitating that nodes with likely connections are proximate while dissimilar nodes are distant. Any form of clustering or localized alterations to the representation structure can disrupt node interrelationships. In our study, we employ a Graph Convolutional Network (GCN) to train link prediction models across four distinct graph datasets. The original GCN embedding space encompasses 256 dimensions. This space is subsequently processed through NSA-AE, resulting in a reduced-dimensional representation. The latent embeddings are then tested on their ability to predict the existence of links between nodes. Notably, the latent embeddings produced by NSA-AE exhibit superior performance when compared to both a conventional autoencoder and RTD-AE as shown in Table \ref{tab:downtest}. Additionally, we demonstrate that NSA can work at scale by working with the Word2Vec dataset. We present results showcasing the effectiveness of semantic textual similarity matching using Word2Vec embeddings in the Appendix \ref{sec:sts_ae}.%

\begin{table}[h]
\scriptsize
\centering
\renewcommand{\arraystretch}{1.1}
\begin{tabular}{llllll}
\hline & & \multicolumn{4}{c}{ Dataset } \\
\cline { 3 - 6 } Method & Latent Dim & Amazon Comp & Cora & Citeseer & Pubmed \\

\hline 
        & 128 & 50.478 & 50.0 & 50.0 & 51.17 \\
AE      & 64 & 50.56 & 50.0 & 50.0 & 50.03 \\
        & 32 & 59.63 & 49.84 & 49.80 & 52.78 \\
\hline 
        & 128 & 94.63 & 86.52 & 90.54 & 91.80 \\
RTD-AE  & 64 & 91.63 & 82.38 & 89.23 & \textbf{97.26} \\
        & 32 & 88.34 & 83.13 & 57.18 & 87.05 \\
\hline
            & 128 & \textbf{95.75} & \textbf{97.83} & \textbf{97.91} & \textbf{97.37} \\
NSA-AE      & 64  & \textbf{94.60} & \textbf{97.89} & \textbf{98.38} & 96.97 \\
            & 32  & \textbf{95.76} & \textbf{97.68} & \textbf{96.11} & \textbf{97.66} \\

\hline

\end{tabular}

\caption{Downstream task analysis with Link Prediction. The output embeddings from a GCN trained on link prediction are utilized. Latent embeddings are obtained at various dimensions and ROC-AUC scores are calculated on the latent embeddings obtained after reducing the dimensionality of the original output embeddings. NSA-AE outperforms both a regular autoencoder and RTD-AE across almost all datasets and all latent dimensions.}
\label{tab:downtest}
\centering
\end{table}

\subsection{Analyzing Adversarial Attacks}\label{sec:adversary_analysis}
\begin{figure*}[h]
    \centering
    \begin{subfigure}{0.24\textwidth}
        \includegraphics[width=\linewidth]{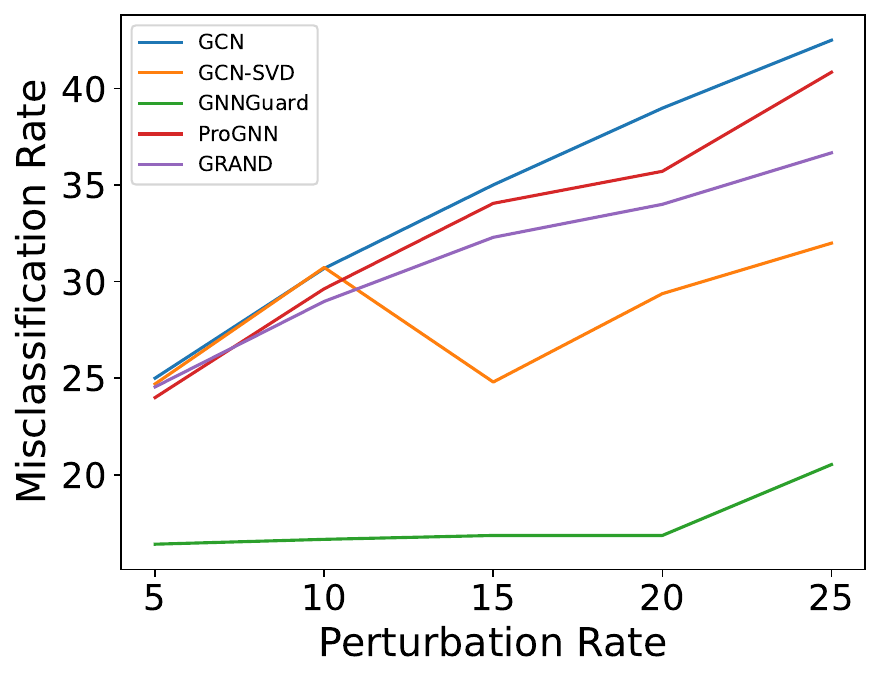}
    \caption{}
    \end{subfigure}
    \begin{subfigure}{0.25\textwidth}
        \includegraphics[width=\linewidth]{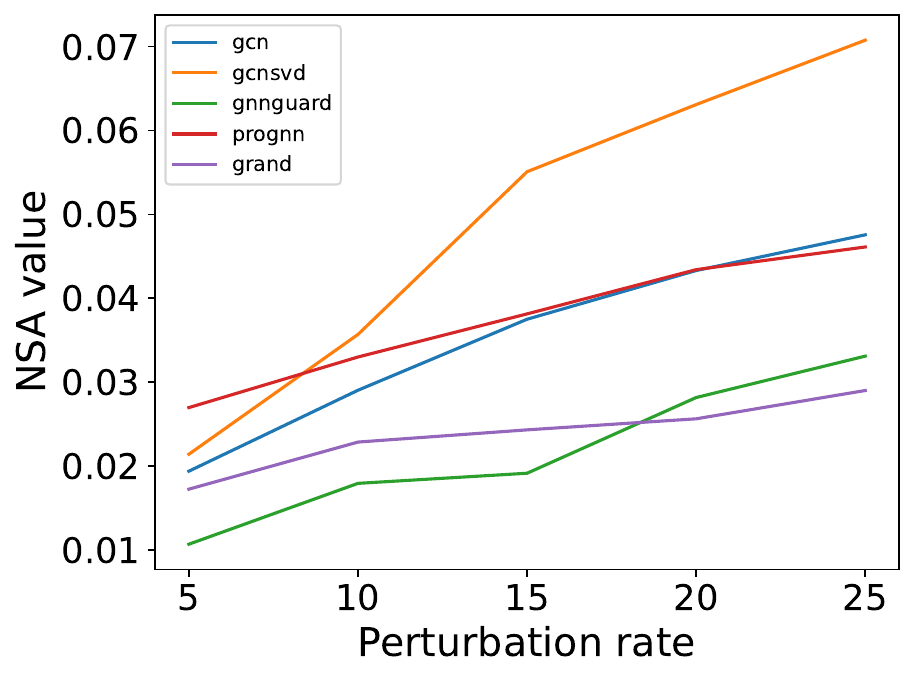}
        \caption{}
    \end{subfigure}
    \begin{subfigure}{0.24\textwidth}
        \includegraphics[width=\linewidth]{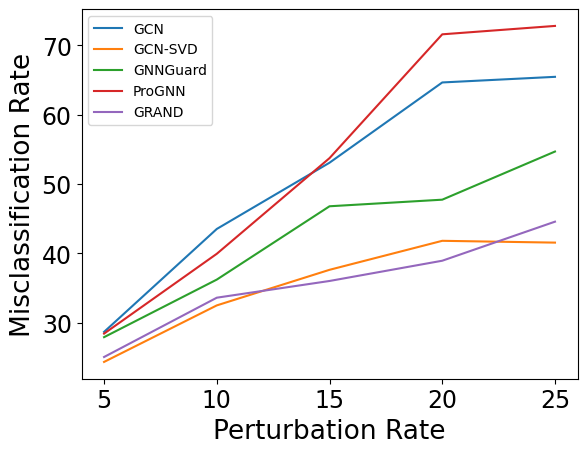}
        \caption{}
    \end{subfigure}
    \begin{subfigure}{0.25\textwidth}
        \includegraphics[width=\linewidth]{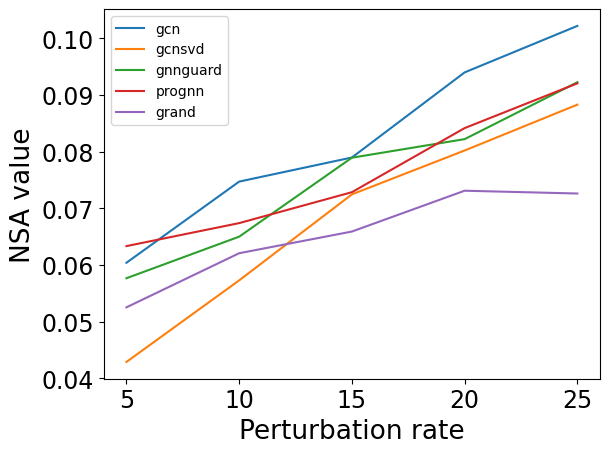}
        \caption{}
    \end{subfigure}
    \caption{Robustness tests on GNN architectures with NSA. (a) Misclassification Rate against Data Perturbation Rate under global evasion attack. (b) NSA against perturbation rate under global evasion attack. (c) Misclassification Rate against Data Perturbation Rate under global poisoning attack. (d) NSA against perturbation rate under global poisoning attack}
    \label{fig:robustness}
\end{figure*}

\begin{figure*}[h]
    \centering
    \begin{subfigure}{0.38\textwidth}
        \includegraphics[width=\linewidth]{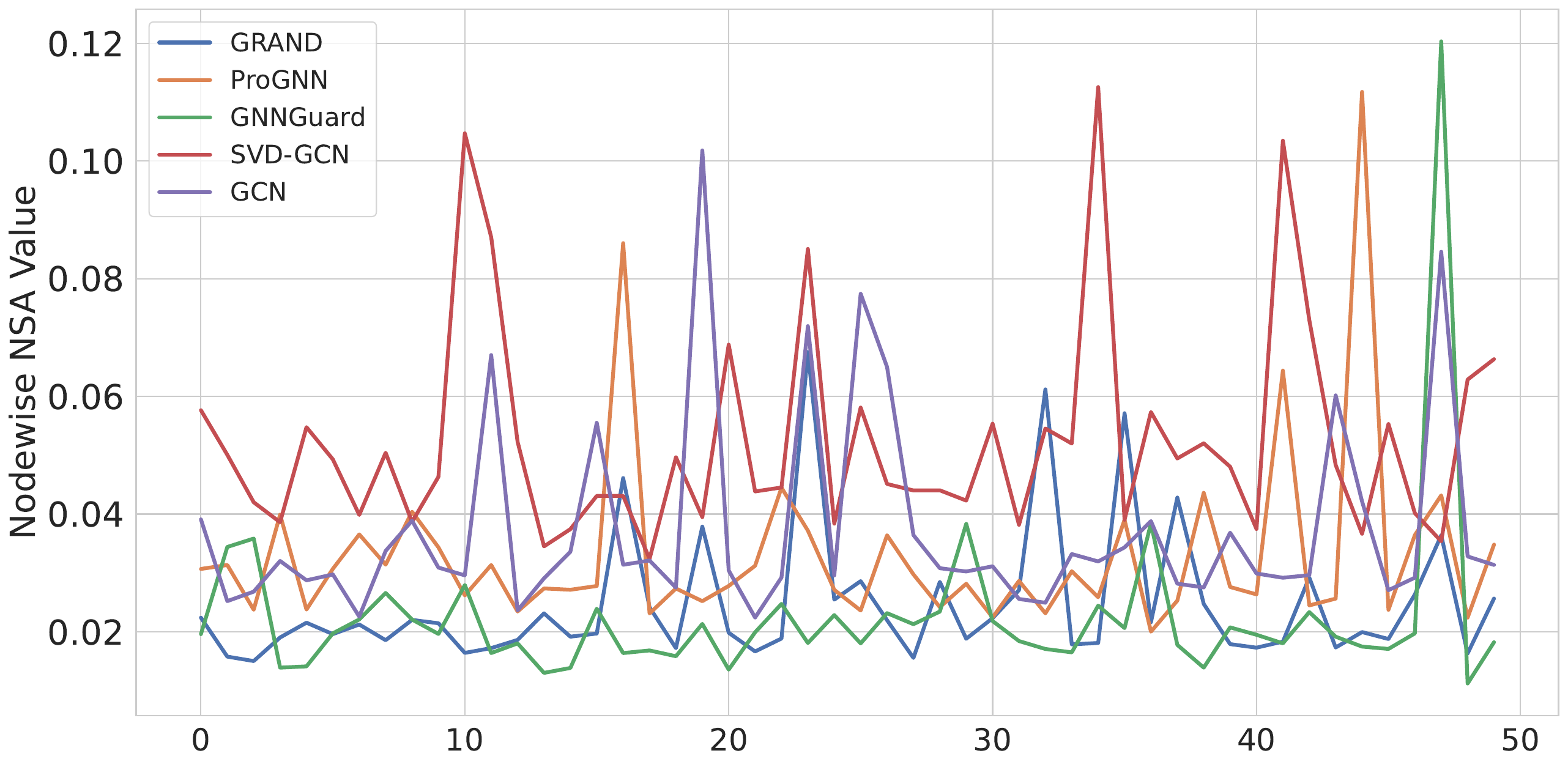}
    \caption{}
    \end{subfigure}
    \begin{subfigure}{0.37\textwidth}
        \includegraphics[width=\linewidth]{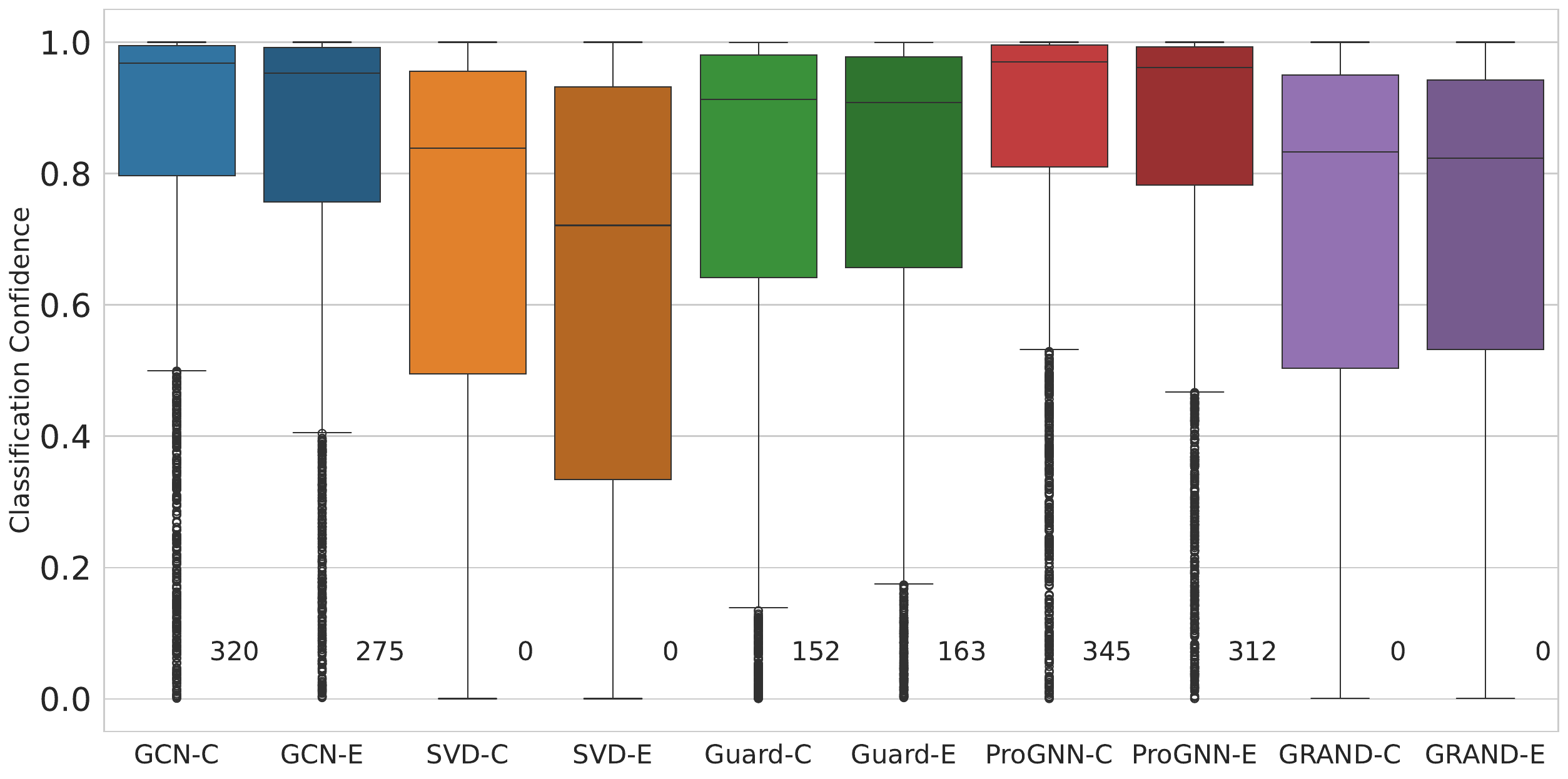}
        \caption{}
    \end{subfigure}
    \begin{subfigure}{0.23\textwidth}
        \includegraphics[width=\linewidth]{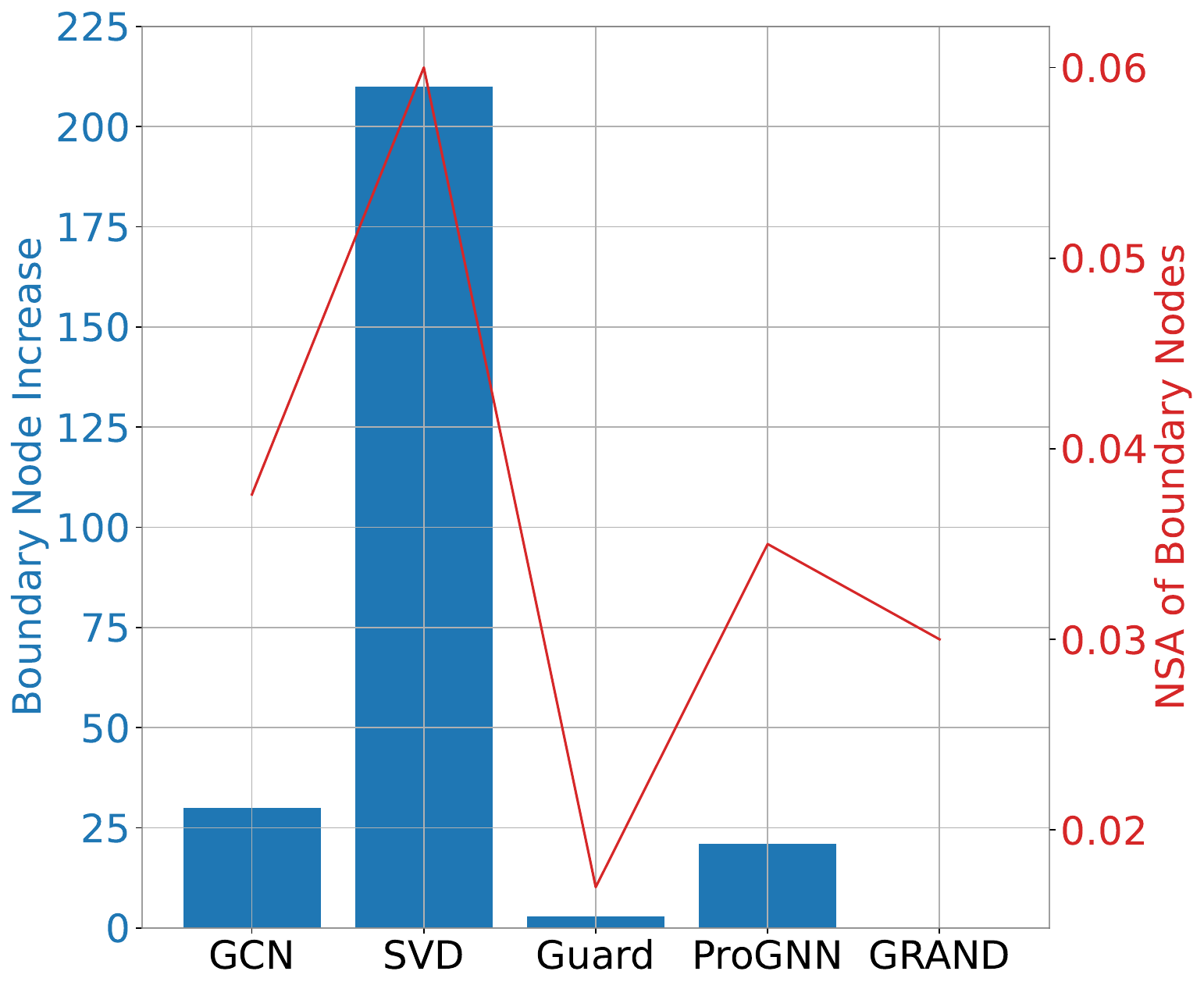}
        \caption{}
    \end{subfigure}
    \caption{Analyzing node vulnerability with NSA. (a) Nodewise NSA of the 50 nodes with the greatest decline in Classification Confidence. SVD-GCN has the highest nodewise NSA variations. (b) Distribution of Classification Confidence before and after an evasion attack on various Graph Neural Network architectures. A suffix of 'C' after the architecture name refers to clean dataset results and a suffix of 'E' refers to confidence on the poisoned dataset (c) Increase in number of boundary nodes for each model post attack and its correlation with the NSA of the boundary nodes.}
    \label{fig:nodewise}
\end{figure*}

Adversarial attacks have become a critical area of concern across various domains in machine learning, from computer vision to natural language processing and graph-based data. These attacks exploit vulnerabilities in model architectures, to introduce subtle perturbations in the model's representation space to drastically alter its predictions. A structural discrepancy analysis term like NSA can help analyze and identify vulnerabilities in these networks. By comparing the underlying structure of data representations before and after an attack, NSA allows us to quantitatively assess how adversarial perturbations impact the model's predictions, regardless of the architecture being employed.  For our study, we applied NSA in the context of GNNs, but the method can be equally effective in analyzing the robustness of other architectures, such as CNNs or transformers, under adversarial conditions.

In this experimental study, we subjected five distinct GNN architectures to both poisoning and evasion adversarial attacks using projected gradient descent Metattack \citep{PGDPaper,are_defenses_for_gnns_robust}. We use a GCN along with four robust GNN variants; SVD-GCN \citep{SVDGCN}, GNNGuard \citep{GNNguard}, GRAND \citep{grandpaper} and ProGNN \citep{ProGNNPaper}. To assess the vulnerability of these architectures, we manipulated the initial adjacency matrices by introducing perturbations ranging from 5\% to 25\%. The objective was to gauge the impact of these perturbations on the misclassification rates of the GNN models and to see if NSA shows a strong correlation to the misclassification rates over different perturbation rates. 

NSA was computed at each stage by comparing the clean graph's output representations with those from the perturbed graph. Our experiments revealed that NSA's variation over different perturbation rates mirrors the misclassification trends of various GNN architectures during poisoning attacks and NSA's ranking of architecture vulnerability is in line with previous works \citep{adversary_review, are_defenses_for_gnns_robust}. 

However, in evasion attacks, SVD-GCN stood out with its NSA scores significantly deviating from its misclassification rates. To further explore such anomalies, NSA can also be used to measure node-level discrepancies. Figure \ref{fig:nodewise} examines the boundary nodes of different GNN architectures. Post-attack, SVD-GCN showed a substantial increase in boundary nodes—defined by low classification confidence and proximity to the decision boundary—accompanied by a general decline in classification confidence. This surge, coupled with SVD-GCN's highest pointwise NSA values among all tested architectures, potentially accounts for its high NSA scores and heightened vulnerability despite appearing to be robust when evaluated only using accuracy based metrics. It is also this vulnerability in structure, that is exploited by \citet{are_defenses_for_gnns_robust} with adaptive adversarial attacks, to cause a catastrophic failure in SVD-GCN. This detailed analysis underpins our assertion that NSA, through simple empirical analyses, can unveil concealed weaknesses in defense methods not immediately apparent from conventional metrics.

\section{Conclusion and Limitations}\label{sec:disc}
We have demonstrated that the proposed measure of NSA is simple, efficient, and useful across many aspects of analysis and synthesis of representation spaces: robustness across initializations, convergence across epochs, autoencoders, adversarial attacks, and transfer learning. Its computational efficiency and the ability to converge to its global value in mini batches allow it to be useful in many applications and generalizations where scalability is desired. 

While NSA demonstrates substantial benefits in various applications, certain limitations must be acknowledged to provide a comprehensive understanding of its scope and potential areas for future improvements. The choice of Euclidean distance as the primary metric for NSA may not be optimal in high-dimensional spaces due to the curse of dimensionality \citep{bellman1961curse}. Despite this, Euclidean distance remains prevalent in various domains, including contrastive \citep{contrastive} or triplet losses \citep{triplet}, style transfer \citep{PerceptLosses} and similarity indices. NSA's versatility across different applications is a significant strength, yet it is not a universal solution suitable for all scenarios without modification. Effective deployment of NSA often requires careful tuning of parameters and integration with other loss functions to suit specific tasks. Finally, NSA is primarily a structural similarity metric, which means that it focuses on preserving the geometrical configuration of data points rather than their functional equivalency. This focus can lead to scenarios where NSA indicates significant differences between datasets that are functionally similar but structurally distinct.

\newpage

\bibliography{neurips_2024}

\begin{thebibliography}{70}
\providecommand{\natexlab}[1]{#1}
\providecommand{\url}[1]{\texttt{#1}}
\expandafter\ifx\csname urlstyle\endcsname\relax
  \providecommand{\doi}[1]{doi: #1}\else
  \providecommand{\doi}{doi: \begingroup \urlstyle{rm}\Url}\fi

\bibitem[Amsaleg et~al.(2015)Amsaleg, Chelly, Furon, Girard, Houle, Kawarabayashi, and Nett]{amsaleg}
Laurent Amsaleg, Oussama Chelly, Teddy Furon, St{\'e}phane Girard, Michael~E Houle, Ken-ichi Kawarabayashi, and Michael Nett.
\newblock Estimating local intrinsic dimensionality.
\newblock In \emph{Proceedings of the 21th ACM SIGKDD International Conference on Knowledge Discovery and Data Mining}, pp.\  29--38, 2015.

\bibitem[Bailey et~al.(2022)Bailey, Houle, and Ma]{LIDdivergences}
James Bailey, Michael~E. Houle, and Xingjun Ma.
\newblock Local intrinsic dimensionality, entropy and statistical divergences.
\newblock \emph{Entropy}, 24\penalty0 (9), 2022.
\newblock ISSN 1099-4300.
\newblock \doi{10.3390/e24091220}.
\newblock URL \url{https://www.mdpi.com/1099-4300/24/9/1220}.

\bibitem[Barannikov et~al.(2022)Barannikov, Trofimov, Balabin, and Burnaev]{RTDPaper}
Serguei Barannikov, Ilya Trofimov, Nikita Balabin, and Evgeny Burnaev.
\newblock Representation topology divergence: A method for comparing neural network representations.
\newblock In Kamalika Chaudhuri, Stefanie Jegelka, Le~Song, Csaba Szepesvari, Gang Niu, and Sivan Sabato (eds.), \emph{Proceedings of the 39th International Conference on Machine Learning}, volume 162 of \emph{Proceedings of Machine Learning Research}, pp.\  1607--1626. PMLR, 17--23 Jul 2022.

\bibitem[Bellman(1961)]{bellman1961curse}
Richard~E. Bellman.
\newblock Curse of dimensionality.
\newblock \emph{Mathematical Programming}, 1\penalty0 (1):\penalty0 19--32, 1961.

\bibitem[Boix-Adsera et~al.(2022)Boix-Adsera, Lawrence, Stepaniants, and Rigollet]{boixadsera2022gulp}
Enric Boix-Adsera, Hannah Lawrence, George Stepaniants, and Philippe Rigollet.
\newblock Gulp: a prediction-based metric between representations, 2022.

\bibitem[Camastra \& Staiano(2016)Camastra and Staiano]{camastra2016intrinsic}
Francesco Camastra and Antonino Staiano.
\newblock Intrinsic dimension estimation: Advances and open problems.
\newblock \emph{Information Sciences}, 328:\penalty0 26--41, 2016.

\bibitem[Carothers(2000)]{carothers}
N.~L. Carothers.
\newblock \emph{Real analysis /}.
\newblock Cambridge University Press, Cambridge :, 2000.
\newblock Includes bibliographical references and index.

\bibitem[Cer et~al.(2017)Cer, Diab, Agirre, Lopez-Gazpio, and Specia]{sts-semeval}
Daniel Cer, Mona Diab, Eneko Agirre, I{\~n}igo Lopez-Gazpio, and Lucia Specia.
\newblock {S}em{E}val-2017 task 1: Semantic textual similarity multilingual and crosslingual focused evaluation.
\newblock In Steven Bethard, Marine Carpuat, Marianna Apidianaki, Saif~M. Mohammad, Daniel Cer, and David Jurgens (eds.), \emph{Proceedings of the 11th International Workshop on Semantic Evaluation ({S}em{E}val-2017)}, pp.\  1--14, Vancouver, Canada, August 2017. Association for Computational Linguistics.
\newblock \doi{10.18653/v1/S17-2001}.
\newblock URL \url{https://aclanthology.org/S17-2001}.

\bibitem[Chamberlain et~al.(2021)Chamberlain, Rowbottom, Gorinova, Webb, Rossi, and Bronstein]{grandpaper}
Benjamin~Paul Chamberlain, James Rowbottom, Maria~I. Gorinova, Stefan~D Webb, Emanuele Rossi, and Michael~M. Bronstein.
\newblock {GRAND}: Graph neural diffusion.
\newblock In \emph{The Symbiosis of Deep Learning and Differential Equations}, 2021.
\newblock URL \url{https://openreview.net/forum?id=_1fu_cjsaRE}.

\bibitem[Chen et~al.(2020)Chen, Kornblith, Norouzi, and Hinton]{contrastive}
Ting Chen, Simon Kornblith, Mohammad Norouzi, and Geoffrey~E. Hinton.
\newblock A simple framework for contrastive learning of visual representations.
\newblock In \emph{Proceedings of the 37th International Conference on Machine Learning, {ICML} 2020, 13-18 July 2020, Virtual Event}, volume 119 of \emph{Proceedings of Machine Learning Research}, pp.\  1597--1607. {PMLR}, 2020.
\newblock URL \url{http://proceedings.mlr.press/v119/chen20j.html}.

\bibitem[Chen et~al.(2023)Chen, Miao, and Qiu]{chen2023inner}
Wei Chen, Zichen Miao, and Qiang Qiu.
\newblock Inner product-based neural network similarity.
\newblock In \emph{Thirty-seventh Conference on Neural Information Processing Systems}, 2023.
\newblock URL \url{https://openreview.net/forum?id=9eneYFIGKq}.

\bibitem[Chen et~al.(2022)Chen, Lu, Hu, Yang, Xuan, Wang, and Yang]{GBSPaper}
Zuohui Chen, Yao Lu, Jinxuan Hu, Wen Yang, Qi~Xuan, Zhen Wang, and Xiaoniu Yang.
\newblock Graph-based similarity of neural network representations, 2022.
\newblock URL \url{https://arxiv.org/abs/2111.11165}.

\bibitem[Chiang et~al.(2019)Chiang, Liu, Si, Li, Bengio, and Hsieh]{CGCNPaper}
Wei-Lin Chiang, Xuanqing Liu, Si~Si, Yang Li, Samy Bengio, and Cho-Jui Hsieh.
\newblock Cluster-gcn: An efficient algorithm for training deep and large graph convolutional networks.
\newblock In \emph{Proceedings of the 25th ACM SIGKDD International Conference on Knowledge Discovery \& Data Mining}, KDD '19, pp.\  257–266, New York, NY, USA, 2019. Association for Computing Machinery.
\newblock ISBN 9781450362016.
\newblock \doi{10.1145/3292500.3330925}.
\newblock URL \url{https://doi.org/10.1145/3292500.3330925}.

\bibitem[Ding et~al.(2021)Ding, Denain, and Steinhardt]{ding2021grounding}
Frances Ding, Jean-Stanislas Denain, and Jacob Steinhardt.
\newblock Grounding representation similarity through statistical testing.
\newblock In A.~Beygelzimer, Y.~Dauphin, P.~Liang, and J.~Wortman Vaughan (eds.), \emph{Advances in Neural Information Processing Systems}, 2021.
\newblock URL \url{https://openreview.net/forum?id=_kwj6V53ZqB}.

\bibitem[Entezari et~al.(2020)Entezari, Al-Sayouri, Darvishzadeh, and Papalexakis]{SVDGCN}
Negin Entezari, Saba~A. Al-Sayouri, Amirali Darvishzadeh, and Evangelos~E. Papalexakis.
\newblock All you need is low (rank): Defending against adversarial attacks on graphs.
\newblock In \emph{Proceedings of the 13th International Conference on Web Search and Data Mining}, WSDM '20, pp.\  169–177, New York, NY, USA, 2020. Association for Computing Machinery.
\newblock ISBN 9781450368223.
\newblock \doi{10.1145/3336191.3371789}.
\newblock URL \url{https://doi.org/10.1145/3336191.3371789}.

\bibitem[Fefferman et~al.(2016)Fefferman, Mitter, and Narayanan]{fefferman_testing_2016}
Charles Fefferman, Sanjoy Mitter, and Hariharan Narayanan.
\newblock Testing the manifold hypothesis.
\newblock \emph{Journal of the American Mathematical Society}, 29\penalty0 (4):\penalty0 983--1049, February 2016.
\newblock ISSN 0894-0347, 1088-6834.
\newblock \doi{10.1090/jams/852}.
\newblock URL \url{https://www.ams.org/jams/2016-29-04/S0894-0347-2016-00852-4/}.

\bibitem[Finn et~al.(2017)Finn, Abbeel, and Levine]{FAL17}
Chelsea Finn, Pieter Abbeel, and Sergey Levine.
\newblock Model-agnostic meta-learning for fast adaptation of deep networks.
\newblock In Doina Precup and Yee~Whye Teh (eds.), \emph{Proceedings of the 34th International Conference on Machine Learning}, volume~70 of \emph{Proceedings of Machine Learning Research}, pp.\  1126--1135. PMLR, 06--11 Aug 2017.
\newblock URL \url{https://proceedings.mlr.press/v70/finn17a.html}.

\bibitem[Fukunaga(1982)]{fukunaga}
Keinosuke Fukunaga.
\newblock 15 intrinsic dimensionality extraction.
\newblock \emph{Handbook of statistics}, 2:\penalty0 347--360, 1982.

\bibitem[Genovese et~al.(2012)Genovese, Perone-Pacifico, Verdinelli, and Wasserman]{genovese_manifold_2012}
Christopher~R. Genovese, Marco Perone-Pacifico, Isabella Verdinelli, and Larry Wasserman.
\newblock Manifold estimation and singular deconvolution under {Hausdorff} loss.
\newblock \emph{The Annals of Statistics}, 40\penalty0 (2), April 2012.
\newblock ISSN 0090-5364.
\newblock \doi{10.1214/12-AOS994}.
\newblock URL \url{https://projecteuclid.org/journals/annals-of-statistics/volume-40/issue-2/Manifold-estimation-and-singular-deconvolution-under-Hausdorff-loss/10.1214/12-AOS994.full}.

\bibitem[Giles et~al.(1998)Giles, Bollacker, and Lawrence]{citeseer}
C~Lee Giles, Kurt~D Bollacker, and Steve Lawrence.
\newblock Citeseer: An automatic citation indexing system.
\newblock In \emph{Proceedings of the third ACM conference on Digital libraries}, pp.\  89--98. ACM, 1998.

\bibitem[Goodfellow et~al.(2016{\natexlab{a}})Goodfellow, Bengio, and Courville]{GBC16}
Ian~J. Goodfellow, Yoshua Bengio, and Aaron Courville.
\newblock \emph{Deep Learning}.
\newblock MIT Press, Cambridge, MA, USA, 2016{\natexlab{a}}.
\newblock \url{http://www.deeplearningbook.org}.

\bibitem[Goodfellow et~al.(2016{\natexlab{b}})Goodfellow, Bengio, and Courville]{GoodBengCour16}
Ian~J. Goodfellow, Yoshua Bengio, and Aaron Courville.
\newblock \emph{Deep Learning}.
\newblock MIT Press, Cambridge, MA, USA, 2016{\natexlab{b}}.
\newblock \url{http://www.deeplearningbook.org}.

\bibitem[Hamilton et~al.(2017)Hamilton, Ying, and Leskovec]{GSAGEPaper}
Will Hamilton, Zhitao Ying, and Jure Leskovec.
\newblock Inductive representation learning on large graphs.
\newblock In I.~Guyon, U.~Von Luxburg, S.~Bengio, H.~Wallach, R.~Fergus, S.~Vishwanathan, and R.~Garnett (eds.), \emph{Advances in Neural Information Processing Systems}, volume~30. Curran Associates, Inc., 2017.
\newblock URL \url{https://proceedings.neurips.cc/paper_files/paper/2017/file/5dd9db5e033da9c6fb5ba83c7a7ebea9-Paper.pdf}.

\bibitem[Hamilton et~al.(2016)Hamilton, Leskovec, and Jurafsky]{hamilton2016cultural}
William~L. Hamilton, Jure Leskovec, and Dan Jurafsky.
\newblock Cultural shift or linguistic drift? comparing two computational measures of semantic change, 2016.

\bibitem[Hamilton et~al.(2018)Hamilton, Leskovec, and Jurafsky]{hamilton2018diachronic}
William~L. Hamilton, Jure Leskovec, and Dan Jurafsky.
\newblock Diachronic word embeddings reveal statistical laws of semantic change, 2018.

\bibitem[Houle(2017)]{houle}
Michael~E Houle.
\newblock Local intrinsic dimensionality i: an extreme-value-theoretic foundation for similarity applications.
\newblock In \emph{Similarity Search and Applications: 10th International Conference, SISAP 2017, Munich, Germany, October 4-6, 2017, Proceedings 10}, pp.\  64--79. Springer, 2017.

\bibitem[Islam et~al.(2023)Islam, Zhou, Ren, Khuzani, Kapp, Zou, Tian, Liao, and Xing]{islam_revealing_2023}
Md~Tauhidul Islam, Zixia Zhou, Hongyi Ren, Masoud~Badiei Khuzani, Daniel Kapp, James Zou, Lu~Tian, Joseph~C. Liao, and Lei Xing.
\newblock Revealing hidden patterns in deep neural network feature space continuum via manifold learning.
\newblock \emph{Nature Communications}, 14\penalty0 (1):\penalty0 8506, December 2023.
\newblock ISSN 2041-1723.
\newblock \doi{10.1038/s41467-023-43958-w}.
\newblock URL \url{https://www.nature.com/articles/s41467-023-43958-w}.

\bibitem[Jin et~al.(2020{\natexlab{a}})Jin, Li, Xu, Wang, and Tang]{adversary_review}
Wei Jin, Yaxin Li, Han Xu, Yiqi Wang, and Jiliang Tang.
\newblock Adversarial attacks and defenses on graphs: {A} review and empirical study.
\newblock \emph{CoRR}, abs/2003.00653, 2020{\natexlab{a}}.
\newblock URL \url{https://arxiv.org/abs/2003.00653}.

\bibitem[Jin et~al.(2020{\natexlab{b}})Jin, Ma, Liu, Tang, Wang, and Tang]{ProGNNPaper}
Wei Jin, Yao Ma, Xiaorui Liu, Xianfeng Tang, Suhang Wang, and Jiliang Tang.
\newblock Graph structure learning for robust graph neural networks.
\newblock In \emph{Proceedings of the 26th ACM SIGKDD International Conference on Knowledge Discovery \& Data Mining}, KDD '20, pp.\  66–74, New York, NY, USA, 2020{\natexlab{b}}. Association for Computing Machinery.
\newblock ISBN 9781450379984.
\newblock \doi{10.1145/3394486.3403049}.
\newblock URL \url{https://doi.org/10.1145/3394486.3403049}.

\bibitem[Johnson et~al.(2016)Johnson, Alahi, and Fei-Fei]{PerceptLosses}
Justin Johnson, Alexandre Alahi, and Li~Fei-Fei.
\newblock Perceptual losses for real-time style transfer and super-resolution.
\newblock In Bastian Leibe, Jiri Matas, Nicu Sebe, and Max Welling (eds.), \emph{Computer Vision -- ECCV 2016}, pp.\  694--711, Cham, 2016. Springer International Publishing.
\newblock ISBN 978-3-319-46475-6.

\bibitem[Khrulkov \& Oseledets(2018)Khrulkov and Oseledets]{khrulkov2018geometry}
Valentin Khrulkov and Ivan Oseledets.
\newblock Geometry score: A method for comparing generative adversarial networks, 2018.

\bibitem[Kipf \& Welling(2017)Kipf and Welling]{GCNPaper}
Thomas~N. Kipf and Max Welling.
\newblock Semi-supervised classification with graph convolutional networks.
\newblock In \emph{International Conference on Learning Representations}, 2017.
\newblock URL \url{https://openreview.net/forum?id=SJU4ayYgl}.

\bibitem[Klabunde et~al.(2023)Klabunde, Schumacher, Strohmaier, and Lemmerich]{simsurvey}
Max Klabunde, Tobias Schumacher, Markus Strohmaier, and Florian Lemmerich.
\newblock Similarity of neural network models: A survey of functional and representational measures.
\newblock \emph{arXiv preprint arXiv:2305.06329}, 2023.

\bibitem[Kornblith et~al.(2019)Kornblith, Norouzi, Lee, and Hinton]{DBLP:conf/icml/Kornblith0LH19}
Simon Kornblith, Mohammad Norouzi, Honglak Lee, and Geoffrey~E. Hinton.
\newblock Similarity of neural network representations revisited.
\newblock In Kamalika Chaudhuri and Ruslan Salakhutdinov (eds.), \emph{Proceedings of the 36th International Conference on Machine Learning, {ICML} 2019, 9-15 June 2019, Long Beach, California, {USA}}, volume~97 of \emph{Proceedings of Machine Learning Research}, pp.\  3519--3529. {PMLR}, 2019.
\newblock URL \url{http://proceedings.mlr.press/v97/kornblith19a.html}.

\bibitem[Kriegeskorte et~al.(2008)Kriegeskorte, Mur, and Bandettini]{10.3389/neuro.06.004.2008}
Nikolaus Kriegeskorte, Marieke Mur, and Peter Bandettini.
\newblock Representational similarity analysis - connecting the branches of systems neuroscience.
\newblock \emph{Frontiers in Systems Neuroscience}, 2, 2008.
\newblock ISSN 1662-5137.
\newblock \doi{10.3389/neuro.06.004.2008}.
\newblock URL \url{https://www.frontiersin.org/articles/10.3389/neuro.06.004.2008}.

\bibitem[Krizhevsky \& Hinton(2009)Krizhevsky and Hinton]{CIFAR10Data}
Alex Krizhevsky and Geoffrey Hinton.
\newblock Learning multiple layers of features from tiny images.
\newblock Technical report, Citeseer, 2009.

\bibitem[Kruskal(1964)]{Kruskal1964}
J.~B. Kruskal.
\newblock Multidimensional scaling by optimizing goodness of fit to a nonmetric hypothesis.
\newblock \emph{Psychometrika}, 29\penalty0 (1):\penalty0 1--27, 1964.
\newblock \doi{10.1007/BF02289565}.
\newblock URL \url{https://doi.org/10.1007/BF02289565}.

\bibitem[LeCun et~al.(2010)LeCun, Cortes, and Burges]{MNISTData}
Yann LeCun, Corinna Cortes, and CJ~Burges.
\newblock Mnist handwritten digit database.
\newblock \emph{AT\&T Labs [Online]. Available: http://yann. lecun. com/exdb/mnist}, 2010.

\bibitem[Levina \& Bickel(2004)Levina and Bickel]{NIPS2004_74934548}
Elizaveta Levina and Peter Bickel.
\newblock Maximum likelihood estimation of intrinsic dimension.
\newblock In L.~Saul, Y.~Weiss, and L.~Bottou (eds.), \emph{Advances in Neural Information Processing Systems}, volume~17. MIT Press, 2004.
\newblock URL \url{https://proceedings.neurips.cc/paper_files/paper/2004/file/74934548253bcab8490ebd74afed7031-Paper.pdf}.

\bibitem[Li et~al.(2016)Li, Yosinski, Clune, Lipson, and Hopcroft]{li2016convergent}
Yixuan Li, Jason Yosinski, Jeff Clune, Hod Lipson, and John Hopcroft.
\newblock Convergent learning: Do different neural networks learn the same representations?, 2016.

\bibitem[MacKay \& Ghahramani(2005)MacKay and Ghahramani]{mackay}
David~J.C. MacKay and Zoubin Ghahramani.
\newblock Comments on 'maximum likelihood estimation of intrinsic dimension' by e. levina and p. bickel (2004), 2005.
\newblock URL \url{http://www.inference.org.uk/mackay/dimension/}.

\bibitem[May et~al.(2019)May, Zhang, Dao, and R\'{e}]{NEURIPS2019_faf02b23}
Avner May, Jian Zhang, Tri Dao, and Christopher R\'{e}.
\newblock On the downstream performance of compressed word embeddings.
\newblock In H.~Wallach, H.~Larochelle, A.~Beygelzimer, F.~d\textquotesingle Alch\'{e}-Buc, E.~Fox, and R.~Garnett (eds.), \emph{Advances in Neural Information Processing Systems}, volume~32. Curran Associates, Inc., 2019.
\newblock URL \url{https://proceedings.neurips.cc/paper_files/paper/2019/file/faf02b2358de8933f480a146f4d2d98e-Paper.pdf}.

\bibitem[McCallum et~al.(2000)McCallum, Nigam, Rennie, and Seymore]{cora}
Andrew~Kachites McCallum, Kamal Nigam, Jason Rennie, and Kristie Seymore.
\newblock Automating the construction of internet portals with machine learning.
\newblock In \emph{Information Retrieval}, volume~3, pp.\  127--163. Springer, 2000.

\bibitem[Moor et~al.(2020)Moor, Horn, Rieck, and Borgwardt]{TopoAE}
Michael Moor, Max Horn, Bastian Rieck, and Karsten Borgwardt.
\newblock Topological autoencoders.
\newblock In Hal~Daumé III and Aarti Singh (eds.), \emph{Proceedings of the 37th International Conference on Machine Learning}, volume 119 of \emph{Proceedings of Machine Learning Research}, pp.\  7045--7054. PMLR, 13--18 Jul 2020.
\newblock URL \url{https://proceedings.mlr.press/v119/moor20a.html}.

\bibitem[Morcos et~al.(2018)Morcos, Raghu, and Bengio]{PWCCAPaper}
Ari~S. Morcos, Maithra Raghu, and Samy Bengio.
\newblock Insights on representational similarity in neural networks with canonical correlation.
\newblock In Samy Bengio, Hanna~M. Wallach, Hugo Larochelle, Kristen Grauman, Nicol{\`{o}} Cesa{-}Bianchi, and Roman Garnett (eds.), \emph{Advances in Neural Information Processing Systems 31: Annual Conference on Neural Information Processing Systems 2018, NeurIPS 2018, December 3-8, 2018, Montr{\'{e}}al, Canada}, pp.\  5732--5741, 2018.
\newblock URL \url{https://proceedings.neurips.cc/paper/2018/hash/a7a3d70c6d17a73140918996d03c014f-Abstract.html}.

\bibitem[Mujkanovic et~al.(2022)Mujkanovic, Geisler, G\"unnemann, and Bojchevski]{are_defenses_for_gnns_robust}
Felix Mujkanovic, Simon Geisler, Stephan G\"unnemann, and Aleksandar Bojchevski.
\newblock Are defenses for graph neural networks robust?
\newblock In \emph{Neural Information Processing Systems, {NeurIPS}}, 2022.

\bibitem[Nene et~al.(1996)Nene, Nayar, and Murase]{COIL20Data}
Sameer~A Nene, Shree~K Nayar, and Hiroshi Murase.
\newblock Columbia object image library (coil-20).
\newblock In \emph{Technical report CUCS-005-96}, 1996.

\bibitem[Psenka et~al.(2023)Psenka, Pai, Raman, Sastry, and Ma]{psenka2023representationlearningmanifoldflattening}
Michael Psenka, Druv Pai, Vishal Raman, Shankar Sastry, and Yi~Ma.
\newblock Representation learning via manifold flattening and reconstruction, 2023.
\newblock URL \url{https://arxiv.org/abs/2305.01777}.

\bibitem[Raghu et~al.(2017)Raghu, Gilmer, Yosinski, and Sohl-Dickstein]{SVCCAPaper}
Maithra Raghu, Justin Gilmer, Jason Yosinski, and Jascha Sohl-Dickstein.
\newblock Svcca: Singular vector canonical correlation analysis for deep learning dynamics and interpretability.
\newblock In \emph{Proceedings of the 31st International Conference on Neural Information Processing Systems}, NIPS'17, pp.\  6078–6087, Red Hook, NY, USA, 2017. Curran Associates Inc.
\newblock ISBN 9781510860964.

\bibitem[Schroff et~al.(2015)Schroff, Kalenichenko, and Philbin]{triplet}
Florian Schroff, Dmitry Kalenichenko, and James Philbin.
\newblock {FaceNet}: A unified embedding for face recognition and clustering.
\newblock In \emph{2015 {IEEE} Conference on Computer Vision and Pattern Recognition ({CVPR})}. {IEEE}, jun 2015.
\newblock \doi{10.1109/cvpr.2015.7298682}.
\newblock URL \url{https://doi.org/10.1109%2Fcvpr.2015.7298682}.

\bibitem[Schumacher et~al.(2020)Schumacher, Wolf, Ritzert, Lemmerich, Bachmann, Frantzen, Klabunde, Grohe, and Strohmaier]{schumacher2020effects}
Tobias Schumacher, Hinrikus Wolf, Martin Ritzert, Florian Lemmerich, Jan Bachmann, Florian Frantzen, Max Klabunde, Martin Grohe, and Markus Strohmaier.
\newblock The effects of randomness on the stability of node embeddings, 2020.

\bibitem[Sen et~al.(2008)Sen, Namata, Bilgic, Getoor, Gallagher, and Eliassi-Rad]{pubmed}
Prithviraj Sen, Galileo Namata, Mustafa Bilgic, Lise Getoor, Brian Gallagher, and Tina Eliassi-Rad.
\newblock Collective classification in network data.
\newblock In \emph{AI Magazine}, volume~29, pp.\ ~93, 2008.

\bibitem[Shahbazi et~al.(2021)Shahbazi, Shirali, Aghajan, and Nili]{SHAHBAZI2021118271}
Mahdiyar Shahbazi, Ali Shirali, Hamid Aghajan, and Hamed Nili.
\newblock Using distance on the riemannian manifold to compare representations in brain and in models.
\newblock \emph{NeuroImage}, 239:\penalty0 118271, 2021.
\newblock ISSN 1053-8119.
\newblock \doi{https://doi.org/10.1016/j.neuroimage.2021.118271}.
\newblock URL \url{https://www.sciencedirect.com/science/article/pii/S1053811921005474}.

\bibitem[Shchur et~al.(2019)Shchur, Mumme, Bojchevski, and Günnemann]{amazondata}
Oleksandr Shchur, Maximilian Mumme, Aleksandar Bojchevski, and Stephan Günnemann.
\newblock Pitfalls of graph neural network evaluation, 2019.

\bibitem[Székely et~al.(2007)Székely, Rizzo, and Bakirov]{Sz_kely_2007}
Gábor~J. Székely, Maria~L. Rizzo, and Nail~K. Bakirov.
\newblock Measuring and testing dependence by correlation of distances.
\newblock \emph{The Annals of Statistics}, 35\penalty0 (6), December 2007.
\newblock ISSN 0090-5364.
\newblock \doi{10.1214/009053607000000505}.
\newblock URL \url{http://dx.doi.org/10.1214/009053607000000505}.

\bibitem[Tang et~al.(2020)Tang, Maddox, Dickens, Diethe, and Damianou]{tang2020similarity}
Shuai Tang, Wesley~J. Maddox, Charlie Dickens, Tom Diethe, and Andreas Damianou.
\newblock Similarity of neural networks with gradients, 2020.

\bibitem[Tenenbaum et~al.(2000)Tenenbaum, de~Silva, and Langford]{ISOMAP}
Joshua~B. Tenenbaum, Vin de~Silva, and John~C. Langford.
\newblock A global geometric framework for nonlinear dimensionality reduction.
\newblock \emph{Science}, 290\penalty0 (5500):\penalty0 2319--2323, 2000.
\newblock \doi{10.1126/science.290.5500.2319}.
\newblock URL \url{https://www.science.org/doi/abs/10.1126/science.290.5500.2319}.

\bibitem[Trofimov et~al.(2023)Trofimov, Cherniavskii, Tulchinskii, Balabin, Burnaev, and Barannikov]{RTDAEPaper}
Ilya Trofimov, Daniil Cherniavskii, Eduard Tulchinskii, Nikita Balabin, Evgeny Burnaev, and Serguei Barannikov.
\newblock Learning topology-preserving data representations.
\newblock In \emph{The Eleventh International Conference on Learning Representations}, 2023.
\newblock URL \url{https://openreview.net/forum?id=lIu-ixf-Tzf}.

\bibitem[Tsitsulin et~al.(2020{\natexlab{a}})Tsitsulin, Munkhoeva, Mottin, Karras, Bronstein, Oseledets, and Mueller]{Tsitsulin2020The}
Anton Tsitsulin, Marina Munkhoeva, Davide Mottin, Panagiotis Karras, Alex Bronstein, Ivan Oseledets, and Emmanuel Mueller.
\newblock The shape of data: Intrinsic distance for data distributions.
\newblock In \emph{International Conference on Learning Representations}, 2020{\natexlab{a}}.
\newblock URL \url{https://openreview.net/forum?id=HyebplHYwB}.

\bibitem[Tsitsulin et~al.(2020{\natexlab{b}})Tsitsulin, Munkhoeva, Mottin, Karras, Bronstein, Oseledets, and Müller]{tsitsulin2020shape}
Anton Tsitsulin, Marina Munkhoeva, Davide Mottin, Panagiotis Karras, Alex Bronstein, Ivan Oseledets, and Emmanuel Müller.
\newblock The shape of data: Intrinsic distance for data distributions, 2020{\natexlab{b}}.

\bibitem[van~der Maaten \& Hinton(2008)van~der Maaten and Hinton]{tSNEPaper}
Laurens van~der Maaten and Geoffrey Hinton.
\newblock Visualizing data using t-sne.
\newblock \emph{Journal of Machine Learning Research}, 9\penalty0 (86):\penalty0 2579--2605, 2008.
\newblock URL \url{http://jmlr.org/papers/v9/vandermaaten08a.html}.

\bibitem[Veličković et~al.(2018)Veličković, Cucurull, Casanova, Romero, Liò, and Bengio]{GATPaper}
Petar Veličković, Guillem Cucurull, Arantxa Casanova, Adriana Romero, Pietro Liò, and Yoshua Bengio.
\newblock Graph attention networks.
\newblock In \emph{International Conference on Learning Representations}, 2018.
\newblock URL \url{https://openreview.net/forum?id=rJXMpikCZ}.

\bibitem[Wang et~al.(2018)Wang, Hu, Gu, Wu, Hu, He, and Hopcroft]{wang2018understanding}
Liwei Wang, Lunjia Hu, Jiayuan Gu, Yue Wu, Zhiqiang Hu, Kun He, and John Hopcroft.
\newblock Towards understanding learning representations: To what extent do different neural networks learn the same representation, 2018.

\bibitem[Wang et~al.(2021)Wang, Huang, Rudin, and Shaposhnik]{PacmapPaper}
Yingfan Wang, Haiyang Huang, Cynthia Rudin, and Yaron Shaposhnik.
\newblock Understanding how dimension reduction tools work: An empirical approach to deciphering t-sne, umap, trimap, and pacmap for data visualization.
\newblock \emph{Journal of Machine Learning Research}, 22\penalty0 (201):\penalty0 1--73, 2021.
\newblock URL \url{http://jmlr.org/papers/v22/20-1061.html}.

\bibitem[Williams et~al.(2021)Williams, Kunz, Kornblith, and Linderman]{williams2021generalized}
Alex~H Williams, Erin Kunz, Simon Kornblith, and Scott Linderman.
\newblock Generalized shape metrics on neural representations.
\newblock In A.~Beygelzimer, Y.~Dauphin, P.~Liang, and J.~Wortman Vaughan (eds.), \emph{Advances in Neural Information Processing Systems}, 2021.
\newblock URL \url{https://openreview.net/forum?id=L9JM-pxQOl}.

\bibitem[Williams et~al.(2022)Williams, Kunz, Kornblith, and Linderman]{williams2022generalized}
Alex~H. Williams, Erin Kunz, Simon Kornblith, and Scott~W. Linderman.
\newblock Generalized shape metrics on neural representations, 2022.

\bibitem[Xiao et~al.(2017)Xiao, Rasul, and Vollgraf]{FMNISTData}
Han Xiao, Kashif Rasul, and Roland Vollgraf.
\newblock Fashion-mnist: a novel image dataset for benchmarking machine learning algorithms.
\newblock \emph{arXiv preprint arXiv:1708.07747}, 2017.

\bibitem[Xu et~al.(2019)Xu, Chen, Liu, Chen, Weng, Hong, and Lin]{PGDPaper}
Kaidi Xu, Hongge Chen, Sijia Liu, Pin-Yu Chen, Tsui-Wei Weng, Mingyi Hong, and Xue Lin.
\newblock Topology attack and defense for graph neural networks: An optimization perspective.
\newblock In \emph{Proceedings of the Twenty-Eighth International Joint Conference on Artificial Intelligence, {IJCAI-19}}, pp.\  3961--3967. International Joint Conferences on Artificial Intelligence Organization, 7 2019.
\newblock \doi{10.24963/ijcai.2019/550}.
\newblock URL \url{https://doi.org/10.24963/ijcai.2019/550}.

\bibitem[Zeng et~al.(2020)Zeng, Zhou, Srivastava, Kannan, and Prasanna]{flickrdata}
Hanqing Zeng, Hongkuan Zhou, Ajitesh Srivastava, Rajgopal Kannan, and Viktor Prasanna.
\newblock Graphsaint: Graph sampling based inductive learning method, 2020.

\bibitem[Zhang \& Zitnik(2020)Zhang and Zitnik]{GNNguard}
Xiang Zhang and Marinka Zitnik.
\newblock Gnnguard: Defending graph neural networks against adversarial attacks.
\newblock In H.~Larochelle, M.~Ranzato, R.~Hadsell, M.F. Balcan, and H.~Lin (eds.), \emph{Advances in Neural Information Processing Systems}, volume~33, pp.\  9263--9275. Curran Associates, Inc., 2020.
\newblock URL \url{https://proceedings.neurips.cc/paper_files/paper/2020/file/690d83983a63aa1818423fd6edd3bfdb-Paper.pdf}.

\end{thebibliography}
\bibliographystyle{neurips_2024}

\newpage
\appendix
\section{Appendix}

\appendix
\section{Proofs for GNSA as a pseudometric}\label{sec:gnsa_pseudo_proofs}
\begin{lemma}[Identity]
    \label{lem:nsa_0}
    Let $X$ be a point cloud over some space, then $\NSA(X,X)=0$.
\end{lemma}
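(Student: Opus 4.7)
The plan is to prove this lemma by direct substitution into the definition of $\NSA$ given in Section~\ref{sec:prel_NSA}. Since $\NSA(X,Y)$ is built as an average over $i$ of pointwise terms $\NSA(X,Y,i)$, and each pointwise term is itself an average of absolute differences, it suffices to show that every summand inside the double sum vanishes when $Y=X$.

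Concretely, I would set $Y = X$ in the definition of $\NSA(X,Y,i)$. Then for each $j$, the $j$-th summand becomes
\begin{equation*}
\left|\frac{d(\vec{x_i},\vec{x_j})}{\max\limits_{\vec{x}\in X}{d(\vec{x},\vec{0})}}-\frac{d(\vec{x_i},\vec{x_j})}{\max\limits_{\vec{x}\in X}{d(\vec{x},\vec{0})}}\right|,
\end{equation*}
which is identically zero since both normalization denominators collapse to the same quantity $\max_{\vec{x}\in X} d(\vec{x},\vec{0})$ and both numerators are identical. Hence $\NSA(X,X,i)=0$ for every $i\in\{1,\ldots,N\}$, and averaging over $i$ yields $\NSA(X,X)=0$.

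The only subtlety worth flagging is the well-definedness of the normalization: the denominator $\max_{\vec{x}\in X} d(\vec{x},\vec{0})$ must be nonzero so that the expression is meaningful. This holds whenever $X$ contains at least one point distinct from the origin, which is the implicit nondegeneracy assumption on a point cloud. Under that mild assumption, there is no real obstacle — the proof is a one-line substitution — and no further machinery (no triangle inequality, no LID estimator, no properties of $d$ beyond its definedness) is required.
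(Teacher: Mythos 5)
Your proof is correct and is essentially identical to the paper's: both substitute $Y=X$ into the definition, observe that the two normalized distances in each summand coincide (since the denominator collapses to the single quantity $\max_{\vec{x}\in X} d(\vec{x},\vec{0})$), and conclude that every term of the double sum vanishes. Your remark about the denominator being nonzero is a reasonable extra precaution that the paper only addresses later, in a footnote to the continuity argument, but it does not change the substance of the proof.
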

\begin{proof}
    Let $X=\{x_1,\ldots,x_N\}$. Let $\max_{x\in X}(d(x,0)) = D$. Then $$\NSA(X,X) = \frac{1}{N^2}\sum_{1\leq i,j\leq N}\left|\frac{d(x_i,x_j)}{D}-\frac{d(x_i,x_j)}{D}\right|.$$
    Simplifying, we get 
    \[\NSA(X,X) = \frac{1}{N^2D}\sum_{1\leq i,j\leq N}\left|d(x_i,x_j)-d(x_i,x_j)\right| = 0.\]
\end{proof}
\begin{lemma}[Symmetry]
    \label{lem:nsa_sym}
    Let $X,Y$ be two point clouds of the same size, then $\NSA(X,Y) = \NSA(Y,X)$.
\end{lemma}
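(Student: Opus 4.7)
The plan is to unfold the definition of $\NSA(X,Y)$ into the double sum
\[
\NSA(X,Y) = \frac{1}{N^2}\sum_{1\le i,j\le N}\left|\frac{d(\vec{x_i},\vec{x_j})}{D_X}-\frac{d(\vec{y_i},\vec{y_j})}{D_Y}\right|,
\]
where $D_X = \max_{\vec{x}\in X} d(\vec{x},\vec{0})$ and $D_Y = \max_{\vec{y}\in Y} d(\vec{y},\vec{0})$, and then invoke the identity $|a-b|=|b-a|$ termwise.

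More explicitly, I would first substitute $\NSA(X,Y,i)$ into $\NSA(X,Y)=\frac{1}{N}\sum_i \NSA(X,Y,i)$ to obtain the displayed double sum above. Next, writing down the analogous expansion of $\NSA(Y,X)$ shows that its summand at index $(i,j)$ is $\bigl|\tfrac{d(\vec{y_i},\vec{y_j})}{D_Y}-\tfrac{d(\vec{x_i},\vec{x_j})}{D_X}\bigr|$, which equals the summand of $\NSA(X,Y)$ at the same index by symmetry of the absolute value. Summing over all $(i,j)$ yields the desired equality.

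There is no real obstacle here: the proof is essentially the observation that $|a-b|=|b-a|$, lifted through an averaged double sum. The only thing to be careful about is making the roles of the two normalizing constants $D_X$ and $D_Y$ fully explicit so that it is clear nothing depends on an ordering convention between $X$ and $Y$; once the double sum is written out, the conclusion is immediate.
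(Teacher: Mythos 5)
Your proposal is correct and follows exactly the paper's own argument: expand $\NSA(X,Y)$ into the normalized double sum over pairwise distances and apply $|a-b|=|b-a|$ termwise. Nothing further is needed.
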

\begin{proof}
    Let $X=\{x_1,\ldots,x_N\}$ and $Y=\{y_1,\ldots,y_N\}$. Let $\max_{x\in X}(d(x,0)) = D_X$ and $\max_{y\in Y}(d(y,0)) = D_Y$. Then
    \[\NSA(X,Y) = \frac{1}{N^2}\sum_{1\leq i,j\leq N}\left|\frac{d(x_i,x_j)}{D_X}-\frac{d(y_i,y_j)}{D_Y}\right|.\]
    Since $|a-b| = |b-a|$, \[\NSA(X,Y) = \frac{1}{N^2}\sum_{1\leq i,j\leq N}\left|\frac{d(y_i,y_j)}{D_Y}- \frac{d(x_i,x_j)}{D_X}\right|.\] Hence, $\NSA(X,Y) = \NSA(Y,X)$.
\end{proof}
\begin{lemma}[Non-negativity]
    \label{lem:nsa_pos}
    Let $X,Y$ be two point clouds of the same size, then $\NSA(X,Y)\geq 0$.
\end{lemma}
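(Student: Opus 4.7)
The plan is to observe that $\NSA(X,Y)$ is, by its very definition, an average of absolute values, and absolute values are non-negative. So this should be the shortest of the three pseudometric properties to verify: there is essentially nothing to do beyond unpacking the definition and invoking $|\cdot|\geq 0$ termwise.

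Concretely, first I would write out the definition
\[
\NSA(X,Y) = \frac{1}{N^2}\sum_{1\leq i,j\leq N}\left|\frac{d(\vec{x_i},\vec{x_j})}{D_X}-\frac{d(\vec{y_i},\vec{y_j})}{D_Y}\right|,
\]
where, following the notation of the preceding two lemmas, $D_X = \max_{\vec{x}\in X} d(\vec{x},\vec{0})$ and $D_Y = \max_{\vec{y}\in Y} d(\vec{y},\vec{0})$. Then I would note that each summand is an absolute value and hence is $\geq 0$, that the normalizers $D_X, D_Y > 0$ (assuming the point clouds are not degenerate at the origin, which is implicit in the definition for the expression to be well-defined), and that the prefactor $1/N^2$ is positive. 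A sum of non-negative reals scaled by a positive constant is non-negative, and the conclusion follows.

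The only subtlety worth flagging is the edge case $D_X = 0$ or $D_Y = 0$, which happens precisely when a point cloud consists solely of the origin. In that degenerate case all pairwise distances inside that cloud are also zero, and the natural convention $0/0 := 0$ (or simply excluding that case from the domain of $\NSA$) preserves the conclusion; I would mention this briefly and move on. There is no genuine obstacle here — the lemma is immediate from non-negativity of $|\cdot|$, and its role in the paper is simply to record one of the four axioms needed to conclude in aggregate that $\GlobalNSA$ is a pseudometric.
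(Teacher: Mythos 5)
Your proof is correct and is essentially identical to the paper's: both unpack the definition of $\NSA(X,Y)$ as an average of absolute values and conclude termwise from $|\cdot|\geq 0$. Your extra remark on the degenerate case $D_X=0$ or $D_Y=0$ is a reasonable (if optional) addition that the paper omits here, deferring it instead to a footnote in the continuity argument.
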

\begin{proof}
    Let $X=\{x_1,\ldots,x_N\}$ and $Y=\{y_1,\ldots,y_N\}$. Let $\max_{x\in X}(d(x,0)) = D_X$ and $\max_{y\in Y}(d(y,0)) = D_Y$. Then
    $$\NSA(X,Y) = \frac{1}{N^2}\sum_{1\leq i,j\leq N}\left|\frac{d(x_i,x_j)}{D_X}-\frac{d(y_i,y_j)}{D_Y}\right|.$$
    Since, for all $i,j$, $$\left|\frac{d(x_i,x_j)}{D_X}-\frac{d(y_i,y_j)}{D_Y}\right|\geq 0,$$ we get $\NSA(X,Y)\geq 0$.
\end{proof}
\begin{lemma}[Triangle inequality]
    \label{lem:nsa_tri}
    Let $X,Y$ and $Z$ be three point clouds of the same size, then $\NSA(X,Z) \leq \NSA(X,Y) + \NSA(Y,Z)$.
\end{lemma}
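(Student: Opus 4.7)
The plan is to reduce the global triangle inequality for $\NSA$ to the triangle inequality for the absolute value applied termwise. Recall that
\[
\NSA(X,Y) = \frac{1}{N^2}\sum_{1\leq i,j\leq N}\left|\frac{d(x_i,x_j)}{D_X}-\frac{d(y_i,y_j)}{D_Y}\right|,
\]
where $D_X = \max_{x\in X} d(x,\vec 0)$ (analogously for $D_Y$, $D_Z$). The key observation is that once distances are normalized, each summand is just the absolute difference of two real numbers, and those real numbers live on a common line regardless of which point cloud produced them.

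First I would fix an arbitrary index pair $(i,j)$ and insert $\pm \tfrac{d(y_i,y_j)}{D_Y}$ inside the absolute value of the $(X,Z)$ summand. Applying the scalar triangle inequality $|a-c| \le |a-b| + |b-c|$ with $a = \tfrac{d(x_i,x_j)}{D_X}$, $b = \tfrac{d(y_i,y_j)}{D_Y}$, $c = \tfrac{d(z_i,z_j)}{D_Z}$ yields the pointwise bound
\[
\left|\tfrac{d(x_i,x_j)}{D_X}-\tfrac{d(z_i,z_j)}{D_Z}\right|
\le \left|\tfrac{d(x_i,x_j)}{D_X}-\tfrac{d(y_i,y_j)}{D_Y}\right|
  + \left|\tfrac{d(y_i,y_j)}{D_Y}-\tfrac{d(z_i,z_j)}{D_Z}\right|.
\]
Next I would sum this inequality over all $1\le i,j\le N$ and divide by $N^2$. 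The left-hand side becomes $\NSA(X,Z)$ and the right-hand side separates into $\NSA(X,Y) + \NSA(Y,Z)$ by linearity of the sum, completing the proof.

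There is no real obstacle here: the normalization constants $D_X, D_Y, D_Z$ are fixed positive scalars (assuming none of the point clouds is concentrated at the origin, which would make the corresponding summand identically zero anyway), so the intermediate quantity $\tfrac{d(y_i,y_j)}{D_Y}$ is a well-defined real number and the scalar triangle inequality applies verbatim. The closest thing to a subtlety is that $Y$ need not lie in the same ambient space as $X$ or $Z$, but because we only compare normalized pairwise distances, this is immaterial — all three expressions reduce to real numbers before any comparison is made.
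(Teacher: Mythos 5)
Your proposal is correct and is essentially identical to the paper's proof: both insert $\pm\,d(y_i,y_j)/D_Y$ into each summand, apply the scalar triangle inequality termwise, and then sum over all index pairs. No gaps; your remark about $Y$ possibly living in a different ambient space is a nice touch but, as you note, immaterial.
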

\begin{proof}
    Let $X=\{x_1,\ldots,x_N\}$, $Y=\{y_1,\ldots,y_N\}$ and $Z = \{z_1,\ldots,z_N\}$. Let $\max_{x\in X}(d(x,0)) = D_X$, $\max_{y\in Y}(d(y,0)) = D_Y$ and $\max_{z\in Z}(d(z,0)) = D_Z$. Then $$\NSA(X,Z) = \frac{1}{N^2}\sum_{1\leq i,j\leq N}\left|\frac{d(x_i,x_j)}{D_X}-\frac{d(z_i,z_j)}{D_Z}\right|.$$ 
    For each $i,j$, add and subtract $\frac{d(y_i,y_j)}{D_Y}$, then $$\NSA(X,Z) = \frac{1}{N^2}\sum_{1\leq i,j\leq N}\left|\left(\frac{d(x_i,x_j)}{D_X}-\frac{d(y_i,y_j)}{D_Y}\right)+\left(\frac{d(y_i,y_j)}{D_Y}-\frac{d(z_i,z_j)}{D_Z}\right)\right|.$$ 
    Since $|a+b|\leq |a|+|b|$, $$\NSA(X,Z) \leq \frac{1}{N^2}\sum_{1\leq i,j\leq N}\left(\left|\frac{d(x_i,x_j)}{D_X}-\frac{d(y_i,y_j)}{D_Y}\right|+\left|\frac{d(y_i,y_j)}{D_Y}-\frac{d(z_i,z_j)}{D_Z}\right|\right).$$
    Hence, $$\NSA(X,Z) \leq \frac{1}{N^2}\sum_{1\leq i,j\leq N}\left|\frac{d(x_i,x_j)}{D_X}-\frac{d(y_i,y_j)}{D_Y}\right|+\frac{1}{N^2}\sum_{1\leq i,j\leq N}\left|\frac{d(y_i,y_j)}{D_Y}-\frac{d(z_i,z_j)}{D_Z}\right|,$$
    or $\NSA(X,Z) \leq \NSA(X,Y) + \NSA(Y,Z)$.
\end{proof}

From \Cref{lem:nsa_0}, \Cref{lem:nsa_sym}, \Cref{lem:nsa_pos} and \Cref{lem:nsa_tri}, we see that $\NSA$ is a psuedometric over the space of point clouds.

\section{Proofs for GNSA as a similarity metric}\label{sec:gnsa_sim_proofs}
\subsection{Invariance to Isotropic Scaling}
\begin{lemma}[Invariance to Isotropic scaling in the first coordinate]
    \label{lem:nsa_iso_1}
    Let $X$ and $Y$ be two point clouds of the same size. Let $c\in\mathbf{R}$ and $c\neq 0$, $X_c$ be the point cloud with each point in $X$ scaled by a factor of $c$, then $\NSA(X,Y) = \NSA(X_c,Y)$.
\end{lemma}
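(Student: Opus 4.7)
The plan is to expand $\NSA(X_c,Y)$ directly from the definition and exploit the positive homogeneity of the Euclidean distance, namely $d(c\vec{a},c\vec{b})=|c|\,d(\vec{a},\vec{b})$, which holds for any norm-induced distance. The key observation is that scaling $X$ by $c$ affects both the pairwise distance in the numerator and the maximum distance in the denominator by exactly the same factor $|c|$, so the normalized quantity $d(x_i,x_j)/D_X$ is invariant under isotropic scaling.

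Concretely, I would first write $X_c = \{c\vec{x}_1,\ldots,c\vec{x}_N\}$ and compute
\[
D_{X_c} \;=\; \max_{\vec{x}\in X_c} d(\vec{x},\vec{0}) \;=\; \max_{1\leq i\leq N} d(c\vec{x}_i,\vec{0}) \;=\; |c|\cdot \max_{1\leq i\leq N} d(\vec{x}_i,\vec{0}) \;=\; |c|\,D_X,
\]
which is well-defined and nonzero provided $c\neq 0$ (this is precisely where the hypothesis $c\neq 0$ is used, to avoid a $0/0$ indeterminate form in the normalization). Next, for each pair $(i,j)$ I would use homogeneity to write $d(c\vec{x}_i, c\vec{x}_j) = |c|\,d(\vec{x}_i,\vec{x}_j)$, so that
\[
\frac{d(c\vec{x}_i,c\vec{x}_j)}{D_{X_c}} \;=\; \frac{|c|\,d(\vec{x}_i,\vec{x}_j)}{|c|\,D_X} \;=\; \frac{d(\vec{x}_i,\vec{x}_j)}{D_X}.
\]

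Substituting this identity back into the definition of $\NSA$ then yields $\NSA(X_c,Y) = \NSA(X,Y)$ term by term, completing the proof. There is no real obstacle here: the argument is essentially a one-line cancellation once the homogeneity of $d$ and the scaling of the max are spelled out. The only subtlety worth flagging explicitly is why $c\neq 0$ is required, and why the absolute values cancel rather than the signed factors (because distances are always nonnegative). An analogous lemma for isotropic scaling in the second coordinate would follow immediately by the symmetry already proved in \Cref{lem:nsa_sym}.
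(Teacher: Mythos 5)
Your proof is correct and follows essentially the same route as the paper's: compute $D_{X_c}=|c|D_X$ from the homogeneity $d(c\vec{a},c\vec{b})=|c|\,d(\vec{a},\vec{b})$ and cancel the factor of $|c|$ in each normalized term of the sum. Your explicit remark on where $c\neq 0$ is used is a welcome addition but does not change the argument.
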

\begin{proof}
    Let $X=\{x_1,\ldots,x_N\}$ and $Y=\{y_1,\ldots,y_N\}$, then $X_c=\{c x_1,\ldots,c x_N\}$. Let $\max_{x\in X}(d(x,0)) = D_X$, $\max_{y\in Y}(d(y,0)) = D_Y$ and $\max_{x\in X_c}(d(x,0)) = D_{X_c}$. Since, each point in $X_c$ is the $c$ times each point in $X$. Then, we can write $D_{X_c} = \max_{x\in X}(d(cx,0))$. Since, for any two points, $x_1$ and $x_2$, $d(cx_1,cx_2) = |c|d(x_1,x_2)$, then $D_{X_c} = |c|\max_{x\in X}(d(x,0))$. Hence,  $D_{x_c} = |c|D_X$.
    From the definition of $\NSA$, 
    $$\NSA(X_c,Y) = \frac{1}{N^2}\sum_{1\leq i,j\leq N}\left|\left(\frac{d(cx_i,cx_j)}{D_{X_c}}-\frac{d(y_i,y_j)}{D_Y}\right)\right|.$$
    Again, using $d(cx_i,cx_j) = |c|d(x_i,x_j)$ and $D_{X_c} = |c|D_X$, 
    $$\NSA(X_c,Y) = \frac{1}{N^2}\sum_{1\leq i,j\leq N}\left|\left(\frac{|c|d(x_i,x_j)}{|c|D_{X}}-\frac{d(y_i,y_j)}{D_Y}\right)\right|.$$
    Simplifying, 
    $$\NSA(X_c,Y) = \frac{1}{N^2}\sum_{1\leq i,j\leq N}\left|\left(\frac{d(x_i,x_j)}{D_{X}}-\frac{d(y_i,y_j)}{D_Y}\right)\right|.$$
    Hence, $\NSA(X_c,Y) = \NSA(X,Y)$.
\end{proof}

By \Cref{lem:nsa_sym}, we can see that this gives us invariance to isotropic scaling in the second coordinate as well. 

\begin{lemma}[Invariance to Isotropic scaling in the second coordinate]
    \label{lem:nsa_iso_2}
    Let $X$ and $Y$ be two point clouds of the same size. Let $c\in\mathbf{R}$ and $c\neq 0$, $Y_c$ be the point cloud with each point in $Y$ scaled by a factor of $c$, then $\NSA(X,Y) = \NSA(X,Y_c)$.
\end{lemma}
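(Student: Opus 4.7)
The plan is to derive \Cref{lem:nsa_iso_2} as an immediate corollary of the two results already established in the preceding pages rather than redoing the direct algebraic calculation. The key observation is that Symmetry (\Cref{lem:nsa_sym}) lets us swap the two arguments of $\NSA$ at will, while \Cref{lem:nsa_iso_1} handles scaling in the first slot; composing these two facts should give scaling invariance in the second slot for free.

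Concretely, I would proceed in three short steps. First, apply \Cref{lem:nsa_sym} to write $\NSA(X, Y_c) = \NSA(Y_c, X)$. Second, apply \Cref{lem:nsa_iso_1} with the roles of the two point clouds exchanged: since $Y_c$ is obtained by scaling every point of $Y$ by $c \neq 0$, that lemma yields $\NSA(Y_c, X) = \NSA(Y, X)$. Third, apply \Cref{lem:nsa_sym} once more to conclude $\NSA(Y, X) = \NSA(X, Y)$. Chaining the three equalities gives $\NSA(X, Y_c) = \NSA(X, Y)$, which is the claim.

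There is essentially no obstacle here, since both ingredients have already been proved and neither requires any compatibility condition on $c$ beyond $c \neq 0$, which is exactly the hypothesis of \Cref{lem:nsa_iso_2}. The only thing to double-check is that the "swap" is legitimate, i.e.\ that \Cref{lem:nsa_iso_1} is stated in enough generality to be applied with $Y_c$ playing the role of the first argument and $X$ playing the role of the second; this is immediate since \Cref{lem:nsa_iso_1} is a statement about arbitrary pairs of point clouds of the same size, with the scaled cloud in the first position.

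If for some reason one preferred a self-contained proof that mirrors \Cref{lem:nsa_iso_1}, the direct route would be equally short: set $D_{Y_c} = \max_{y \in Y} d(cy, 0) = |c|\, D_Y$ using $d(cy_1, cy_2) = |c|\, d(y_1, y_2)$, substitute into the definition of $\NSA(X, Y_c)$, and cancel the common factor $|c|$ in the numerator and denominator of the second term inside the absolute value. I would favor the symmetry-based argument above for brevity and for highlighting how the already-proved properties interact.
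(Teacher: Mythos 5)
Your proposal is correct and is exactly the paper's argument: the paper derives the second-coordinate case from \Cref{lem:nsa_iso_1} via the symmetry property \Cref{lem:nsa_sym}, which is precisely the three-step chain you spell out. Your version just makes the implicit equalities explicit, and the alternative direct computation you sketch is also valid.
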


Combining \Cref{lem:nsa_iso_1} and $\Cref{lem:nsa_iso_2}$, we get the required lemma. 

\begin{lemma}[Invariance to Isotropic scaling]
    \label{lem:nsa_iso}
    Let $X$ and $Y$ be two point clouds of the same size. Let $c_1,c_2\in\mathbf{R}$, $c_1\neq 0$ and $c_2\neq 0$, $X_{c_1}$ be the point cloud with each point in $X$ scaled by a factor of $c_1$ and $Y_{c_2}$ be the point cloud with each point in $Y$ scaled by a factor of $c_2$, then $\NSA(X,Y) = \NSA(X_{c_1},Y_{c_2})$.
\end{lemma}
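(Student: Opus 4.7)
The plan is to prove this by a direct two-step reduction using the two single-coordinate invariance lemmas (\Cref{lem:nsa_iso_1} and \Cref{lem:nsa_iso_2}) already established immediately above. Since $\NSA$ takes two arguments and each of these lemmas scales only one argument at a time, we simply apply them in sequence, chaining the resulting equalities.

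Concretely, I would first apply \Cref{lem:nsa_iso_1} with scaling constant $c_1$ to the pair $(X,Y)$, obtaining
\begin{equation}
\NSA(X,Y) = \NSA(X_{c_1}, Y).
\end{equation}
Then I would apply \Cref{lem:nsa_iso_2} with scaling constant $c_2$ to the pair $(X_{c_1}, Y)$ (which is admissible because \Cref{lem:nsa_iso_2} imposes no hypothesis on the first argument beyond it being a point cloud of the same size), obtaining
\begin{equation}
\NSA(X_{c_1}, Y) = \NSA(X_{c_1}, Y_{c_2}).
\end{equation}
Chaining these two equalities yields $\NSA(X,Y) = \NSA(X_{c_1}, Y_{c_2})$, which is exactly the claim.

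There is no real obstacle in this argument; the only subtle point to double-check is that the hypotheses of the two prior lemmas are genuinely independent of each other so that \Cref{lem:nsa_iso_2} can be invoked with $X$ replaced by $X_{c_1}$. Since both lemmas required only that the argument being scaled be a point cloud of the same size and that the scaling constant be nonzero, and since $X_{c_1}$ is indeed a point cloud of the same size as $X$ (and thus the same size as $Y_{c_2}$), the chaining is valid. No further computation is needed, and the proof fits comfortably in a few lines.
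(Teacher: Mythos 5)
Your proof is correct and matches the paper's own argument: the paper likewise obtains this lemma by combining \Cref{lem:nsa_iso_1} and \Cref{lem:nsa_iso_2}, chaining $\NSA(X,Y)=\NSA(X_{c_1},Y)=\NSA(X_{c_1},Y_{c_2})$. Your explicit check that the hypotheses of the two lemmas are independent is a small but welcome addition the paper leaves implicit.
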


\subsection{Invariance to Orthogonal transformation}
\begin{lemma}[Invariance to Orthogonal transformation in the first coordinate]
    \label{lem:nsa_ortho_1}
    Let $X$ and $Y$ be two point clouds of the same size. Let $U$ be an orthogonal transformation on the point space of $X$, $X_{U}$ be the point cloud with each point in $X$ transformed using $U$, then $\NSA(X,Y) = \NSA(X_{U},Y)$.
\end{lemma}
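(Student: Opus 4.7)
The plan is to exploit the fact that an orthogonal transformation $U$ is an isometry: it preserves Euclidean distances between any two points and also preserves norms (distances from the origin), since $U$ is linear with $U\vec{0}=\vec{0}$. Both of these facts follow from $\|U\vec{v}\|^2 = \vec{v}^\top U^\top U \vec{v} = \vec{v}^\top \vec{v} = \|\vec{v}\|^2$. Because $\GNSA$ is built entirely from pairwise Euclidean distances within $X$ (and within $Y$) normalized by the farthest-from-origin distance within each point cloud, applying $U$ to every point of $X$ should leave each ingredient in the defining sum unchanged.

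Concretely, I would proceed in three short steps. First, write $X_U = \{U\vec{x}_1,\ldots,U\vec{x}_N\}$ and compute $D_{X_U} = \max_{\vec{x}\in X} d(U\vec{x},\vec{0}) = \max_{\vec{x}\in X} d(\vec{x},\vec{0}) = D_X$, using $\|U\vec{x}\| = \|\vec{x}\|$. Second, for each pair $i,j$, observe $d(U\vec{x}_i, U\vec{x}_j) = \|U(\vec{x}_i - \vec{x}_j)\| = \|\vec{x}_i - \vec{x}_j\| = d(\vec{x}_i, \vec{x}_j)$. Third, plug these two equalities into the definition
\[
\NSA(X_U,Y) = \frac{1}{N^2}\sum_{1\leq i,j\leq N}\left|\frac{d(U\vec{x}_i,U\vec{x}_j)}{D_{X_U}} - \frac{d(\vec{y}_i,\vec{y}_j)}{D_Y}\right|,
\]
which then collapses term-by-term to the definition of $\NSA(X,Y)$.

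There is essentially no obstacle here beyond bookkeeping, since orthogonality gives both isometry and norm-preservation directly; the only thing to be slightly careful about is making sure that the normalizer $D_{X_U}$ is written as a maximum over $X$ after the transformation (so that the $\max$ and the application of $U$ commute cleanly). By \Cref{lem:nsa_sym}, invariance in the second coordinate then follows immediately, and combining the two yields full invariance to orthogonal transformations applied independently to $X$ and $Y$, paralleling the structure used for isotropic scaling in \Cref{lem:nsa_iso_1}--\Cref{lem:nsa_iso}.
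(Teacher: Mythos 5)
Your proposal is correct and follows essentially the same route as the paper's proof: establish that the normalizer $D_{X_U}$ equals $D_X$ and that all pairwise distances are preserved, then substitute into the definition of $\NSA$. The only cosmetic difference is that you derive distance preservation directly from $\|U\vec{v}\|=\|\vec{v}\|$ applied to $\vec{x}_i-\vec{x}_j$, whereas the paper routes through the adjoint identity $d(U\vec{x}_1,\vec{x}_2)=d(\vec{x}_1,U^{T}\vec{x}_2)$; both rest on $U^{T}U=I$ and yield the same term-by-term collapse.
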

\begin{proof}
    Let $X=\{x_1,\ldots,x_N\}$ and $Y=\{y_1,\ldots,y_N\}$, then $X_{U}=\{U x_1,\ldots,U x_N\}$. Let $\max_{x\in X}(d(x,0)) = D_X$, $\max_{y\in Y}(d(y,0)) = D_Y$ and $\max_{x\in X_{U}}(d(x,0)) = D_{X_{U}}$. Since, each point in $X_{U}$ is the $U$ times each point in $X$. Then, we can write $D_{X_U} = \max_{x\in X}(d(Ux,0))$. Since, for any two points, $x_1$ and $x_2$, $d(Ux_1,x_2) = d(x_1,U^{T}x_2)$, and $U^{T}0 = 0$, then $D_{X_c} = \max_{x\in X}(d(x,0))$. Hence,  $D_{x_c} = D_X$.
    From the definition of $\NSA$, 
    $$\NSA(X_{U},Y) = \frac{1}{N^2}\sum_{1\leq i,j\leq N}\left|\left(\frac{d(Ux_i,Ux_j)}{D_{X_c}}-\frac{d(y_i,y_j)}{D_Y}\right)\right|.$$
    Again, using $d(Ux_i,Ux_j) = d(x_i,U^{T}Ux_j)$, and $U^{T}U = I$, 
    $$\NSA(X_U,Y) = \frac{1}{N^2}\sum_{1\leq i,j\leq N}\left|\left(\frac{d(x_i,x_j)}{D_{X}}-\frac{d(y_i,y_j)}{D_Y}\right)\right|.$$
    Hence, $\NSA(X_U,Y) = \NSA(X,Y)$.
\end{proof}
By \Cref{lem:nsa_sym}, we get invariance to orthogonal transformation in the second coordinate too and combining, we get the required lemma. 

\begin{lemma}[Invariance to Orthogonal transformation]
    \label{lem:nsa_ortho}
    Let $X$ and $Y$ be two point clouds of the same size. Let $U_1$ be an orthogonal transformation on the point space of $X$, $X_{U_1}$ be the point cloud with each point in $X$ transformed using $U_1$. Similarly, let $U_2$ be an orthogonal transformation on the point space of $Y$, $Y_{U_2}$ be the point cloud with each point in $Y$ transformed using $U_2$, then $\NSA(X,Y) = \NSA(X_{U_1},Y_{U_2})$.
\end{lemma}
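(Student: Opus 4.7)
The plan is to reduce the two-sided statement to two applications of the one-sided \Cref{lem:nsa_ortho_1}, bridged by the symmetry property \Cref{lem:nsa_sym}. The key observation is that \Cref{lem:nsa_ortho_1} already handles an orthogonal transformation applied to the first argument, so once we can move whichever argument we want into the first slot, we can strip off its orthogonal factor and then move it back.

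Concretely, I would proceed in three short steps. First, apply \Cref{lem:nsa_ortho_1} directly with point clouds $X$ and $Y$ and orthogonal transformation $U_1$ to obtain
\[
\NSA(X,Y) \;=\; \NSA(X_{U_1},Y).
\]
Second, derive the second-coordinate analogue: by \Cref{lem:nsa_sym} we have $\NSA(X_{U_1},Y_{U_2}) = \NSA(Y_{U_2},X_{U_1})$; applying \Cref{lem:nsa_ortho_1} with point clouds $Y$ and $X_{U_1}$ and transformation $U_2$ gives $\NSA(Y_{U_2},X_{U_1}) = \NSA(Y,X_{U_1})$; finally, applying \Cref{lem:nsa_sym} once more yields $\NSA(Y,X_{U_1}) = \NSA(X_{U_1},Y)$. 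Chaining these equalities produces
\[
\NSA(X_{U_1},Y_{U_2}) \;=\; \NSA(X_{U_1},Y).
\]
Third, combine the two displayed equalities by transitivity to conclude $\NSA(X,Y) = \NSA(X_{U_1},Y_{U_2})$, which is exactly the claim.

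There is no real obstacle here: the lemma is a routine corollary of \Cref{lem:nsa_ortho_1} and \Cref{lem:nsa_sym}, and the only care needed is to make sure that the orthogonal transformations $U_1$ and $U_2$ act on the ambient spaces of $X$ and $Y$ respectively (so that the compositions $X \mapsto X_{U_1}$ and $Y \mapsto Y_{U_2}$ are well-defined and independent), which is already assumed in the statement. If desired, one could instead give a direct proof mirroring that of \Cref{lem:nsa_ortho_1}: observe $D_{X_{U_1}} = D_X$ and $D_{Y_{U_2}} = D_Y$ because orthogonal maps fix the origin and preserve norms, and then use $d(U_1 x_i, U_1 x_j) = d(x_i,x_j)$ together with the analogous identity for $U_2$ to see that the summand in the definition of $\NSA$ is unchanged term by term. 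Either route gives the result with minimal additional work.
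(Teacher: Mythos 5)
Your proposal is correct and follows exactly the paper's route: strip the orthogonal factor from the first coordinate via \Cref{lem:nsa_ortho_1}, transfer the result to the second coordinate using the symmetry of $\NSA$ (\Cref{lem:nsa_sym}), and chain the equalities. The paper states this combination in one line; your version merely writes out the intermediate steps explicitly.
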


\subsection{Not Invariant under Invertible Linear Transformation (ILT)}\label{ILTProof}
We can easily see this with a counter example. Let $X = \{(1,0,0), (0,1,0),(0,0,1)\}$. Let $Y = \{(1,1,0),(1,0,1),(0,1,1)\}$. Define $A$ to be an invertible linear map such that $A(1,0,0) = (2,0,0)$, $A(0,1,0) = (0,1,0)$ and $A(0,0,1) = (0,0,1)$. Let $X_A$ be the point cloud with each point in $X$ transformed using $A$. Then from some calculation, we can see that $\NSA(X,Y) = 0$ but $\NSA(X_A,Y) = \frac{1}{9}$. Hence, we can see that $\NSA$ is not invariant under Invertible Linear Transformations.

\section{Proofs for GNSA as a Loss Function}\label{sec:gnsa_loss_proofs}
\subsection{Differentiability of GNSA}

To derive the sub-gradient $\frac{\partial \NSA(X,Y)}{\partial x_i}$, we first define the following notation. 
Let $\mathbb{I}:\{T,F\}\to\{0,1\}$ be a function that takes as input some condition and outputs $0$ or $1$ such that $\mathbb{I}(T) = 1$  and $\mathbb{I}(F) = 0$. 
Let $D_X = \max_{x\in X}(d(x,0))$. Then it is easy to see that $$\frac{\partial D_X}{\partial {x_i}} = \frac{x_i}{d(x_i,0)}\mathbb{I}\{\argmax_{x\in X}(d(x,0))=i\}.$$
Also, notice that $\frac{\partial d(x_i,x_j)}{x_i} = \frac{x_i-x_j}{d(x_i,x_j)},$
and that for $j\neq i$ and $k\neq i$, 
$\frac{\partial d(x_j,x_k)}{x_i} = 0.$

Let $m_{X}^{i,j} = \frac{d(x_i,x_j)}{D_X}$ and $m_{Y}^{i,j} = \frac{d(y_i,y_j)}{D_Y}$.
Hence, combining, we get $$\frac{\partial m_{X}^{j,k}}{\partial x_i} = \frac{\frac{\partial d(x_j,x_k)}{\partial x_i}}{D_X}-\frac{d(x_j,x_k)\frac{\partial D_X}{\partial x_i}}{(D_X)^2},\text{ where these partial differentials are derived above.}
$$
Lastly, notice that $\frac{\partial \left|m^{j,k}_{X}-m^{j,k}_{Y}\right|}{\partial m_{X}^{j,k}} = (-1)^{\mathbb{I}(m^{j,k}_{Y}>m^{j,k}_{X})}.$

Combining all the above, we get \begin{align*}
    \frac{\partial \NSA(X,Y)}{\partial x_i} &= \frac{1}{N^2}\sum_{1\leq j,k\leq N}\frac{\partial \left|m^{j,k}_{X}-m^{j,k}_{Y}\right|}{\partial m_{X}^{j,k}}\frac{\partial m_{X}^{j,k}}{\partial x_i} 
    &= \frac{1}{N^2}\sum_{1\leq j,k\leq N}(-1)^{\mathbb{I}(m^{j,k}_{Y}>m^{j,k}_{X})}\frac{\partial m_{X}^{j,k}}{\partial x_i}.
\end{align*}

Sub-gradients $\frac{\partial \NSA(X,Y)}{\partial y_i}$ can be similarly derived.

\subsection{Continuity}
We prove the continuity of $\NSA$ by showing that $\NSA$ is a composition of continuous functions \citep[Theorem~5.10]{carothers}. Let $$f_{i,j}(X,Y) = \left|\frac{d(x_i,x_j)}{\max_{x\in X}(d(x,0))}-\frac{d(y_i,y_j)}{\max_{y\in Y}(d(y,0))}\right|.$$ $$\text{It's easy to see that}\qquad \NSA(X,Y) = \frac{1}{N^2}\sum_{1\leq i,j\leq N}f_{i,j}(X,Y).$$
Since a sum of continuous functions is continuous, all we need to show is that $f_{i,j}(X,Y)$ is continuous for all $1\leq i,j\leq N$. This is easy to see because $f_{i,j}(X,Y)$ can be seen as a composition of $d(\cdot,\cdot)$ and $\max(\cdot)$, along with the algebraic operations of subtraction, division and taking the absolute value. All the above operations are continuous everywhere\footnote{Note that division is not continuous when the denominator goes to zero. Since, the denominators are $\max_{x\in X}(d(x,0))$ and $\max_{y\in Y}(d(y,0))$, as long as neither of the representations is all zero, the denominator does not go to zero.}, we get that their composition is also continuous everywhere. Hence, $\NSA(\cdot,\cdot)$ is continuous.

\subsection{Motivation for GNSA} \label{sec:motivate}
The global part of NSA is inspired by Representational Similarity Matrix (RSM)-Based Measures \citep{simsurvey}. RSMs describe the similarity of the representation of each instance to all other instances within a given representation. The elements of the RSM $S_{i,j}$ can be defined using a valid distance or similarity function, formally: $S_{i,j}:=s(R_i, R_j)$.

The motivation behind RSM-based measures is to capture the relational structure in the data. By comparing the pairwise similarities of instances within two different representations, RSM-based measures provide insights into how the internal structure of representations changes. RSMs are invariant to orthogonal transformations, which means they retain their structure under rotations and reflections. When using euclidean distance as a similarity function, the RSM is invariant under euclidean group $E(n)$, assuming translations. 

One might think that the most intuitive approach to compare RSMs is to use a matrix norm. This method works well when the data is similarly scaled, as it is zero when the data is identical and larger when the data is dissimilar. However, it is not invariant to scaling, which can cause issues if the data is not similarly scaled. To ensure the data is on a similar scale, we first normalize the data using the normalization function: $$R_{norm}=\frac{R}{2*max(||x_i||)}$$ 
Where $x_i$ is the point farthest from the origin in the space. One of the primary reasons for our metric's effectiveness is this normalization scheme. Although other normalizations are possible, NSA performs best when we normalize the space $X$ by twice the distance between the origin of the space and the farthest point in the space. This ensures that all the points are within a unit sphere in $N$ dimensions where $N$ is the dimensionality of the space. The origin of the space is the point in $\mathbb{R}^N$ around which all points belonging to $X$ are centered. 

Most popular similarity measures, such as CKA\citep{DBLP:conf/icml/Kornblith0LH19} and RTD\citep{RTDPaper} also generate a similarity matrix. GNSA is simple in its implementation, but our choice of distance function and unique normalization scheme make GNSA a pseudometric, a similarity index and a valid, differentiable loss function.

\subsection{Improving Robustness of NSA}

Formally, NSA is normalized by the point that is furthest away from the origin for the representation space. In practice, NSA yields more robust results and exhibits reduced susceptibility to outliers when distances are normalized relative to a quantile of the distances from the origin (e.g., 0.98 quantile). This quantile-based approach ensures that the distances are adjusted proportionally, taking into account the spread of values among the points with respect to their distances from the origin. It also ensures that any outliers do not affect the rescaling of the point cloud to the unit space.

\section{Proofs for LNSA as a premetric}\label{sec:lnsa_pre_proofs}
\begin{lemma}[Identity]
    \label{lem:lid_nsa_0}
    Let $X$ be a point cloud over some space, then $\MLNSA(X,X)=0$.
\end{lemma}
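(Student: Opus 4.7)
The plan is to prove this by direct unfolding of the definitions; the identity should fall out almost immediately because $\MLNSA$ is defined as a symmetrized sum of squared discrepancies between $\LID$ values, and when both arguments are the same point cloud these discrepancies vanish term by term.

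First I would invoke the definition
\[
\MLNSA(X,X) \;=\; \LNSA_X(X) + \LNSA_X(X),
\]
so it suffices to show $\LNSA_X(X) = 0$. Writing out this single quantity,
\[
\LNSA_X(X) \;=\; \frac{1}{N}\sum_{i=1}^{N}\left(\LID(i,X)^{-1} - \SLID_X(i,X)^{-1}\right)^2,
\]
the key observation is that the second term reduces to the first: by the definition $\SLID_X(i,Y) = \DA{i}{\KNN{k}{X}(i)}{Y}$, we have $\SLID_X(i,X) = \DA{i}{\KNN{k}{X}(i)}{X}$, which is exactly the definition of $\LID(i,X)$ given in the paper. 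Hence every summand equals $(\LID(i,X)^{-1} - \LID(i,X)^{-1})^2 = 0$.

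Therefore $\LNSA_X(X) = 0$, and by symmetrization $\MLNSA(X,X) = 0$, completing the proof. There is essentially no obstacle here; the only subtlety worth flagging is making explicit that $\SLID_X(i,X)$ collapses to $\LID(i,X)$ because the neighborhood index set is the same in both factors, which is exactly the motivation behind introducing the approximate dimensionality $\mathsf{AD}$ as a generalization of $\LID$.
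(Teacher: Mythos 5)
Your proof is correct and follows exactly the same route as the paper's: reduce $\MLNSA(X,X)$ to $2\LNSA_X(X)$, then observe that $\SLID_X(i,X) = \DA{i}{\KNN{k}{X}(i)}{X} = \LID(i,X)$ so each squared summand vanishes. Nothing is missing.
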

\begin{proof}
    Let $X=\{\vec{x}_1,\ldots,\vec{x}_N\}$. Hence, we see that
    $$\MLNSA(X,X) = \LNSA_X(X)+\LNSA_X(X) = 2\LNSA_X(X).$$
    We show that $\LNSA_X(X) = 0$. We see that $$\LNSA_X(X) = \frac{1}{N}\sum_{1\leq i\leq N}\left(\LID(i,X)^{-1}-\SLID_X(i,X)^{-1}\right)^2.$$
    Note that 
    $$\SLID_X(i,X) = \DA{i}{\KNN{k}{X}(i)}{X} = \LID(i,X).$$
    Hence we get $$\LNSA_X(X) = \frac{1}{N}\sum_{1\leq i\leq N}\left(\LID(i,X)^{-1}-\LID(i,X)^{-1}\right)^2 = 0.$$ 
    And hence we have $$\MLNSA(X,X) = 2\LNSA_X(X) = 0.$$
\end{proof}
\begin{lemma}[Symmetry]
    \label{lem:lid_nsa_sym}
    Let $X,Y$ be two point clouds of the same size, then $\MLNSA(X,Y) = \MLNSA(Y,X)$.
\end{lemma}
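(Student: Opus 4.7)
The plan is to unfold the definition of $\MLNSA$ given in the excerpt and observe that symmetry follows immediately from the commutativity of addition, since $\MLNSA$ is defined as a symmetric sum of the two (asymmetric) expressions $\LNSA_X(Y)$ and $\LNSA_Y(X)$.

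Concretely, I would first write
\[
\MLNSA(X,Y) = \LNSA_X(Y) + \LNSA_Y(X),
\]
using the definition from Section~3.1.2, and then
\[
\MLNSA(Y,X) = \LNSA_Y(X) + \LNSA_X(Y),
\]
by applying the same definition with the roles of the two arguments swapped. Since addition of real numbers is commutative, the two right-hand sides are equal, and the lemma follows in one line. No properties of $\LID$, $\SLID$, or the nearest-neighbour operator $\KNN{k}{\cdot}(\cdot)$ are needed beyond the fact that $\LNSA_Y(X)$ is a well-defined real number (which is evident from its definition as an average of squared differences of reciprocal LID estimates).

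There is no real obstacle here; the only subtlety worth noting is that the individual terms $\LNSA_X(Y)$ and $\LNSA_Y(X)$ are themselves \emph{not} symmetric in their arguments (the $k$-nearest-neighbour sets used in computing $\SLID$ depend on which point cloud is treated as the reference), which is exactly the reason the paper defines $\MLNSA$ as the symmetrised sum in the first place. The lemma simply confirms that this symmetrisation achieves its intended effect. I would therefore keep the proof to two or three lines, matching the style of Lemma~\ref{lem:lid_nsa_0}.
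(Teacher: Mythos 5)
Your proof is correct and matches the paper's own argument exactly: both unfold the definition $\MLNSA(X,Y) = \LNSA_X(Y) + \LNSA_Y(X)$ and conclude by commutativity of addition. Your side remark that the individual summands $\LNSA_X(Y)$ and $\LNSA_Y(X)$ are not themselves symmetric is accurate and explains why the symmetrised sum is the right definition, but it is not needed for the proof itself.
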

\begin{proof}
    Let $X=\{x_1,\ldots,x_N\}$ and $Y=\{y_1,\ldots,y_N\}$. Then
    $$\MLNSA(X,Y) = \LNSA_X(Y)+\LNSA_Y(X) = \MLNSA(Y,X).$$ Hence, $$\MLNSA(X,Y) = \MLNSA(Y,X).$$
\end{proof}
\begin{lemma}[Non-negativity]
    \label{lem:lid_nsa_pos}
    Let $X,Y$ be two point clouds of the same size, then $\MLNSA(X,Y)\geq 0$.
\end{lemma}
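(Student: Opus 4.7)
The plan is to leverage the fact that $\MLNSA$ is defined as a sum of two $\LNSA$ terms, each of which is itself an average of squared quantities; since squares of real numbers are non-negative, everything in sight is non-negative.

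First I would unfold the definition $\MLNSA(X,Y) = \LNSA_X(Y) + \LNSA_Y(X)$ and argue that it suffices to prove $\LNSA_Y(X) \geq 0$ for arbitrary point clouds $X, Y$ of the same size (by symmetry of the argument, the bound for $\LNSA_X(Y)$ follows identically).

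Next I would inspect
\[
\LNSA_Y(X) = N^{-1}\sum_{1\leq i\leq N}\bigl(\LID(i,Y)^{-1}-\SLID_Y(i,X)^{-1}\bigr)^2
\]
and observe that each summand is of the form $a^2$ with $a \in \mathbb{R}$, hence $\geq 0$. (Implicit here is that $\LID(i,Y)$ and $\SLID_Y(i,X)$, though defined as reciprocals of sums of logarithms of distance ratios, give rise to real numbers, so taking their reciprocals and then the difference yields a real value whose square is non-negative.) Therefore the sum is non-negative, and dividing by $N>0$ preserves this, giving $\LNSA_Y(X) \geq 0$. The same reasoning applied to $\LNSA_X(Y)$ yields $\LNSA_X(Y)\geq 0$, and summing two non-negative reals gives $\MLNSA(X,Y) \geq 0$.

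There is no real obstacle here; the only subtlety worth flagging in passing is well-definedness of the $\LID^{-1}$ and $\SLID^{-1}$ quantities (the sums of log-ratios in their denominators should not vanish, i.e., the nearest neighbors should be distinct from the point itself with non-degenerate distance ratios), but given the standing assumption that these quantities are well-defined as used throughout Section~3.1, the proof is a one-line consequence of $a^2 \geq 0$.
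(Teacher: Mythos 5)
Your proposal is correct and follows essentially the same route as the paper's proof: expand $\MLNSA$ into its two $\LNSA$ summands, note each is an average of squared real quantities and hence non-negative, and conclude. The aside on well-definedness of the reciprocal quantities is a reasonable extra precaution but does not change the argument.
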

\begin{proof}
    Let $X=\{x_1,\ldots,x_N\}$ and $Y=\{y_1,\ldots,y_N\}$. Note that $$\MLNSA(X,Y) = \LNSA_X(Y)+\LNSA_Y(X),$$
    and $$\LNSA_X(Y) = \frac{1}{N}\sum_{1\leq i\leq N}\left(\LID(i,X)^{-1}-\SLID_X(i,Y)^{-1}\right)^2.$$ 
    Since, for all $i$, $\left(\LID(i,X)^{-1}-\SLID_X(i,Y)^{-1}\right)^2\geq 0$, hence we get $\LNSA_X(Y)\geq 0$. Similarly, $\LNSA_Y(X)\geq 0$ and hence, $\MLNSA(X,Y)\geq 0$.
\end{proof}

From \Cref{lem:lid_nsa_0}, \Cref{lem:lid_nsa_sym} and \Cref{lem:lid_nsa_pos}, we see that $\MLNSA$ is a premetric over the space of point clouds.

\section{Proofs for LNSA as a similarity metric}\label{sec:lnsa_sim_proofs}
\subsection{Invariance to Isotropic Scaling}\label{sec:lid_scaling}
\begin{lemma}[Invariance to Isotropic scaling in the first coordinate]
    \label{lem:lid_nsa_iso_1}
    Let $X$ and $Y$ be two point clouds of the same size. Let $c\in\mathbf{R}$ and $c\neq 0$, $X_c$ be the point cloud with each point in $X$ scaled by a factor of $c$, then $\MLNSA(X,Y) = \MLNSA(X_c,Y)$.
\end{lemma}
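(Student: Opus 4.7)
The plan is to reduce the claim to a single structural observation: the approximate dimensionality $\DA{i}{(i_1,\ldots,i_k)}{X}$ is invariant under isotropic rescaling of the point cloud $X$, because it depends only on ratios of pairwise distances from $\vec{x}_i$. Once this is established, all quantities that feed into $\LNSA_X(Y)$ and $\LNSA_Y(X)$ are unchanged when $X$ is replaced by $X_c$, so the lemma follows from the definition $\MLNSA(X,Y)=\LNSA_X(Y)+\LNSA_Y(X)$.

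First I would unfold the defining expression
\[
\DA{i}{(i_1,\ldots,i_k)}{X_c}=-\left(\frac{1}{k}\sum_{j=1}^{k}\log\frac{d(c\vec{x}_i,c\vec{x}_{i_j})}{d(c\vec{x}_i,c\vec{x}_{i_k})}\right)^{-1},
\]
and use the fact that $d$ is assumed to be a metric satisfying $d(c\vec{u},c\vec{v})=|c|\,d(\vec{u},\vec{v})$ (which, for Euclidean distance, is immediate, and for any homogeneous differentiable distance works the same way). The $|c|$ factors then cancel inside every logarithm, giving $\DA{i}{(i_1,\ldots,i_k)}{X_c}=\DA{i}{(i_1,\ldots,i_k)}{X}$ for any choice of index tuple.

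Next I would observe that rescaling by a nonzero $c$ preserves the ordering of distances from $\vec{x}_i$ to all other points, so $\KNN{k}{X_c}(i)=\KNN{k}{X}(i)$. Combining this with the previous step yields
\[
\LID(i,X_c)=\DA{i}{\KNN{k}{X_c}(i)}{X_c}=\DA{i}{\KNN{k}{X}(i)}{X}=\LID(i,X),
\]
and the same cancellation applied to a fixed index tuple coming from $Y$ gives $\SLID_Y(i,X_c)=\SLID_Y(i,X)$, while the identity $\KNN{k}{X_c}(i)=\KNN{k}{X}(i)$ gives $\SLID_{X_c}(i,Y)=\SLID_X(i,Y)$. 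Substituting these four equalities into the summands of $\LNSA_{X_c}(Y)$ and $\LNSA_Y(X_c)$ shows that each term equals its unscaled counterpart, and hence $\MLNSA(X_c,Y)=\MLNSA(X,Y)$.

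There is no real obstacle here; the argument is essentially the same pattern used for $\GNSA$ in \Cref{lem:nsa_iso_1}, with the simplification that $\LNSA$ never references a global scale like $D_X$, only ratios of distances to the $k$-th nearest neighbor. The only point that deserves explicit mention in the written-up proof is the invariance of $\KNN{k}{X}(i)$ under rescaling, since without it one could not conclude that $\LID(i,X_c)$ and $\SLID_{X_c}(i,Y)$ involve the same index tuples as in the unscaled setting.
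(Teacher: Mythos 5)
Your proposal is correct and follows essentially the same route as the paper's proof: show that $\mathsf{AD}$ depends only on ratios of distances so the $|c|$ factors cancel, note that $\KNN{k}{X_c}(i)=\KNN{k}{X}(i)$ since rescaling preserves distance orderings, and conclude that every term of $\MLNSA$ is unchanged. If anything, you are slightly more careful than the paper in writing $|c|$ for possibly negative $c$ and in explicitly tracking the $\SLID_Y(i,X_c)$ term from $\LNSA_Y(X_c)$, which the paper leaves implicit.
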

\begin{proof}
    Let $X=\{\vec{x}_1,\ldots,\vec{x}_N\}$ and $Y=\{\vec{y}_1,\ldots,\vec{y}_N\}$, then $X_c=\{c \vec{x}_1,\ldots,c \vec{x}_N\}$. We show that $\DA{i}{(i_1,\ldots,i_k)}{X_c} = \DA{i}{(i_1,\ldots,i_k)}{X}$. Note that $$\DA{i}{(i_1,\ldots,i_k)}{X_c} = -\left(\frac{1}{k}\sum_{j=1}^{k} \log\left(\frac{d(c\vec{x}_i,c\vec{x}_{i_j})}{d(c\vec{x}_i,c\vec{x}_{i_k})}\right)\right)^{-1}.$$ Since, $\frac{d(c\vec{x}_i,c\vec{x}_{i_j})}{d(c\vec{x}_i,c\vec{x}_{i_k})}=\frac{cd(\vec{x}_i,\vec{x}_{i_j})}{cd(\vec{x}_i,\vec{x}_{i_k})}=\frac{d(\vec{x}_i,\vec{x}_{i_j})}{d(\vec{x}_i,\vec{x}_{i_k})}$. Hence, $$\DA{i}{(i_1,\ldots,i_k)}{X_c} = -\left(\frac{1}{k}\sum_{j=1}^{k} \log\left(\frac{d(\vec{x}_i,\vec{x}_{i_j})}{d(\vec{x}_i,\vec{x}_{i_k})}\right)\right)^{-1}.$$ Hence, $\DA{i}{(i_1,\ldots,i_k)}{X_c} = \DA{i}{(i_1,\ldots,i_k)}{X}$. Hence, we get $\LID(\cdot,X_c) = \LID(\cdot,X)$. Also note that since $\KNN{k}{X_c}(\cdot) = \KNN{k}{X}(\cdot)$, we have that $\SLID_{X_c}(\cdot,Y) = \SLID_X(\cdot,Y)$. Hence, we get $\MLNSA(X,Y) = \MLNSA(X_c,Y)$.
\end{proof}

By \Cref{lem:lid_nsa_sym}, we can see that this gives us invariance to isotropic scaling in the second coordinate as well. 

\begin{lemma}[Invariance to Isotropic scaling in the second coordinate]
    \label{lem:lid_nsa_iso_2}
    Let $X$ and $Y$ be two point clouds of the same size. Let $c\in\mathbf{R}$ and $c\neq 0$, $Y_c$ be the point cloud with each point in $Y$ scaled by a factor of $c$, then $\MLNSA(X,Y) = \MLNSA(X,Y_c)$.
\end{lemma}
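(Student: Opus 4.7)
The plan is to derive this lemma as an immediate consequence of the two results already established: the symmetry of $\MLNSA$ (\Cref{lem:lid_nsa_sym}) and the invariance to isotropic scaling in the first coordinate (\Cref{lem:lid_nsa_iso_1}). Since the symmetry lemma lets us freely swap the arguments of $\MLNSA$, scaling the second argument can be reduced to scaling the first argument, which is already handled.

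Concretely, I would first apply symmetry to obtain $\MLNSA(X, Y_c) = \MLNSA(Y_c, X)$. Then, since $Y_c$ is precisely the point cloud obtained from $Y$ by isotropic scaling with factor $c \neq 0$, \Cref{lem:lid_nsa_iso_1} applied with the roles of $X$ and $Y$ interchanged yields $\MLNSA(Y_c, X) = \MLNSA(Y, X)$. A final application of symmetry gives $\MLNSA(Y, X) = \MLNSA(X, Y)$, and chaining these three equalities closes the argument.

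There is no substantive obstacle here because all the real work was done in the proof of \Cref{lem:lid_nsa_iso_1}, where the key observation was that the ratios $d(c\vec{x}_i, c\vec{x}_{i_j}) / d(c\vec{x}_i, c\vec{x}_{i_k})$ collapse to $d(\vec{x}_i, \vec{x}_{i_j}) / d(\vec{x}_i, \vec{x}_{i_k})$ so that $\LID$, $\SLID$, and the nearest neighbor sets are all unchanged by the scaling. The only thing to verify is that invoking symmetry is legitimate, which is immediate from \Cref{lem:lid_nsa_sym} since that lemma has no assumptions beyond $X$ and $Y$ being point clouds of equal size; $Y_c$ and $X$ trivially satisfy this.
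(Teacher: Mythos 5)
Your proof is correct and follows exactly the route the paper takes: the paper also derives the second-coordinate case by combining \Cref{lem:lid_nsa_sym} with \Cref{lem:lid_nsa_iso_1}, merely stating the reduction in one line where you spell out the chain $\MLNSA(X,Y_c)=\MLNSA(Y_c,X)=\MLNSA(Y,X)=\MLNSA(X,Y)$. No gaps.
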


Combining \Cref{lem:lid_nsa_iso_1} and $\Cref{lem:lid_nsa_iso_2}$, we get the required lemma. 

\begin{lemma}[Invariance to Isotropic scaling]
    \label{lem:lid_nsa_iso}
    Let $X$ and $Y$ be two point clouds of the same size. Let $c_1,c_2\in\mathbf{R}$, $c_1\neq 0$ and $c_2\neq 0$, $X_{c_1}$ be the point cloud with each point in $X$ scaled by a factor of $c_1$ and $Y_{c_2}$ be the point cloud with each point in $Y$ scaled by a factor of $c_2$, then $\MLNSA(X,Y) = \MLNSA(X_{c_1},Y_{c_2})$.
\end{lemma}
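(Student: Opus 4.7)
The plan is to derive this lemma as an immediate corollary by chaining the two single-coordinate invariance lemmas (\Cref{lem:lid_nsa_iso_1} and \Cref{lem:lid_nsa_iso_2}) that have just been established, so no new analysis of the $\LID$ or $\DA{}{}{}$ quantities is required.

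Concretely, I would first apply \Cref{lem:lid_nsa_iso_1} with the scalar $c_1$ to conclude $\MLNSA(X,Y)=\MLNSA(X_{c_1},Y)$; this handles the rescaling in the first slot and uses only that $c_1\neq 0$. Then, treating $X_{c_1}$ as a fixed point cloud and applying \Cref{lem:lid_nsa_iso_2} with the scalar $c_2$ yields $\MLNSA(X_{c_1},Y)=\MLNSA(X_{c_1},Y_{c_2})$, which handles the rescaling in the second slot and uses only that $c_2\neq 0$. Combining the two equalities gives $\MLNSA(X,Y)=\MLNSA(X_{c_1},Y_{c_2})$, as desired.

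Since both intermediate lemmas are already in hand, there is no real obstacle here; the only thing worth double-checking is that the hypothesis $c_2\neq 0$ is genuinely needed when we invoke \Cref{lem:lid_nsa_iso_2} on the pair $(X_{c_1},Y)$ rather than $(X,Y)$, but that lemma is stated for arbitrary nonzero scalars applied to an arbitrary point cloud in the second coordinate, so it applies verbatim. The proof will therefore be a two-line chain of equalities with references to the two preceding lemmas.
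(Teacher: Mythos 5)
Your proposal is correct and matches the paper's argument exactly: the paper likewise obtains this lemma by combining \Cref{lem:lid_nsa_iso_1} and \Cref{lem:lid_nsa_iso_2}, chaining $\MLNSA(X,Y)=\MLNSA(X_{c_1},Y)=\MLNSA(X_{c_1},Y_{c_2})$.
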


\subsection{Invariance to Orthogonal transformation}\label{sec:lid_ortho}
\begin{lemma}[Invariance to Orthogonal transformation in the first coordinate]
    \label{lem:lid_nsa_ortho_1}
    Let $X$ and $Y$ be two point clouds of the same size. Let $U$ be an orthogonal transformation on the point space of $X$, $X_{U}$ be the point cloud with each point in $X$ transformed using $U$, then $\MLNSA(X,Y) = \MLNSA(X_{U},Y)$.
\end{lemma}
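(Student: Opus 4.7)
The plan is to mirror the structure of \Cref{lem:lid_nsa_iso_1} (invariance to isotropic scaling in the first coordinate), with the key simplification that an orthogonal transformation preserves Euclidean distances exactly rather than merely up to a scalar multiple. The central fact I will invoke is that for any two points $\vec{x}_i, \vec{x}_j$ and any orthogonal $U$,
\begin{equation}
    d(U\vec{x}_i, U\vec{x}_j) = d(\vec{x}_i, \vec{x}_j),
\end{equation}
which follows from $\|U\vec{v}\|^2 = \vec{v}^{\top}U^{\top}U\vec{v} = \|\vec{v}\|^2$.

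From this distance preservation I will deduce two things in order. First, for every $k$-tuple $(i_1,\ldots,i_k)$ of indices, the ratios $d(\vec{x}_i,\vec{x}_{i_j})/d(\vec{x}_i,\vec{x}_{i_k})$ that appear inside $\DA{i}{(i_1,\ldots,i_k)}{X}$ are unchanged when $X$ is replaced by $X_U$, so
\begin{equation}
    \DA{i}{(i_1,\ldots,i_k)}{X_U} = \DA{i}{(i_1,\ldots,i_k)}{X}.
\end{equation}
Second, because the pairwise distance matrix of $X_U$ equals that of $X$, the $k$-nearest-neighbour tuples coincide: $\KNN{k}{X_U}(i) = \KNN{k}{X}(i)$. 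Combining these two facts gives $\LID(i, X_U) = \LID(i, X)$ and also $\SLID_{X_U}(i, Y) = \DA{i}{\KNN{k}{X_U}(i)}{Y} = \DA{i}{\KNN{k}{X}(i)}{Y} = \SLID_X(i, Y)$, and symmetrically $\SLID_Y(i, X_U) = \DA{i}{\KNN{k}{Y}(i)}{X_U} = \DA{i}{\KNN{k}{Y}(i)}{X} = \SLID_Y(i, X)$ (the last using the first fact, since $\KNN{k}{Y}(i)$ does not involve $X$).

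Plugging these equalities into the definitions gives $\LNSA_{X_U}(Y) = \LNSA_X(Y)$ and $\LNSA_Y(X_U) = \LNSA_Y(X)$, whence $\MLNSA(X_U, Y) = \LNSA_{X_U}(Y) + \LNSA_Y(X_U) = \LNSA_X(Y) + \LNSA_Y(X) = \MLNSA(X, Y)$.

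There is no real obstacle here: the entire argument hinges on isometry of $U$, after which every quantity used to build $\MLNSA$ (pairwise distances, their ratios, the $k$-NN index sets, and hence $\LID$ and $\SLID$) is manifestly invariant. The only thing to be careful about is noting explicitly that the $\SLID_Y(i, X_U)$ term still needs treatment (even though $Y$ is untouched), because $X_U$ appears inside $\DA{\cdot}{\cdot}{X_U}$; this is handled by exactly the same invariance of the approximate-dimensionality ratios used above.
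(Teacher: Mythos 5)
Your proposal is correct and follows essentially the same route as the paper's proof: both rest on the isometry $d(U\vec{x}_i,U\vec{x}_j)=d(\vec{x}_i,\vec{x}_j)$ (the paper phrases it via $d(U\vec{x}_i,U\vec{x}_j)=d(U^{T}U\vec{x}_i,\vec{x}_j)$), from which the $\mathsf{AD}$ ratios, the $k$-NN index sets, and hence $\LID$ and $\SLID$ are all unchanged. If anything, your treatment is slightly more explicit than the paper's in spelling out why the $\SLID_Y(i,X_U)$ term in $\LNSA_Y(X_U)$ is also invariant.
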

\begin{proof}
    Let $X=\{\vec{x}_1,\ldots,\vec{x}_N\}$ and $Y=\{\vec{y}_1,\ldots,\vec{y}_N\}$, then $X_{U}=\{U \vec{x}_1,\ldots,U \vec{x}_N\}$. We show that $\DA{i}{(i_1,\ldots,i_k)}{X_U} = \DA{i}{(i_1,\ldots,i_k)}{X}$. Note that $$\DA{i}{(i_1,\ldots,i_k)}{X_c} = -\left(\frac{1}{k}\sum_{j=1}^{k} \log\left(\frac{d(U\vec{x}_i,U\vec{x}_{i_j})}{d(U\vec{x}_i,U\vec{x}_{i_k})}\right)\right)^{-1}.$$ Since, $\frac{d(U\vec{x}_i,U\vec{x}_{i_j})}{d(U\vec{x}_i,U\vec{x}_{i_k})}=\frac{d(U^{T}U\vec{x}_i,\vec{x}_{i_j})}{d(U^TU\vec{x}_i,\vec{x}_{i_k})}=\frac{d(\vec{x}_i,\vec{x}_{i_j})}{d(\vec{x}_i,\vec{x}_{i_k})}$. Hence, $$\DA{i}{(i_1,\ldots,i_k)}{X_c} = -\left(\frac{1}{k}\sum_{j=1}^{k} \log\left(\frac{d(\vec{x}_i,\vec{x}_{i_j})}{d(\vec{x}_i,\vec{x}_{i_k})}\right)\right)^{-1}.$$ Hence, $\DA{i}{(i_1,\ldots,i_k)}{X_U} = \DA{i}{(i_1,\ldots,i_k)}{X}$. Hence, we get $\LID(\cdot,X_U) = \LID(\cdot,X)$. Also note that since $\KNN{k}{X_U}(\cdot) = \KNN{k}{X}(\cdot)$, we have that $\SLID_{X_U}(\cdot,Y) = \SLID_X(\cdot,Y)$. Hence, we get $\MLNSA(X,Y) = \MLNSA(X_U,Y)$.
\end{proof}
By \Cref{lem:lid_nsa_sym}, we get invariance to orthogonal transformation in the second coordinate too and combining, we get the required lemma. 

\begin{lemma}[Invariance to Orthogonal transformation]
    \label{lem:lid_nsa_ortho}
    Let $X$ and $Y$ be two point clouds of the same size. Let $U_1$ be an orthogonal transformation on the point space of $X$, $X_{U_1}$ be the point cloud with each point in $X$ transformed using $U_1$. Similarly, let $U_2$ be an orthogonal transformation on the point space of $Y$, $Y_{U_2}$ be the point cloud with each point in $Y$ transformed using $U_2$, then $\MLNSA(X,Y) = \MLNSA(X_{U_1},Y_{U_2})$.
\end{lemma}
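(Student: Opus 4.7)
The plan is to obtain the two-sided invariance by chaining the single-coordinate case (\Cref{lem:lid_nsa_ortho_1}) with the symmetry of $\MLNSA$ (\Cref{lem:lid_nsa_sym}), in exact parallel to the isotropic scaling argument in \Cref{lem:lid_nsa_iso}. So the proof will proceed in two short steps, each reducing the problem to an already-established lemma.

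First, I would apply \Cref{lem:lid_nsa_ortho_1} directly to the pair $(X,Y)$ with the orthogonal transformation $U_1$ on the first coordinate, giving $\MLNSA(X,Y) = \MLNSA(X_{U_1},Y)$. Next, I need a companion statement for invariance in the second coordinate. I would state this as an auxiliary lemma: for any orthogonal $U_2$ on the point space of $Y$, $\MLNSA(X,Y) = \MLNSA(X,Y_{U_2})$. Its proof is a one-liner using symmetry: by \Cref{lem:lid_nsa_sym}, $\MLNSA(X,Y_{U_2}) = \MLNSA(Y_{U_2},X)$, which by \Cref{lem:lid_nsa_ortho_1} (applied to the pair $(Y,X)$ with $U_2$ in the first slot) equals $\MLNSA(Y,X)$, and then symmetry again gives $\MLNSA(X,Y)$.

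Combining the two, I would conclude
\[
\MLNSA(X,Y) \;=\; \MLNSA(X_{U_1},Y) \;=\; \MLNSA(X_{U_1}, Y_{U_2}),
\]
where the first equality uses \Cref{lem:lid_nsa_ortho_1} and the second uses the companion second-coordinate invariance applied to the pair $(X_{U_1},Y)$ with $U_2$.

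I do not anticipate a real obstacle here: the entire content, namely that the ratios $d(U\vec{x}_i,U\vec{x}_{i_j})/d(U\vec{x}_i,U\vec{x}_{i_k})$ are preserved and that nearest-neighbor indices are preserved under $U$, has already been done in \Cref{lem:lid_nsa_ortho_1}. The only subtlety to watch is that the appeal to symmetry is legitimate because $\MLNSA$ was defined symmetrically as $\LNSA_X(Y)+\LNSA_Y(X)$, so applying an orthogonal transformation on one side is genuinely reduced, via symmetry, to applying it on the other. Once that is pointed out, the proof is just the two-line chain above.
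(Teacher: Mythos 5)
Your proposal is correct and follows exactly the paper's route: the paper also derives the two-sided statement by combining \Cref{lem:lid_nsa_ortho_1} with the symmetry of $\MLNSA$ (\Cref{lem:lid_nsa_sym}) to obtain second-coordinate invariance, then chains the two. Nothing is missing; your explicit spelling-out of the symmetry step is a slight elaboration of what the paper leaves implicit.
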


\subsection{Not Invariant under Invertible Linear Transformation (ILT)}\label{lid_ILTProof}

We can easily see this with a counter example. Let $X = \{(1,0,0), (0,1,0),(0,0,1)\}$. Let $Y = \{(1,1,0),(1,0,1),(0,1,1)\}$. Define $A$ as an invertible linear map such that $A(1,0,0) = (2,0,0)$, $A(0,1,0) = (0,1,0)$ and $A(0,0,1) = (0,0,1)$. Let $X_A$ be the point cloud with each point in $X$ transformed using $A$. Then from some calculations we can see that $\MLNSA(X,Y) = 0$ but $\MLNSA(X_A,Y) = \frac{1}{12}\left(\log(5)-\log(2)\right)^2$. Hence, we can see that $\NSA$ is not invariant under Invertible Linear Transformations.

\section{Proofs for LNSA as a Loss Function}\label{sec:lnsa_loss_proofs}
\subsection{Differentiability of LNSA}

To derive the sub-gradient $\frac{\partial \LNSA_Y(X)}{\partial x_i}$, we first define the following notation. 

Let $$f_{j,Y}(X) = \left(\LID(j,Y)^{-1}-\SLID_Y(j,X)^{-1}\right)^2.$$ $$\text{Using this, }\qquad \LNSA_Y(X) = \frac{1}{N}\sum_{1\leq j\leq N}f_{j,Y}(X).$$
Hence, $$\frac{\partial \LNSA_Y(X)}{\partial x_i} = \frac{1}{N}\sum_{1\leq j\leq N}\frac{\partial f_{j,Y}(X)}{\partial \vec{x}_i}.$$ Note that $\LID(j,Y)^{-1}$ is constant with respect to $\vec{x}_i$, hence, $$\frac{\partial f_{j,Y}(X)}{\partial \vec{x}_i} = -2\left(\LID(j,Y)^{-1}-\SLID_Y(j,X)^{-1}\right)\frac{\partial \SLID_Y(j,X)^{-1}}{\partial \vec{x}_i}.$$ To compute $\frac{\partial \SLID_Y(j,X)^{-1}}{\partial \vec{x}_i}$, we divide this into the four cases below:
\begin{itemize}
    \item \textbf{$i=j$}: Let $\KNN{k}{Y}(i) = (i_1,\cdot,i_k)$, then $\SLID_Y(i,X)^{-1} = \sum_{l=1}^{k}\log\left(\frac{d(\vec{x}_i,\vec{x}_{i_l})}{d(\vec{x}_i,\vec{x}_{i_k})}\right)$. Hence, $$\frac{\partial\SLID_Y(i,X)^{-1}}{\partial \vec{x}_i} = \sum_{l=1}^{k} \left(\frac{1}{d(\vec{x}_i,\vec{x}_{i_l})}\frac{\partial d(\vec{x}_i,\vec{x}_{i_l})}{\partial \vec{x}_i}-\frac{1}{d(\vec{x}_i,\vec{x}_{i_k})}\frac{\partial d(\vec{x}_i,\vec{x}_{i_k})}{\partial \vec{x}_i}\right).$$
    \item \textbf{$j\in\KNN{k}{Y}(i)$ and $j$ is not the last element of $\KNN{k}{Y}(i)$}: Let $\KNN{k}{Y}(i) = (i_1,\cdot,i_k)$, then $\SLID_Y(i,X)^{-1} = \sum_{l=1}^{k}\log\left(\frac{d(\vec{x}_i,\vec{x}_{i_l})}{d(\vec{x}_i,\vec{x}_{i_k})}\right)$. Hence, $$\frac{\partial\SLID_Y(i,X)^{-1}}{\partial \vec{x}_i} = \frac{1}{d(\vec{x}_i,\vec{x}_{j})}\frac{\partial d(\vec{x}_i,\vec{x}_{j})}{\partial \vec{x}_i}.$$
    \item \textbf{$j\in\KNN{k}{Y}(i)$ and $j$ is the last element of $\KNN{k}{Y}(i)$}: Let $\KNN{k}{Y}(i) = (i_1,\cdot,i_k)$, then $\SLID_Y(i,X)^{-1} = \sum_{l=1}^{k}\log\left(\frac{d(\vec{x}_i,\vec{x}_{i_l})}{d(\vec{x}_i,\vec{x}_{i_k})}\right)$. Hence, $$\frac{\partial\SLID_Y(i,X)^{-1}}{\partial \vec{x}_i} = (1-k)\left(\frac{1}{d(\vec{x}_i,\vec{x}_{j})}\frac{\partial d(\vec{x}_i,\vec{x}_{j})}{\partial \vec{x}_i}\right).$$
    \item \textbf{$j\not\in\KNN{k}{Y}(i)$ and $j\neq i$}: Then $$\frac{\partial\SLID_Y(i,X)^{-1}}{\partial \vec{x}_i} = 0.$$
\end{itemize}
Hence, combining the above, we have the sub-gradients of $\LNSA_Y(X)$.

\subsection{Continuity}
We prove the continuity of $\MLNSA$ by showing that $\MLNSA$ is a composition of continuous functions \citep[Theorem~5.10]{carothers}. Let $$f_{i,Y}(X) = \left(\LID(i,Y)^{-1}-\SLID_Y(i,X)^{-1}\right)^2.$$ $$\text{Using this,}\qquad \LNSA_Y(X) = \frac{1}{N}\sum_{1\leq i\leq N}f_{i,Y}(X).$$
Since a sum of continuous functions is continuous, all we need to show is that $f_{i,Y}(X)$ is continuous for all $1\leq i\leq N$. Let $\KNN{k}{Y}(i) = (i_1,\cdot,i_k)$, then note that $$f_{i,Y}(X) = \left(\sum^k_{l=1}\left(\log\left(\frac{d(\vec{x}_i,\vec{x}_{i_l})}{d(\vec{x}_i,\vec{x}_{i_k})}\right)-\log\left(\frac{d(\vec{y}_i,\vec{y}_{i_l})}{d(\vec{y}_i,\vec{y}_{i_k})}\right)\right)\right)^2,$$ hence can be seen as a composition of $d(\cdot,\cdot)$ and $\log(\cdot)$, along with the algebraic operations of addition, subtraction, multiplication and division. All the above operations are continuous everywhere\footnote{Note that division is not continuous when the denominator goes to zero. Since, the denominators are $(d(\vec{x}_i,\vec{x}_{i_k}))$, as long as all $k$ neighbours of a point are not zero distance away, the denominator does not go to zero.}, we get that their composition is also continuous everywhere. Hence, $\LNSA_Y(\cdot)$ is continuous.

\section{Convergence of subset GNSA}\label{sec:subset_conv_proofs}
Here, we prove that $\NSA$ adapts well to minibatching and hence can be used as a loss term. In particular, \Cref{lem:nsa_subset_conv} proves theoretically that in expectation, $\NSA$ over a minibatch is equal to $\NSA$ of the whole dataset. This supplements the results presented in Figure \ref{fig:subset_conv}. The experiment is run on the output embeddings of a GCN trained on node classification for the Cora Dataset. We compare the convergence with a batch size of 200 and 500.

\begin{lemma}[Subset GNSA convergence]
    \label{lem:nsa_subset_conv}
    Let $X,Y$ be two point clouds of the same size $N$ such that $X = \{x_1,\ldots,x_N\}$ and $Y = \{y_1,\ldots,y_N\}$. Let $\tilde{X}$ be some randomly sampled $s$ sized subsets of $X$ (for some $s\leq N$). Define $\tilde{Y}\subset Y$ as $y_i\in \tilde{Y}$ iff $x_i\in \tilde{X}$. Then the following holds true $$\mathop{\mathbb{E}}_{\substack{\tilde{X},\tilde{Y}}}\left[\NSA(\tilde{X},\tilde{Y})\right] = \NSA(X,Y).$$
\end{lemma}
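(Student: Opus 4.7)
The plan is to use linearity of expectation after rewriting $\NSA(\tilde X, \tilde Y)$ as a bilinear form in sampling indicators. I would set $Z_i := \mathbf{1}[x_i \in \tilde X]$ and
\begin{equation*}
a_{ij} := \left|\frac{d(x_i,x_j)}{D_X} - \frac{d(y_i,y_j)}{D_Y}\right|,
\end{equation*}
with $D_X, D_Y$ the global normalizers inherited from the full dataset (the mini-batch-loss convention, where scale is fixed across batches rather than recomputed on each subset). Then $\tilde X$ contains $x_i$ iff $Z_i = 1$, and the definition of $\NSA$ extended to the subset reads
\begin{equation*}
\NSA(\tilde X, \tilde Y) = \frac{1}{s^2} \sum_{i,j=1}^{N} a_{ij}\, Z_i Z_j.
\end{equation*}

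Next, I would compute the pair-inclusion probabilities under uniform sampling of an $s$-subset without replacement: $\Pr[Z_i = 1] = s/N$ and $\Pr[Z_i Z_j = 1] = s(s-1)/(N(N-1))$ for $i \neq j$. Applying linearity of expectation to the double sum, the diagonal contributions vanish because $a_{ii} = 0$ (the same identity argument as in \Cref{lem:nsa_0}), and the off-diagonal contributions collapse to a scalar multiple of $\sum_{i \neq j} a_{ij}$. A short algebraic comparison against $\NSA(X,Y) = \frac{1}{N^2}\sum_{i,j} a_{ij}$ (where the diagonal is again zero by \Cref{lem:nsa_0}) then matches the prefactors and closes the proof.

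The main obstacle is the treatment of the subset-level normalizer: if one took $D_{\tilde X} = \max_{x \in \tilde X} d(x, \vec{0})$ literally, these quantities would be random functions of the sample, which destroys the bilinear form and breaks the termwise expectation argument. The cleanest resolution, and the one consistent with how NSA is deployed in practice as a minibatch loss, is to pin the normalizers to the global values $D_X, D_Y$; under this convention the identity is exact. If one instead re-estimates normalizers per batch, the same calculation yields the identity only up to a vanishing correction, which one would justify by a concentration argument showing $D_{\tilde X} \to D_X$ with high probability as $s$ grows.
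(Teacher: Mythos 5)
Your setup is the same as the paper's: expand $\NSA(\tilde X,\tilde Y)$ as a bilinear form in inclusion indicators and apply linearity of expectation. You are in fact more careful than the paper on the normalizer question --- the paper silently substitutes the global $D_X,D_Y$ into the subset expression, whereas you flag that taking $D_{\tilde X}=\max_{x\in\tilde X}d(x,\vec 0)$ literally would make the normalizers random and break the termwise argument; pinning them to the global values is exactly what the paper's proof implicitly does.

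The gap is in your final step. With the exact without-replacement probability you correctly compute, $\Pr[Z_iZ_j=1]=\frac{s(s-1)}{N(N-1)}$ for $i\neq j$, the expectation becomes
\begin{equation*}
\mathop{\mathbb{E}}\left[\NSA(\tilde X,\tilde Y)\right]
=\frac{1}{s^2}\cdot\frac{s(s-1)}{N(N-1)}\sum_{i\neq j}a_{ij}
=\frac{N(s-1)}{s(N-1)}\cdot\frac{1}{N^2}\sum_{i\neq j}a_{ij}
=\frac{N(s-1)}{s(N-1)}\,\NSA(X,Y),
\end{equation*}
and the prefactor $\frac{N(s-1)}{s(N-1)}$ equals $1$ only when $s=N$; for $s=2$ and large $N$ it is roughly $\tfrac12$. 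So the prefactors do not ``match'' as you assert, and the identity is not exact under the sampling model you set up. The paper avoids this by explicitly assuming $s\gg 1$ and treating the indicators as independent, i.e.\ $\mathbb{E}[Z_iZ_j]=s^2/N^2$, under which approximation the $\frac{1}{s^2}$ does cancel. To close your argument you must either adopt the same approximation (and accept that the equality is asymptotic rather than exact), or note that exact unbiasedness would require normalizing the subset sum by $s(s-1)$ over ordered distinct pairs rather than by $s^2$; under the paper's $\frac{1}{N^2}$ convention the statement holds only up to the factor above.
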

\begin{proof}
    Let $\max_{x\in X}(d(x,0)) = D_X$ and $\max_{y\in Y}(d(y,0)) = D_Y$. Then
    $$\NSA(X,Y) = \frac{1}{N^2}\sum_{1\leq i,j\leq N}\left|\frac{d(x_i,x_j)}{D_X}-\frac{d(y_i,y_j)}{D_Y}\right|.$$
    Define $\mathbb{I}_{\tilde{X}}\left(x_i\right)$ as $$\mathbb{I}_{\tilde{X}}\left(x_i\right)=
    \begin{cases}
        1\text{ if } x_i\in\tilde{X},\\
        0\text{ otherwise.}
    \end{cases}$$
    Then we can see that $$\NSA(\tilde{X},\tilde{Y}) = \frac{1}{s^2}\sum_{1\leq i,j\leq N}\left|\frac{d(x_i,x_j)}{D_X}-\frac{d(y_i,y_j)}{D_Y}\right|\mathbb{I}_{\tilde{X}}\left(x_i\right)\mathbb{I}_{\tilde{X}}\left(x_j\right).$$
    Hence, taking expectation over $\tilde{X},\tilde{Y}$, we get 
    $$\mathop{\mathbb{E}}_{\substack{\tilde{X},\tilde{Y}}}\left[\NSA(\tilde{X},\tilde{Y})\right] = \frac{1}{s^2}\mathop{\mathbb{E}}_{\substack{\tilde{X},\tilde{Y}}}\left[\sum_{1\leq i,j\leq N}\left|\frac{d(x_i,x_j)}{D_X}-\frac{d(y_i,y_j)}{D_Y}\right|\mathbb{I}_{\tilde{X}}\left(x_i\right)\mathbb{I}_{\tilde{X}}\left(x_j\right)\right].$$
    By linearity of expectation, we get 
    $$\mathop{\mathbb{E}}_{\substack{\tilde{X},\tilde{Y}}}\left[\NSA(\tilde{X},\tilde{Y})\right] = \frac{1}{s^2}\sum_{1\leq i,j\leq N}\left|\frac{d(x_i,x_j)}{D_X}-\frac{d(y_i,y_j)}{D_Y}\right|\mathop{\mathbb{E}}_{\substack{\tilde{X},\tilde{Y}}}\left[\mathbb{I}_{\tilde{X}}\left(x_i\right)\mathbb{I}_{\tilde{X}}\left(x_j\right)\right].$$
    Assuming $s>>1$, each $x_i$ is independently in $\tilde{X}$ with probability $s/N$, hence, we get 
    $$\mathop{\mathbb{E}}_{\substack{\tilde{X},\tilde{Y}}}\left[\NSA(\tilde{X},\tilde{Y})\right] = \frac{1}{s^2}\sum_{1\leq i,j\leq N}\left|\frac{d(x_i,x_j)}{D_X}-\frac{d(y_i,y_j)}{D_Y}\right|\frac{s}{N}\frac{s}{N}.$$
    Hence, we have 
    $$\mathop{\mathbb{E}}_{\substack{\tilde{X},\tilde{Y}}}\left[\NSA(\tilde{X},\tilde{Y})\right] = \NSA(X,Y).$$
\end{proof}

\section{NSA running time over increasing batch sizes}\label{sec:nsa_runtime_tests}
Theoretically, NSA is quadratic in the number of points as we need to calculate pairwise distances. But since we use torch functions and the data is loaded on the GPU, empirical running times are significantly lower. Figure \ref{fig:runtime_analysis}(a) presents the running time of GlobalNSA and Figure \ref{fig:runtime_analysis}(a) presents the running time of NSA (Global and Local combined) on two embedding spaces of very high dimensionality (~200,000). We notice that even when we get to a very high batch size (5000), the running time scaling is nearly linear. It is worth noting that these running times are higher as the dimensionality is very high, in most practical scenarios, your embeddings would lie in a lower dimension. 

\begin{figure}[h]
    \centering
    \begin{subfigure}{0.48\linewidth}
        \includegraphics[width=\linewidth]{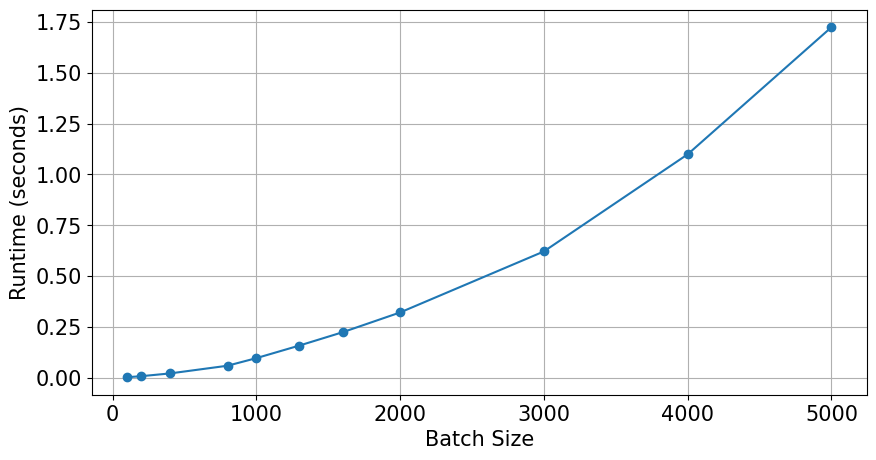}
    \caption{}
    \end{subfigure}
    \begin{subfigure}{0.48\linewidth}
        \includegraphics[width=\linewidth]{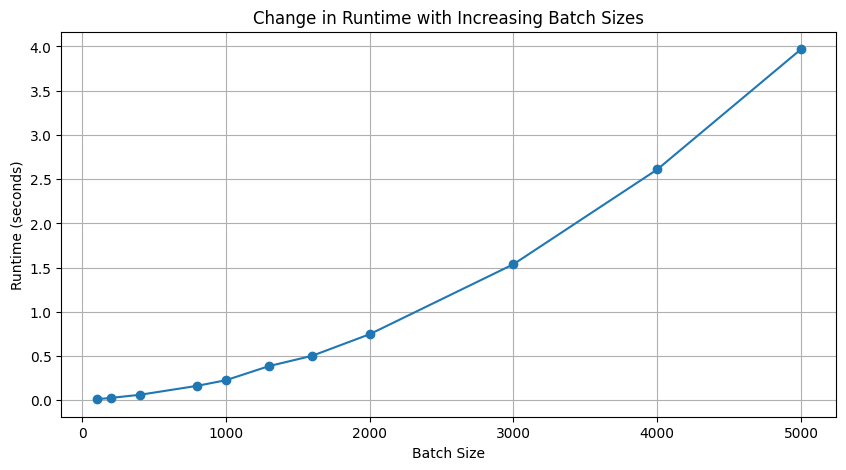}
        \caption{}
    \end{subfigure}
    \caption{Running time of NSA over increasing batch sizes. (a) Running time of GlobalNSA over increasing batch size. (b) Running time of NSA over increasing batch size}
    \label{fig:runtime_analysis}
\end{figure}

\section{Are GNNs Task Specific Learners?}\label{sec:task_specific}

We use NSA to investigate whether representations produced by GNNs are specific to the downstream task. For this, we explored the similarity of representations of different GNN architectures when trained on the same task versus their similarity when trained on different downstream tasks. 

\subsection{Cross Architecture Tests on the Same Downstream Task}\label{sec:emp_ca}

\begin{figure}[ht]
    \centering
    \begin{subfigure}{0.09\textwidth}
        \centering
        NSA\\
        \textbf{}\\
        \textbf{}\\
        \textbf{}\\
        \textbf{}\\
        \textbf{}\\
    \end{subfigure}
    \begin{subfigure}{0.22\textwidth}
        \includegraphics[width=\linewidth]{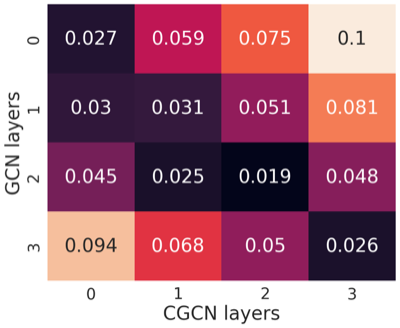}
        \caption{GCN vs CGCN}
    \end{subfigure}
    \begin{subfigure}{0.22\textwidth}
        \includegraphics[width=\linewidth]{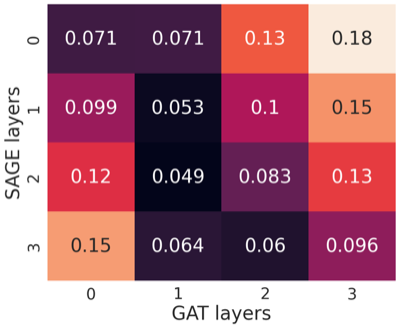}
        \caption{GraphSAGE vs GAT}
    \end{subfigure}
    \begin{subfigure}{0.22\textwidth}
        \includegraphics[width=\linewidth]{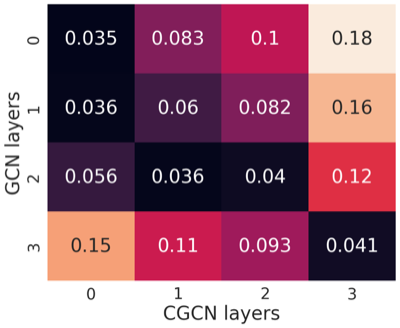}
        \caption{GCN vs CGCN}
    \end{subfigure}
   \begin{subfigure}{0.22\textwidth}
        \includegraphics[width=\linewidth]{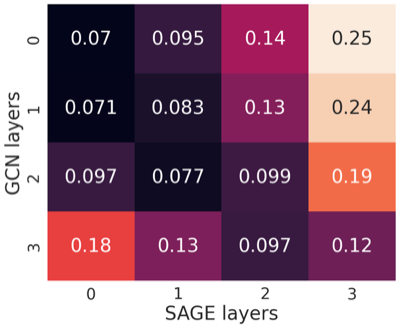}
        \caption{GCN vs GSAGE}
    \end{subfigure}
 
    \caption{Cross Architecture Tests using NSA on the Amazon Computers Dataset. (a) Layerwise NSA values between GCN and ClusterGCN on Node Classification (b) Layerwise NSA values between GraphSAGE and GAT on Node Classification (c) Layerwise NSA values between GCN and ClusterGCN on Link Prediction (d) Layerwise NSA values between GCN and GSAGE on Link Prediction. Similar architectures showcase a layerwise pattern when trained on the same task.}
    \label{fig:cross_arch_nsa_1}
\end{figure}

We tested layerwise representational similarity on the task of node classification (on the Amazon Computers Dataset) across different GNN architectures: GCN, ClusterGCN (CGCN), GraphSAGE, and GAT. Just like in the sanity tests, models that are similar in architecture or training paradigms have a higher layerwise similarity: this implies that the highest similarity is between GCN and ClusterGCN that differ only in their training paradigms. We also observe (a less pronounced) linear relationship between the other pairs of architectures. These results are shown in Figure \ref{fig:cross_arch_nsa_1}.

\subsection{Architecture Tests on Different Downstream Tasks}\label{sec:downstream_amazon}

\begin{figure}[ht]
    \centering
    \begin{subfigure}{0.09\textwidth}
        \centering
        \textbf{\bigskip}
    \end{subfigure}
    \begin{subfigure}{0.22\textwidth}
        \centering
        \textbf{GCN}
    \end{subfigure}
    \begin{subfigure}{0.22\textwidth}
        \centering
        \textbf{GSAGE}
    \end{subfigure}
    \begin{subfigure}{0.22\textwidth}
        \centering
        \textbf{GAT}
    \end{subfigure}
    \begin{subfigure}{0.22\textwidth}
        \centering
        \textbf{CGCN}
    \end{subfigure}
    \begin{subfigure}{0.09\textwidth}
        \centering
        NSA\\
        \textbf{}\\
        \textbf{}\\
        \textbf{}\\
    \end{subfigure}
    \begin{subfigure}{0.22\textwidth}
        \includegraphics[width=\linewidth]{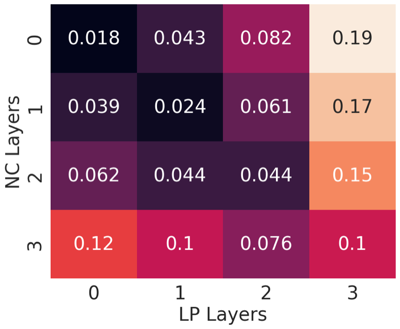}
    \end{subfigure}
    \begin{subfigure}{0.22\textwidth}
        \includegraphics[width=\linewidth]{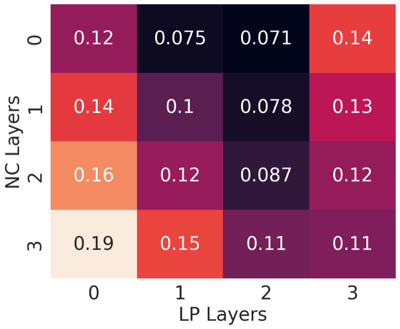}
    \end{subfigure}
    \begin{subfigure}{0.22\textwidth}
        \includegraphics[width=\linewidth]{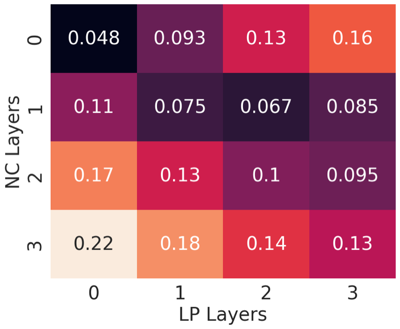}
    \end{subfigure}
    \begin{subfigure}{0.22\textwidth}
        \includegraphics[width=\linewidth]{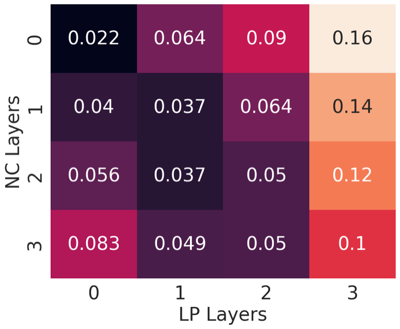}
    \end{subfigure}

    \caption{Effect of Downstream Task on Representations Achieved by the Same Architecture (Amazon Computers Dataset). The layerwise dissimilarity of four GNN architectures is compared on two different downstream tasks: Node Classification and Link Prediction. There is no observable correlation across layers suggesting that the GNNs generate different representation spaces for the same dataset on different downstream tasks.}
    \label{fig:cross_task_nsa}
\end{figure}

We conducted experiments with different downstream tasks to assess layerwise similarity between two models, both of which shared structural identity except for disparities in their final layers. Specifically, we employed node classification and link prediction as the two downstream tasks for our investigations. We present our findings in Figure \ref{fig:cross_task_nsa}. They reveal that there is little relationship across layers for any of the four architectures. 

The results presented in this section indicate that that different GNN architectures have similar representation spaces when trained on the same downstream task and conversely, similar architectures have different representation spaces when trained on different downstream tasks. Extending this idea further, it should be possible to train  Graph Neural Networks to conform to a task specific representation template. This structural template will be agnostic of GNN architectures and provide a high degree of functional similarity. If we train GNNs to minimize discrepancy loss with such a task specific template, we could train more directly and without adding downstream layers of tasks.

\section{Downstream Task Analysis with the STS Dataset}\label{sec:sts_ae}
We assess the performance of embeddings derived from NSA-AE in the context of the Semantic Text Similarity (STS) Task. We employ Google's Word2Vec word vectors as the initial dataset, processing these 300-dimensional vectors through both a standard autoencoder and the NSA-Autoencoder to achieve reduced dimensions of 128 and 64. The efficacy of these dimensionality-reduced embeddings is then evaluated using the STS Multi EN dataset \citep{sts-semeval}. This evaluation aims to determine if the word vectors, once dimensionally reduced, maintain their alignment, as evidenced by a strong positive correlation in similarity scores on the STS dataset. The comparative results, as detailed in Table \ref{tab:sts_results}, demonstrate that the NSA-Autoencoder outperforms the traditional autoencoder in preserving semantic accuracy at both 128 and 64 dimensions.

\begin{table}[h]
    \centering
    \renewcommand{\arraystretch}{1.5} %
    \begin{tabular}{llll}
        \hline
        \textbf{Latent Dimension} & \textbf{Basic AutoEncoder} & \textbf{NSA AutoEncoder} \\
        \hline
        128 & -0.518 & 0.7632 \\
        \hline
         64 & 0.567 & 0.7619\\
        \hline
    \end{tabular}
    \caption{Pearson Correlation between the similarities output from the original embeddings and the similarities output from the reduced embeddings. NSA-AE shows a much higher correlation with the similarity outputs obtained by evaluating the original word2vec embeddings on the STS multi EN dataset}
    \label{tab:sts_results}
\end{table}

\section{Running Time and Reconstruction Loss for Autoencoder}\label{sec:time_mse_ae}
We show that using NSA along with Mean Squared Error in an autoencoder does not significantly compromise the model's ability to reconstruct the original embedding. Additionally, NSA-AE is more efficient than RTD-AE and Topo-AE. The results are presented in Table \ref{tab:autoencoder-time-table}

\begin{table}[h]
\centering
\renewcommand{\arraystretch}{1.1}    
    \begin{center}
        \begin{tabular}{lcccc}
            \hline & & \multicolumn{3}{c}{ Metric } \\
            \cline { 3 - 5 } Dataset & Method & Time per epoch & Train MSE & Test MSE\\
            
            \hline \\
            MNIST & AE & 2.344 & 7.64e-3 & 8.62e-3\\
            & TopoAE & 39.168 & 6.89e-3 & 8.16e-3\\
            & RTD-AE & 51.608 & 9.36e-3 & 1.07e-2\\
            & NSA-AE & 5.816 & 8.75e-3 & 1.00e-2\\
            \hline \\
            F-MNIST & AE & 6.436 & 8.99e-03 & 9.79e-03\\
            & TopoAE & 37.26 & 8.94-03 & 9.83e-03\\
            & RTD-AE & 59.94 & 1.12e-02 & 1.24e-02\\
            & NSA-AE & 5.764 & 9.49e-03 & 1.04e-02\\
            \hline \\
            CIFAR-10 & AE & 5.16 & 1.56e-02 & 1.65e-02\\
            & TopoAE & 58.664 & 1.54e-02 & 1.68e-02\\
            & RTD-AE & 56.172 & 1.60e-02 & 1.91e-02\\
            & NSA-AE & 6.996 & 1.58e-02 & 1.71e-02\\
            \hline \\
            COIL-20 & AE & 2.012 & 1.67e-02 & -\\
            & TopoAE & 4.876 & 1.09e-02 & -\\
            & RTD-AE & 16.404 & 1.90e-02 & -\\
            & NSA-AE & 8.716 & 1.80e-02 & -\\
            \hline
        \end{tabular}
            
    \end{center}
    \caption{Reconstruction Loss and Time Per Epoch for different Autoencoder architectures. All architectures were trained for 250 epochs with the auxiliary loss (TopoLoss, RTD, NSA) kicking in after 60 epochs.}
    \label{tab:autoencoder-time-table}
\end{table}

\section{Autoencoder hyperparameters}\label{sec:hyper_ae}
The hyperparameter setup for all the autoencoder architectures is detailed in Table \ref{tab:ae-params}. All autoencoder architectures are trained using pytorch-lightning and the input data is normalized using a MinMaxScaler.

\begin{table}[h]
    \centering
    \renewcommand{\arraystretch}{1.5} %
    \begin{tabular}{lllllll}
        \hline
        \textbf{Dataset Name} & \textbf{Batch Size} & \textbf{LR} & \textbf{Hidden Dim} & \textbf{Layers} & \textbf{Epochs} & \textbf{Metric Start Epoch} \\
        \hline
        MNIST & 256 & $10^-4$ & 512 & 3 & 250 & 60\\
        \hline
        F-MNIST & 256 & $10^-4$ & 512 & 3 & 250 & 60\\
        \hline
        CIFAR-10 & 256 & $10^-4$ & 512 & 3 & 250 & 60\\
        \hline
        COIL-20 & 256 & $10^-4$ & 512 & 3 & 250 & 60\\
        \hline
    \end{tabular}
    \caption{Autoencoder Hyperparameters. All four architectures used the same hyperparameters. To ensure similarity of testing conditions we replicate the hyperparameter setup from \citet{RTDAEPaper}}
    \label{tab:ae-params}
\end{table}

\section{GNN hyperparameters}\label{sec:hyper_gnn}
The hyperparameter setup for all the GNN architectures is detailed in Table \ref{tab:gnn-architectures}

\begin{table}[h]
    \centering
    \renewcommand{\arraystretch}{1.5} %
    \begin{tabular}{llllllll}
        \hline
        \textbf{Architecture} & \textbf{Layers} & \textbf{Hidden Dim} & \textbf{LR} & \textbf{Epochs} & \textbf{Additional Info} \\
        \hline
        GCN & 4 & 128 & 0.001 & 200 & - \\
        \hline
        GraphSAGE & 4 & 128 & 0.001 & 200 & Mean Aggregation\\
        \hline
        GAT & 4 & 128 & 0.001 & 200 & 8 heads with Hid Dim 8 each \\
        \hline
        CGCN & 4 & 128 & 0.001 & 200 & 8 subgraphs \\
        \hline
    \end{tabular}
    \caption{GNN Architecture Information}
    \label{tab:gnn-architectures}
\end{table}

\section{GNN metrics}\label{sec:metrics_gnn}
The test accuracy for node classification and the ROC AUC value for link prediction is detailed in Table \ref{tab:gnn-metrics}

\begin{table}[h]
    \centering
    \renewcommand{\arraystretch}{1.5} %
    \begin{tabular}{lll}
        \hline
        \textbf{Architecture} & \textbf{Accuracy (NC)} & \textbf{ROC AUC (LP)} \\
        \hline
        GCN & 0.8257 & 0.8638 \\
        \hline
        GraphSAGE & 0.8800 & 0.8068\\
        \hline
        GAT & 0.8273 & 0.7997 \\
        \hline
        CGCN & 0.8200 & 0.8716 \\
        \hline
    \end{tabular}
    \caption{GNN Metric Data on Amazon Computer Dataset. Test Accuracy is for Node Classification and ROC AUC Score is for Link Prediction}
    \label{tab:gnn-metrics}
\end{table}

\section{Datasets}\label{sec:datasets}
We report the statistics of the datasets used for the empirical analysis of GNNs in Table \ref{tab:graph-dataset-stats}. We report the exact statistics of the autoencoder datasets in Table \ref{tab:ae-dataset-stats}. 
\subsection{Graph Datasets}
The Amazon computers dataset is a subset of the Amazon co-purchase graph \citep{amazondata}. The nodes represent products on amazon and the edges indicate that two products are frequently bought together. The node features are the product reviews encoded in bag-of-words format. The class labels represent the product category. We also use the Flickr, Cora, Citeseer and Pubmed dataset in additional experiments.

\begin{table}[h]
    \centering
    \renewcommand{\arraystretch}{1.5} %
    \begin{tabular}{llllll}
        \hline
        \textbf{Dataset} & \textbf{Amazon Computer} & \textbf{Flickr} & \textbf{Cora} & \textbf{Citeseer} & \textbf{Pubmed}\\
        \hline
        Number of Nodes & 13752 & 89250 & 2708 & 3327 & 19717\\
        \hline
        Number of Edges & 491722 & 899756 & 10556 & 9104 & 88648\\
        \hline
        Average Degree & 35.76 & 10.08 & 3.90 & 2.74 & 4.50\\
        \hline
        Node Features & 767 & 500 & 1433 & 3703 & 500\\
        \hline
        Labels & 10 & 7 & 7 & 6 & 3\\
        \hline
    \end{tabular}
    \caption{Dataset Statistics for Graph}
    \label{tab:graph-dataset-stats}
\end{table}

\subsection{AutoEncoder Datasets}
Four diverse real-world datasets were utilized for our experiments: MNIST \citep{MNISTData}, Fashion-MNIST (F-MNIST) \citep{FMNISTData}, COIL-20 \citep{COIL20Data}, and CIFAR-10 \citep{CIFAR10Data}, to comprehensively evaluate the performance of our autoencoder model. MNIST, comprising 28x28 grayscale images of handwritten digits, serves as a foundational benchmark for image classification and feature extraction tasks. Fashion-MNIST extends this by offering a similar format but with 10 classes of clothing items, making it an ideal choice for fashion-related image analysis. COIL-20 presents a unique challenge, with 20 object categories, where each category consists of 72 128x128 color images captured from varying viewpoints, offering a more complex 3D object recognition scenario. %

\begin{table}[h]
    \centering
    \renewcommand{\arraystretch}{1.5} %
    \begin{tabular}{llllll}
        \hline
        \textbf{Dataset} & \textbf{Classes} & \textbf{Train Size} & \textbf{Test Size} &\textbf{Image Size} & \textbf{Data Type} \\
        \hline
       MNIST & 10 & 60,000 & 10,000 & 28x28 (784) & Grayscale \\
        \hline
        Fashion-MNIST (F-MNIST) & 10 & 60,000 & 10,000 & 28x28 (784) & Grayscale \\
        \hline
        COIL-20 & 20 & 1,440 & -  & 128x128 (16384) & Color \\
        \hline
        CIFAR-10 & 10 & 60,000 & 10,000 & 32x32*3 (3072) & Color \\
        \hline
    \end{tabular}
    \caption{Dataset Statistics for AE}
    \label{tab:ae-dataset-stats}
\end{table}

\section{Ablation Study}
\label{sec:ablation}
\begin{table*}[ht]
\tiny
\centering
\renewcommand{\arraystretch}{1.1}
\begin{center}
\begin{tabular}{lllllllll}
\hline & & \multicolumn{4}{c}{ Quality measure } \\
\cline { 3 - 9 } Dataset & Method & L. C. & T. A. & RTD & T. A. C.C & NSA & LID-NSA & kNN-C\\

\hline 
MNIST   & GNSA-AE     & $ 0.942$ & $0.885 \pm 0.006$ & $5.44 \pm 0.12$ & $0.944 \pm 0.230$ &  $\mathbf{0.0198 \pm 0.0001}$ & $0.0342 \pm 0.00$ & 0.616\\
        & LNSA-AE     & 0.819 & $0.794 \pm 0.007$ & $7.39 \pm 0.21$ & $0.832 \pm 0.374$ & $0.0548 \pm 0.00$ & $0.0373 \pm 0.00$ & 0.607\\
        & NSA-AE & $\mathbf{0.947}$ & $\mathbf{0.886 \pm 0.005}$ & $\mathbf{4.29 \pm 0.11}$ & $\mathbf{0.946 \pm 0.226}$ & $0.0222 \pm 0.00$ & $\mathbf{0.0123 \pm 0.00}$ &\textbf{0.634} \\

\hline 
F-MNIST & GNSA-AE     & $\mathbf{0.987}$ & $\mathbf{0.952} \pm \mathbf{0.002}$ & $4.11 \pm 0.21$ & $0.992 \pm 0.089$ &  $\mathbf{0.0091} \pm \mathbf{0.0001}$ & $0.0350 \pm 0.00$ & 0.592\\
        & LNSA-AE     &  0.887 & $0.859 \pm 0.006$ & $5.41 \pm 0.18$ & $0.952 \pm 0.214$ & $0.0488 \pm 0.00$ & $0.0470 \pm 0.00$& 0.608 \\
        & NSA-AE & 0.9854 & $0.939 \pm 0.003$ & $\mathbf{2.78 \pm 0.09}$ & $\mathbf{0.992 \pm 0.089}$ & $0.0114 \pm 0.00$ & $\mathbf{0.0133 \pm 0.00}$& \textbf{0.633} \\

\hline 
CIFAR-10&    GNSA-AE     & $\mathbf{0.985}$ & $0.936 \pm 0.004$ & $3.07 \pm 0.11$ & $0.984 \pm 0.125$ &  $\mathbf{0.0077 \pm 0.0001}$ & $0.0122 \pm 0.00$ & 0.453\\
            & LNSA-AE     & 0.8616 & $0.849 \pm 0.006$ & $4.02 \pm 0.28$ & $0.936 \pm 0.245$ & $0.0624 \pm 0.00$ & $0.0220 \pm 0.00$ & 0.359 \\
            & NSA-AE & 0.985 & $\mathbf{0.941 \pm 0.003}$ & $\mathbf{3.04 \pm 0.16}$ & $\mathbf{0.984 \pm 0.125}$ & $\mathbf{0.0086 \pm 0.00}$ &$\mathbf{0.0103 \pm 0.00}$ & \textbf{0.437} \\

\hline 
COIL-20     & GNSA-AE     & $0.955$ & $0.919 \pm 0 .004$ & $7.46 \pm 0.23$ & $0.939 \pm 0.240$ &  $0.0157 \pm 0.0$&  $0.3427 \pm 0.00$ &0.721\\
            & LNSA-AE     & $0.848$ & $0.838 \pm 0.007$ & $9.74 \pm 0.26$ & $0.887 \pm 0.316$ & $0.0672 \pm 0.00$ & $0.5810 \pm 0.00$ & 0.703 \\
            & NSA-AE & $\mathbf{0.976}$ & $\mathbf{0.919 \pm 0.005}$ & $\mathbf{3.55 \pm 0.18}$ & $\mathbf{0.962 \pm 0.191}$ & $\mathbf{0.0139 \pm 0.00}$ & $\mathbf{0.0085 \pm 0.00}$ & $\mathbf{0.809}$ \\

\hline
\end{tabular}

\end{center}
\caption{Autoencoder results. NSA-AE outperforms or almost matches all other approaches on all the evaluation metrics. RTD-AE, which explicitly minimizes on RTD has a slightly lower RTD value while PCA has marginally higher Triplet Ranking Accuracy on Cluster Centers.}
\label{tab:autoencoder-table-ablation}
\end{table*}

NSA consists of two parts, GlobalNSA and LocalNSA. These two components work well together and are capable of working individually too. LocalNSA makes locally optimal decisions but it could run into a problem when the global structure needs to be preserved. Figure \ref{fig:fail_lnsa} shows a scenario where LocalNSA would not obtain a high k-NN consistency between two representation spaces. This could potentially explain the low kNN consistency when minimizing with just LocalNSA in Table \ref{tab:autoencoder-table-ablation}. LocalNSA's inability to preserve global structure becomes especially prevalent in our LinkPrediction tests in Table \ref{tab:downtest_ablation}. We intend on further improving LocalNSA and coming up with better preprocessing schemes to optimize its performence. 
In practice, GNSA works well and performs competitively even when used without LNSA. LNSA provides a noticeable improvement but requires dataset specific fine tuning to find the optimal hyperparameter set up.

\begin{figure}[!ht]
    \centering
    \includegraphics[width=0.5\textwidth]{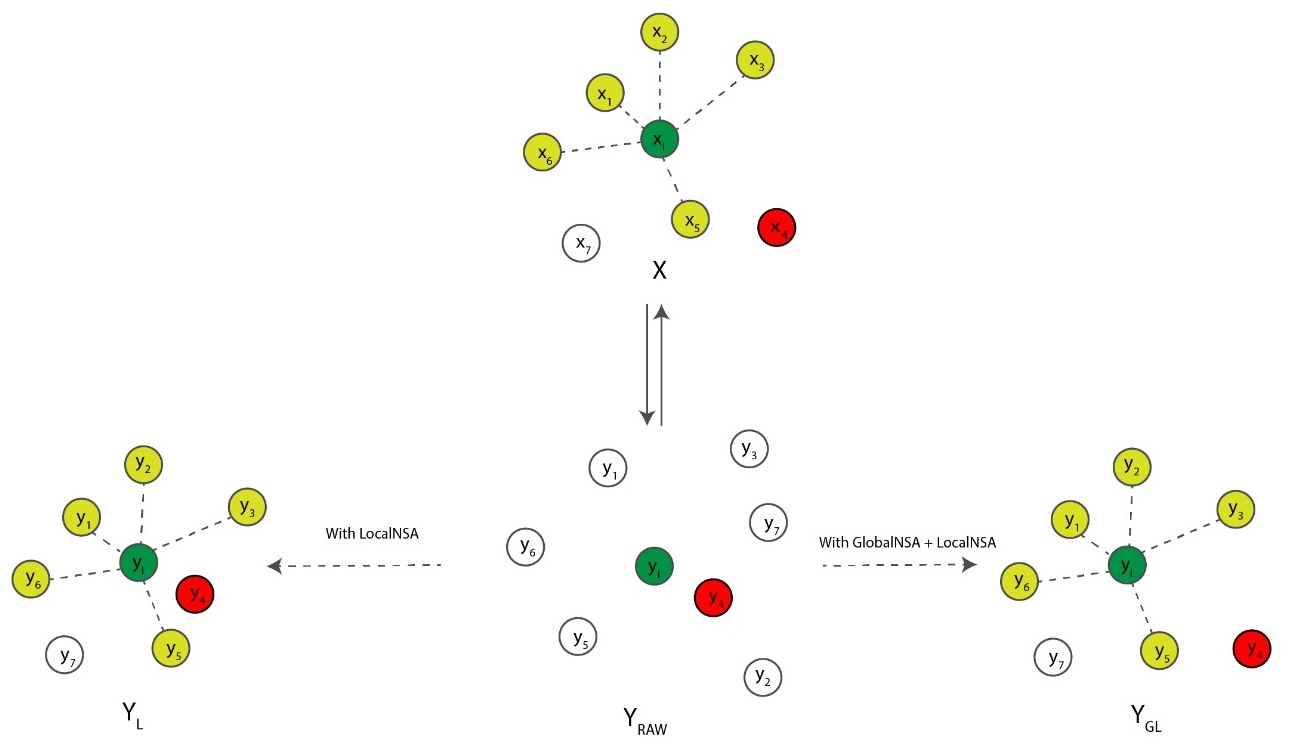}
    \caption{Illustration showing why LocalNSA might not maintain neighborhood consistency when used without GlobalNSA. The objective is to minimize the structural discrepancy between the reference space X (top) and generated space Y (mid-bottom). LocalNSA aims to minimize the difference in distances for the k-nearest neighbors of the $i^{th}$ point in X space (in yellow). If a random initialization causes an unknown point (in red) to move close to the $i^{th}$ point in Y space, LocalNSA will not move it out, instead focusing on bringing the 5 NNs of X close to i in Y causing a drop in neighborhood consistency $Y_L$ (left-bottom). Using GlobalNSA with LocalNSA alleviates this issue as GlobalNSA pushes point 4 away in the $Y_{GL}$ (right-bottom) .}
    \label{fig:fail_lnsa}
\end{figure}

\begin{table*}[!ht]
\footnotesize
\centering
\renewcommand{\arraystretch}{1.1}
\begin{center}
\begin{tabular}{llllll}
\hline & & \multicolumn{4}{c}{ Dataset } \\
\cline { 3 - 6 } Method & Latent Dim & Amazon Comp & Cora & Citeseer & Pubmed \\

\hline
        & 128 & 95.74 & 96.04 & 97.18 & 97.31 \\
G-NSA-AE  & 64 & \textbf{95.90} & 95.54 & 96.85 & 96.40 \\
        & 32 & \textbf{96.01} & 95.37 & 94.25 & 97.53 \\

\hline
            & 128  & 76.82 & 50.01 & 50.13 & 93.03\\
L-NSA-AE      & 64   & 74.24 & 50.69 & 50.25 & 93.64 \\
            & 32   & 80.67 & 55.44 & 51.43 & 95.69 \\
\hline
            & 128 & \textbf{95.75} & \textbf{97.83} & \textbf{97.91} & \textbf{97.37} \\
NSA-AE  & 64  & 94.60 & \textbf{97.89} & \textbf{98.38} & \textbf{96.97} \\
            & 32  & 95.76 & \textbf{97.68} & \textbf{96.11} & \textbf{97.66} \\

\hline
\end{tabular}

\end{center}
\caption{Downstream task analysis with Link Prediction. The output embeddings from a GCN trained on link prediction are passed through all 3 autoencoder architectures. Latent embeddings are obtained at various dimensions and ROC-AUC scores are calculated on the latent embeddings. Using both LNSA and GNSA gives the best performance.}
\label{tab:downtest_ablation}

\end{table*}

\subsection{Visualizing The Effect with Spheres Dataset}
We use the Spheres dataset \citep{TopoAE}, reducing the original 100 dimension data to 2 dimensions to visually observe the effect that different scaling factors for GNSA and LNSA have. The model is trained on a batch size of 200 and with no rescaling applied to the Spheres dataset. Additionally, NSA is the only loss applied here while in the autoencoder experiments, we also use MSE loss to contain the latent space. We observe that neither LNSA or GNSA do a perfect job when working individually. LNSA seems to work best with a scaling factor of 0.5 or 1 while GNSA starts exhibiting stable behavior when $g>5$. 
\begin{figure}[ht]
    \centering
    \includegraphics[width=\textwidth]{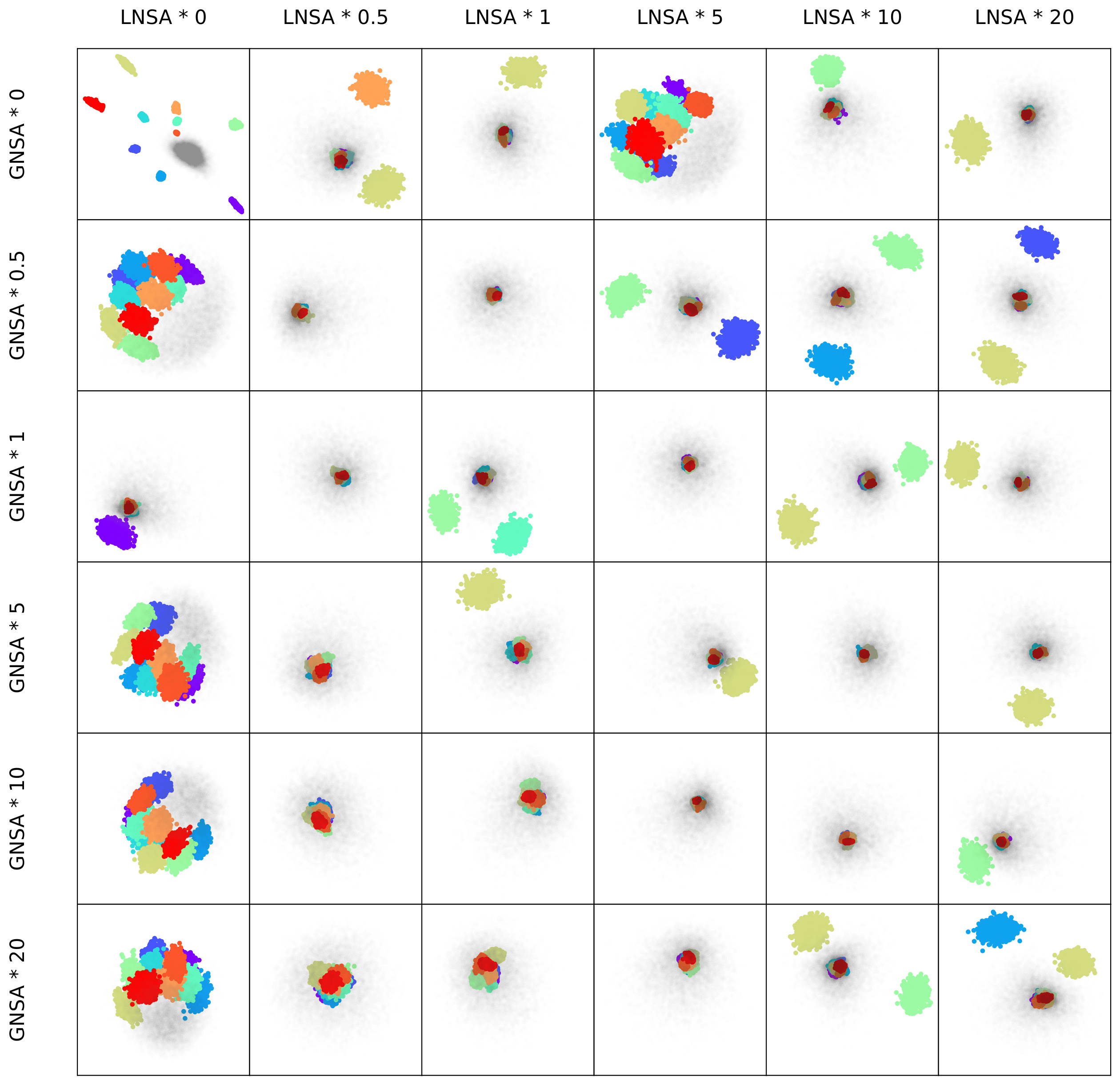}
    \caption{Illustrating the effect that different values of land g have on convergence on the Spheres Dataset.}
    \label{fig:spheres_ablation}
\end{figure}

\clearpage
\newpage
\section{Visualizing the latent embeddings from Autoencoders}\label{sec:tsne_ae}
We use t-SNE\citep{tSNEPaper} to reduce the latent embeddings obtained from all 4 datasets used in Table \ref{tab:autoencoder-table} to reduce their dimension from 16 to 3. The results are presented in Figure \ref{fig:tSNE_results}. We observe that t-SNE generates similar clusters across all three architectures; the basic Autoencoder, RTD-Autoencoder and NSA-Autoencoder.\\
It is important to note that the observation of similar clustering patterns between AE and NSA-AE embeddings in t-SNE plots does not necessarily indicate identical representation structures in the original high-dimensional latent spaces. This is because t-SNE's emphasis on local similarities can result in visually analogous clusters even when the global structures or the exact nature of the representations differ significantly between the two methods. NSA-AE’s latent space captures the exact structure of the input representation space significantly better than a normal AE, preserving the data's geometric and topological properties. This is demonstrated by NSA-AE’s latent embeddings having superior accuracy in downstream tests where exact structure preservation is vital (Section \ref{sec:lp_ae} and Appendix \ref{sec:sts_ae}).

\begin{figure}[h]

    \centering
    \begin{subfigure}{0.25\textwidth}
        \includegraphics[width=\linewidth]{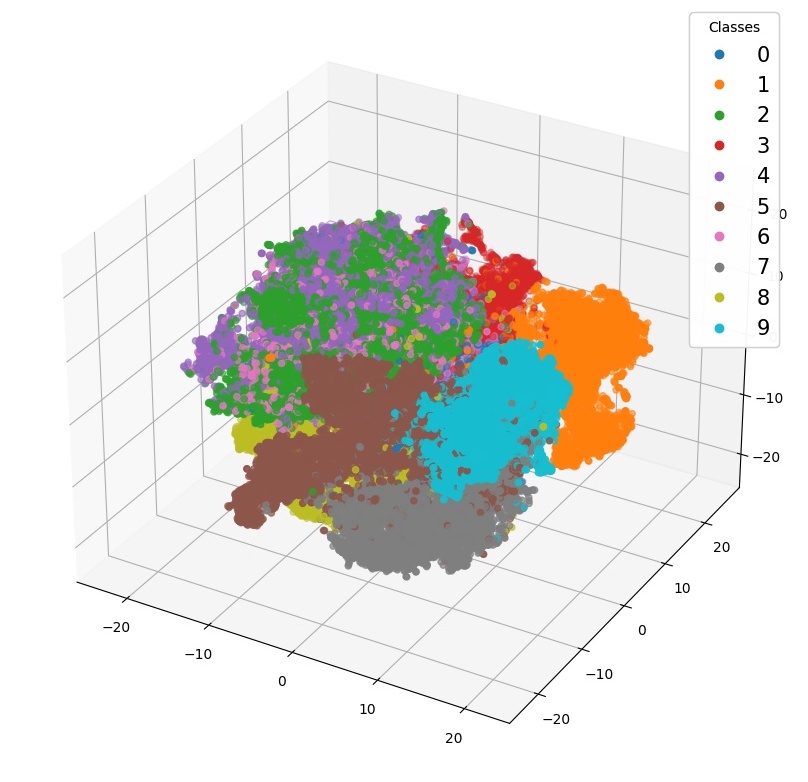}
        \caption{F-MNIST on AE}
    \end{subfigure}
    \begin{subfigure}{0.25\textwidth}
        \includegraphics[width=\linewidth]{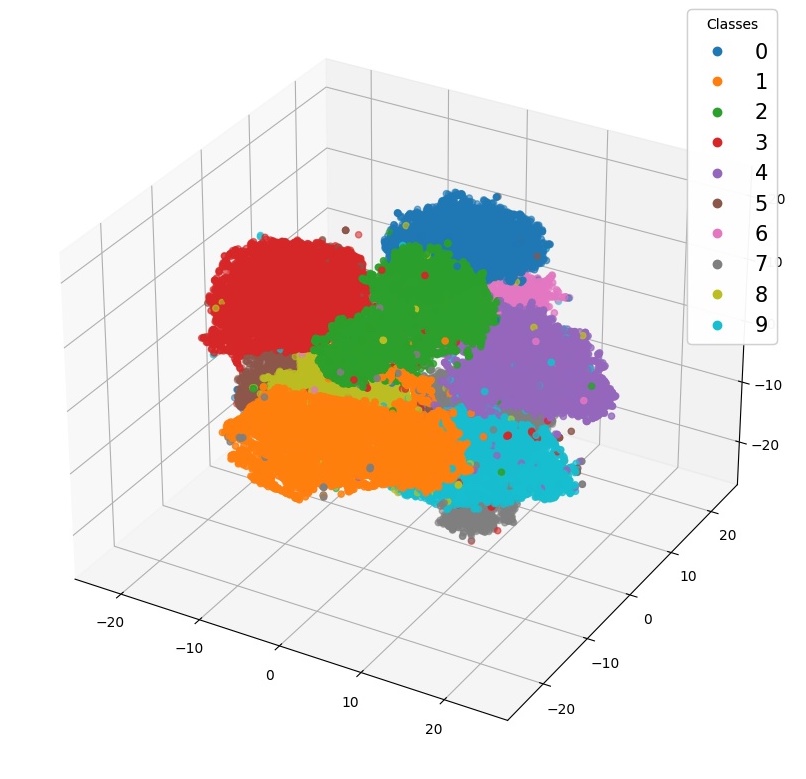}
        \caption{MNIST on AE}
    \end{subfigure}
    \begin{subfigure}{0.25\textwidth}
        \includegraphics[width=\linewidth]{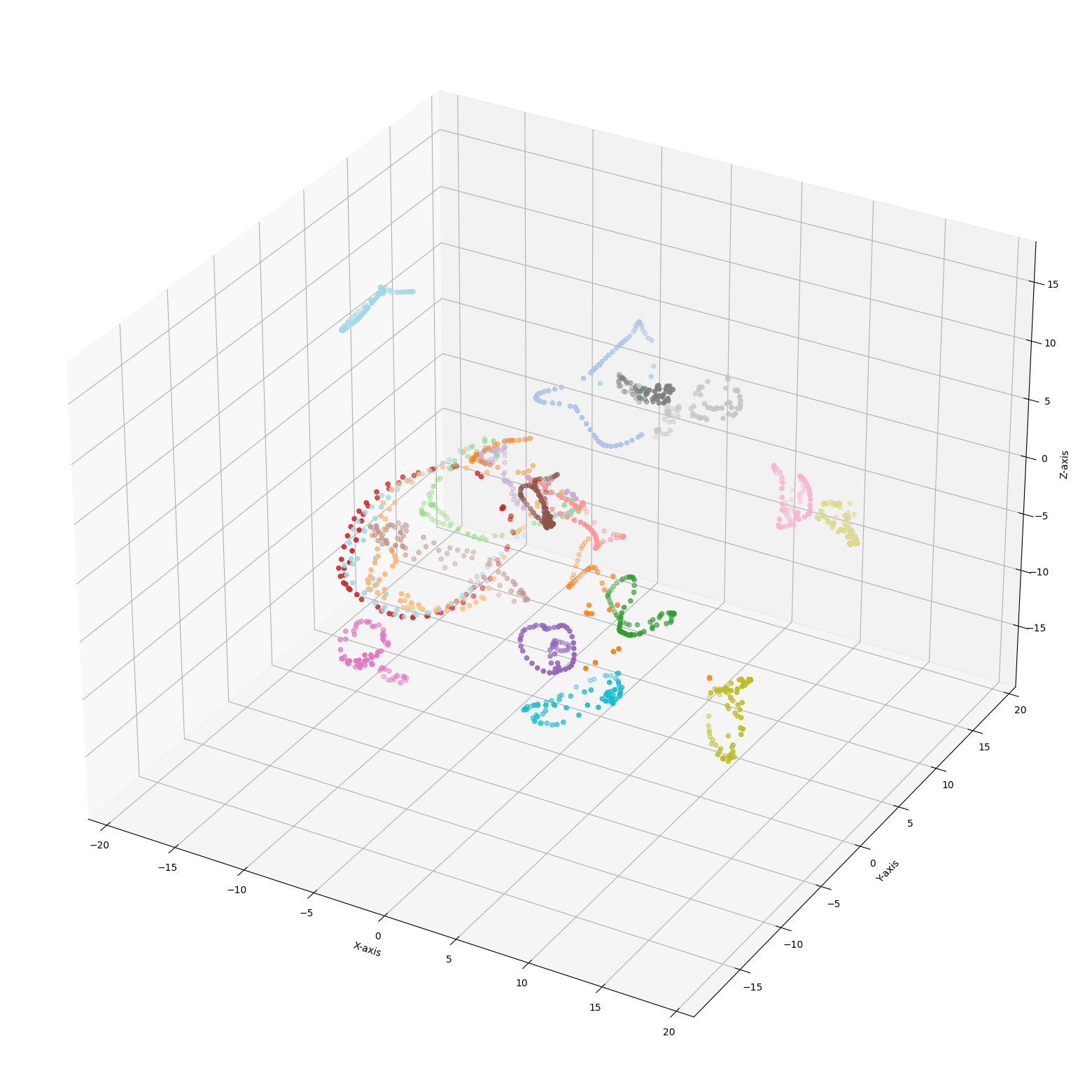}
        \caption{COIL-20 on AE}
    \end{subfigure}
    \begin{subfigure}{0.25\textwidth}
        \includegraphics[width=\linewidth]{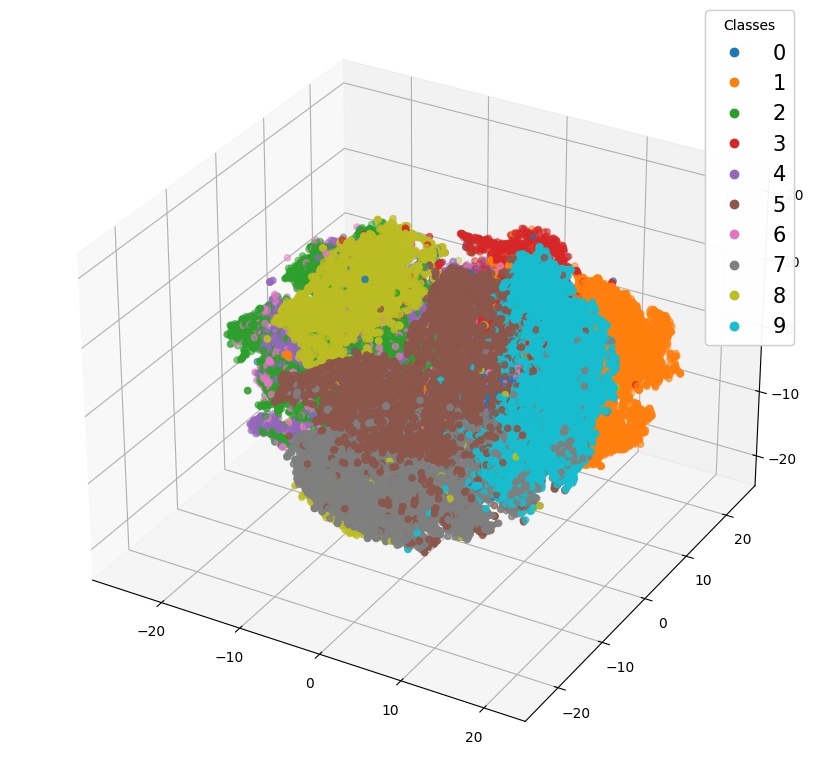}
        \caption{F-MNIST on RTD-AE}
    \end{subfigure}
    \begin{subfigure}{0.25\textwidth}
        \includegraphics[width=\linewidth]{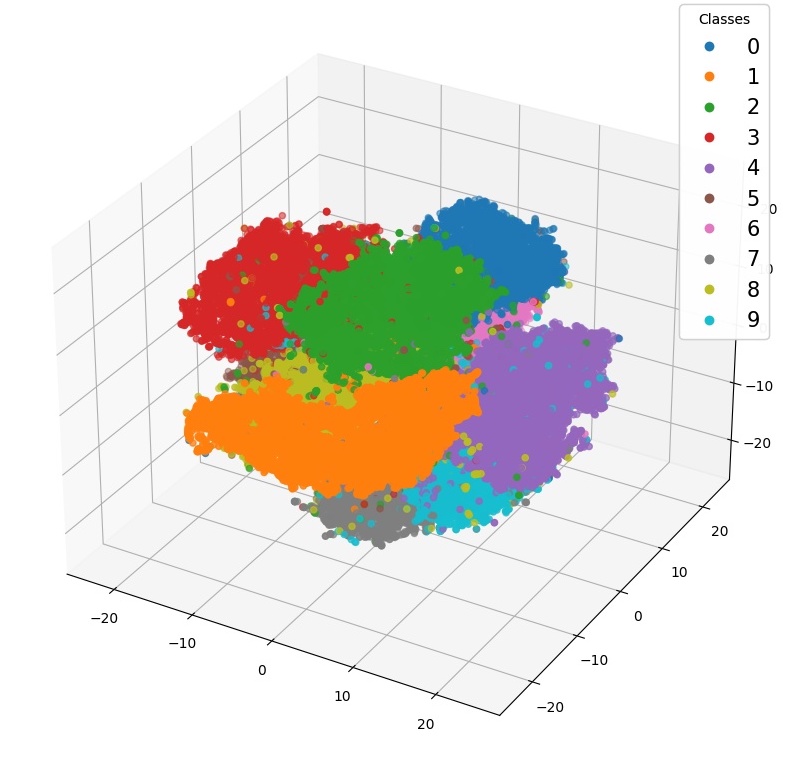}
        \caption{MNIST on RTD-AE}
    \end{subfigure}
    \begin{subfigure}{0.25\textwidth}
        \includegraphics[width=\linewidth]{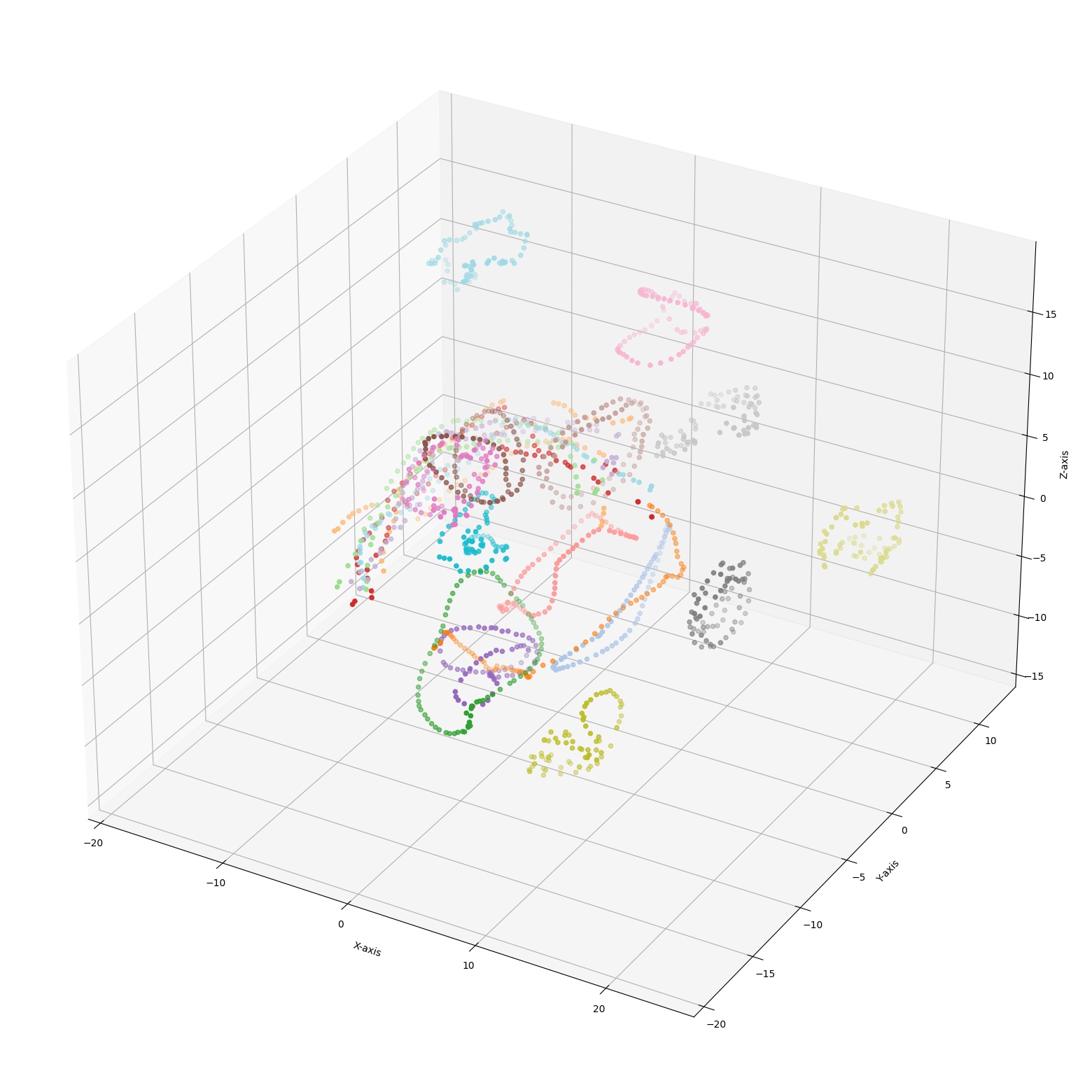}
        \caption{COIL-20 on RTD-AE}
    \end{subfigure}
    
    \begin{subfigure}{0.25\textwidth}
        \includegraphics[width=\linewidth]{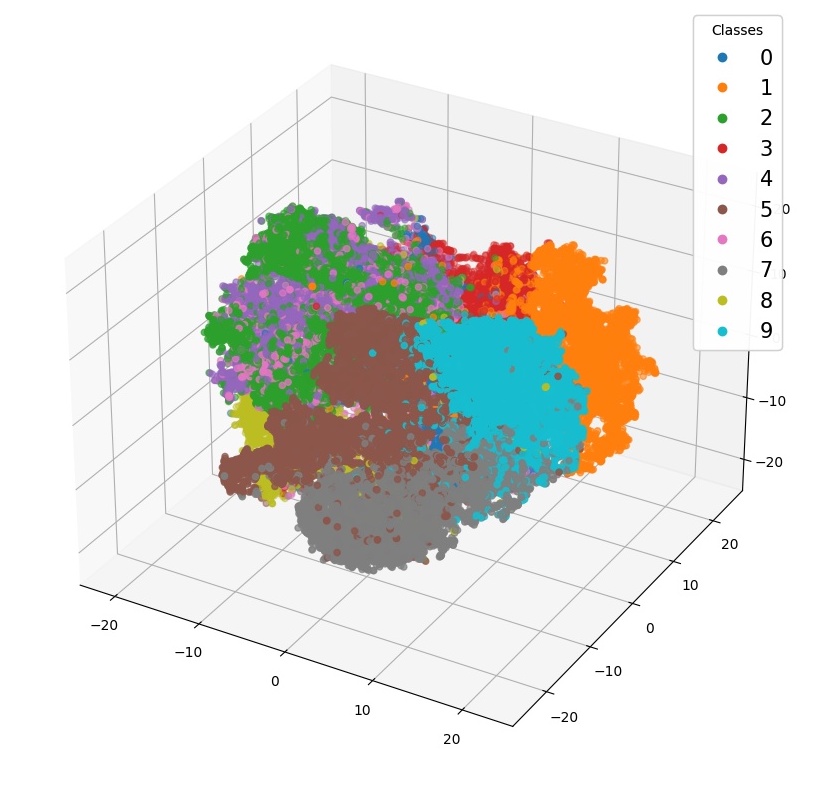}
        \caption{F-MNIST on NSA-AE}
    \end{subfigure}
    \begin{subfigure}{0.25\textwidth}
        \includegraphics[width=\linewidth]{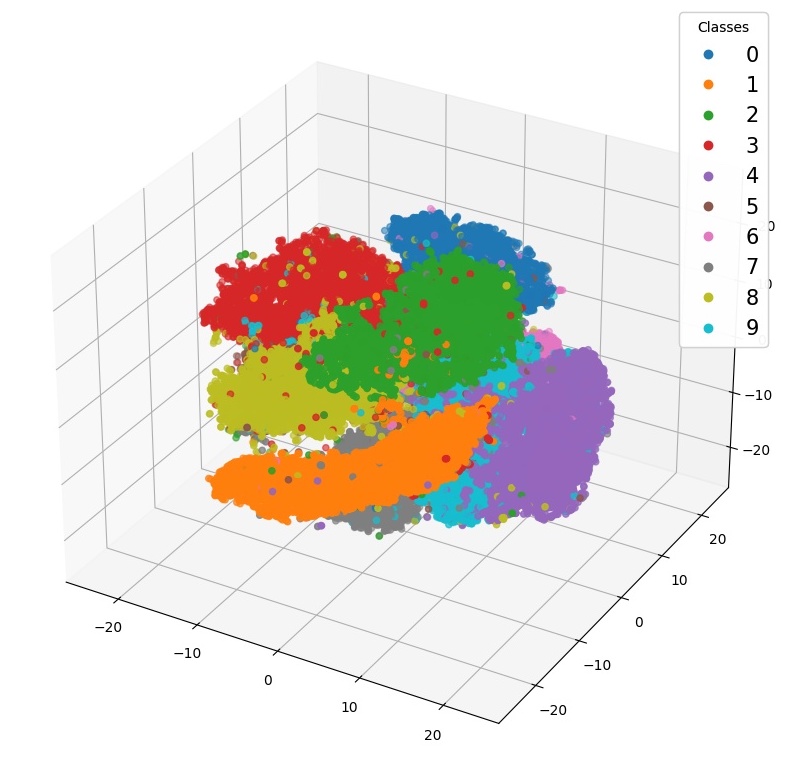}
        \caption{MNIST on NSA-AE}
    \end{subfigure}
    \begin{subfigure}{0.25\textwidth}
        \includegraphics[width=\linewidth]{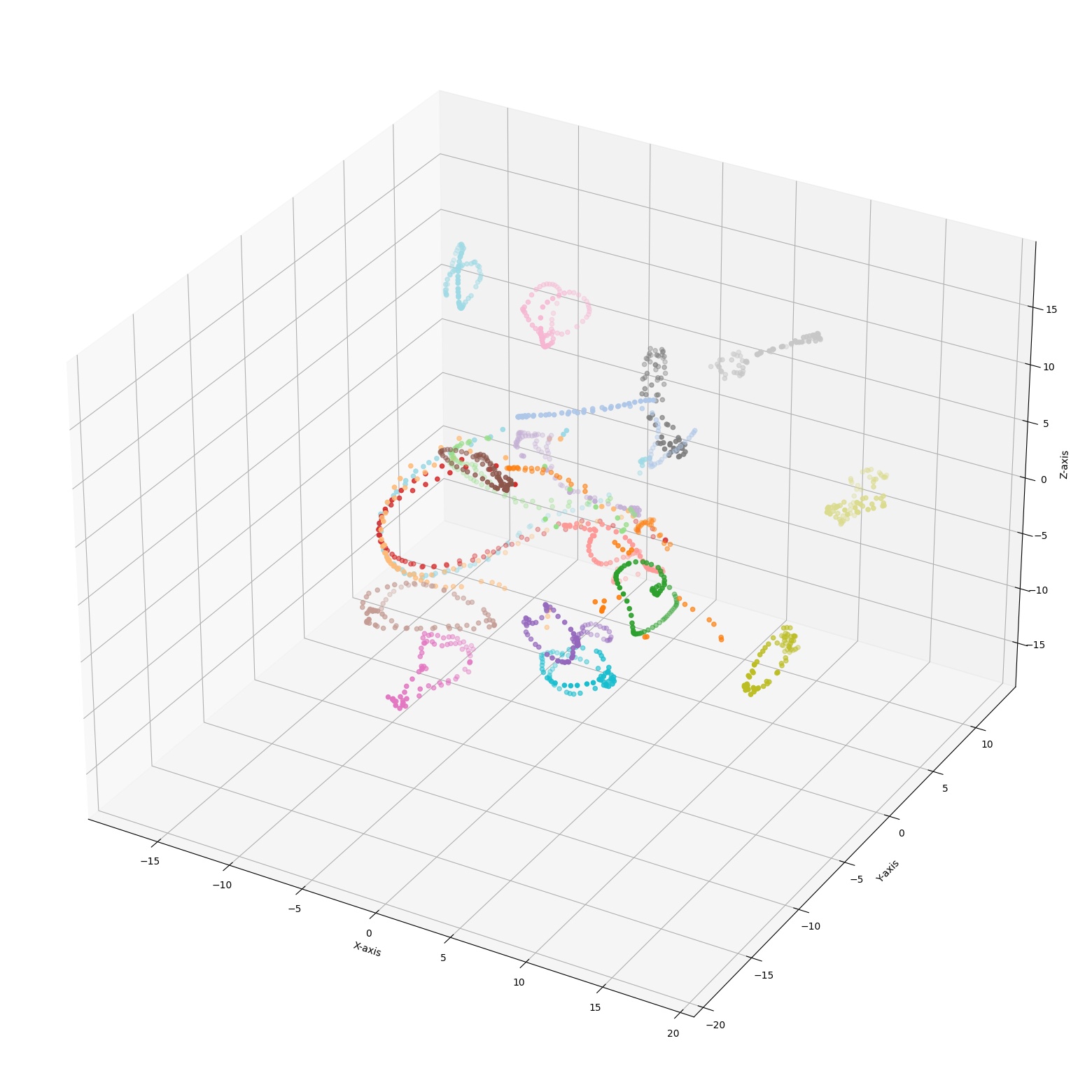}
        \caption{COIL-20 on NSA-AE}
    \end{subfigure}

    \caption{Visualizing the latent representations of the autoencoders with t-SNE}
    \label{fig:tSNE_results}
\end{figure}

\section{Reproducibility Statement}\label{sec:reproduce}
Our code is anonymously available at \href{https://anonymous.4open.science/r/NSA-B1EF}{https://anonymous.4open.science/r/NSA-B1EF}. The code includes notebooks with instructions to reproduce all the experiments presented in this paper. Hyperparameter setups are available in Appendices \ref{sec:hyper_ae}, \ref{sec:hyper_gnn} and \ref{sec:datasets}. Detailed instructions are given in the README and in the notebooks. All the code was run on a conda environment with a single NVIDIA V100 GPU.

\section{Approximating the manifold of a representation space} \label{sec:manifold_approx}
Incorporating a structure-preserving metric with MSE, NSA-AE maintains the Euclidean distances between data points as it compresses the data into a lower-dimensional space. High-dimensional data typically resides in a lower-dimensional manifold ~\citep{GBC16}, where the Euclidean distance approximates the geodesic distance. Utilizing this approximation allows NSA-AE to unfold complex datasets to their manifold dimensions, providing a clearer interpretation of underlying data relationships that are often obscured in higher dimensional ambient spaces.

Our experiments with the Swiss Roll dataset illustrate the effectiveness of this approach. Figure \ref{fig:swissroll}(b) and \ref{fig:swissroll}(c) from our results display the outcomes of dimensionality reduction when prioritizing Euclidean distances versus geodesic distances. Preserving the geodesic distances lets us construct a more accurate representation of the Swiss Roll's original manifold structure. Conversely, in cases where the manifold dimension of the input data remains unknown, the latent dimension at which NSA minimization becomes optimal inherently corresponds to the manifold dimension of the data. This observation underscores the practical significance of understanding the manifold dimension when leveraging NSA for dimensionality reduction within an autoencoder framework.

We also present results on dimensionality reduction for the Spheres Dataset from \citet{TopoAE}. We show that you can use NSA-AE to reduce the 100 dimension dataset to 3 or even 2 dimensions and still preserve the structure of the representation space in Figure \ref{fig:spheres}. While manifold learning techniques such as ISOMAP \citep{ISOMAP} and MDS \citep{Kruskal1964} are also capable of unfolding complex datasets and preserving geodesic distances, they require access to the entire dataset for computation. This necessitates significant computational resources, particularly for large datasets. In contrast, NSA is able to achieve the same result while operating on mini-batches, making it substantially more computationally efficient and scalable.

\begin{figure}[h]
    \centering
    \begin{subfigure}{0.23\linewidth}
        \includegraphics[width=\linewidth]{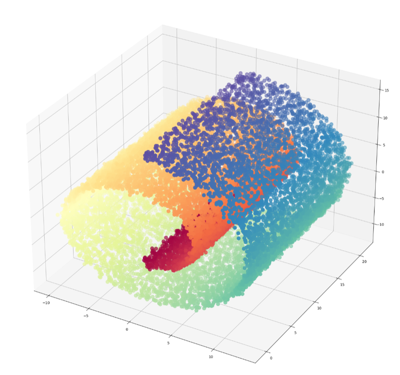}
    \caption{}
    \end{subfigure}
    \begin{subfigure}{0.23\linewidth}
        \includegraphics[width=\linewidth]{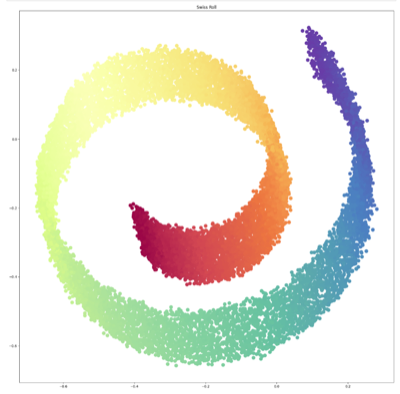}
        \caption{}
    \end{subfigure}
    \begin{subfigure}{0.23\linewidth}
        \includegraphics[width=\linewidth]{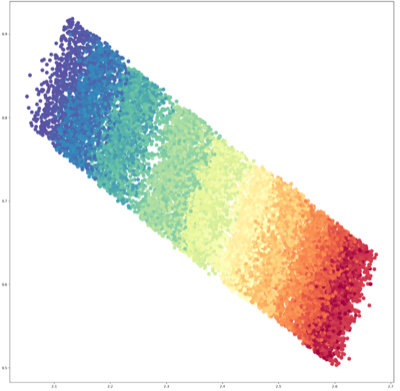}
        \caption{}
    \end{subfigure}
    \caption{Results of NSA-AE on the Swiss Roll Dataset. (a) The original Swiss Roll Dataset in 3D. (b) Result of dimensionality reduction to 2D using NSA-AE when minimizing euclidean distance. (c) Result of dimensionality reduction to 2D using NSA-AE when minimizing geodesic distance}
    \label{fig:swissroll}
\end{figure}

\begin{figure}[H]
    \centering
    \begin{subfigure}{0.24\textwidth}
        \centering
        \includegraphics[width=\textwidth]{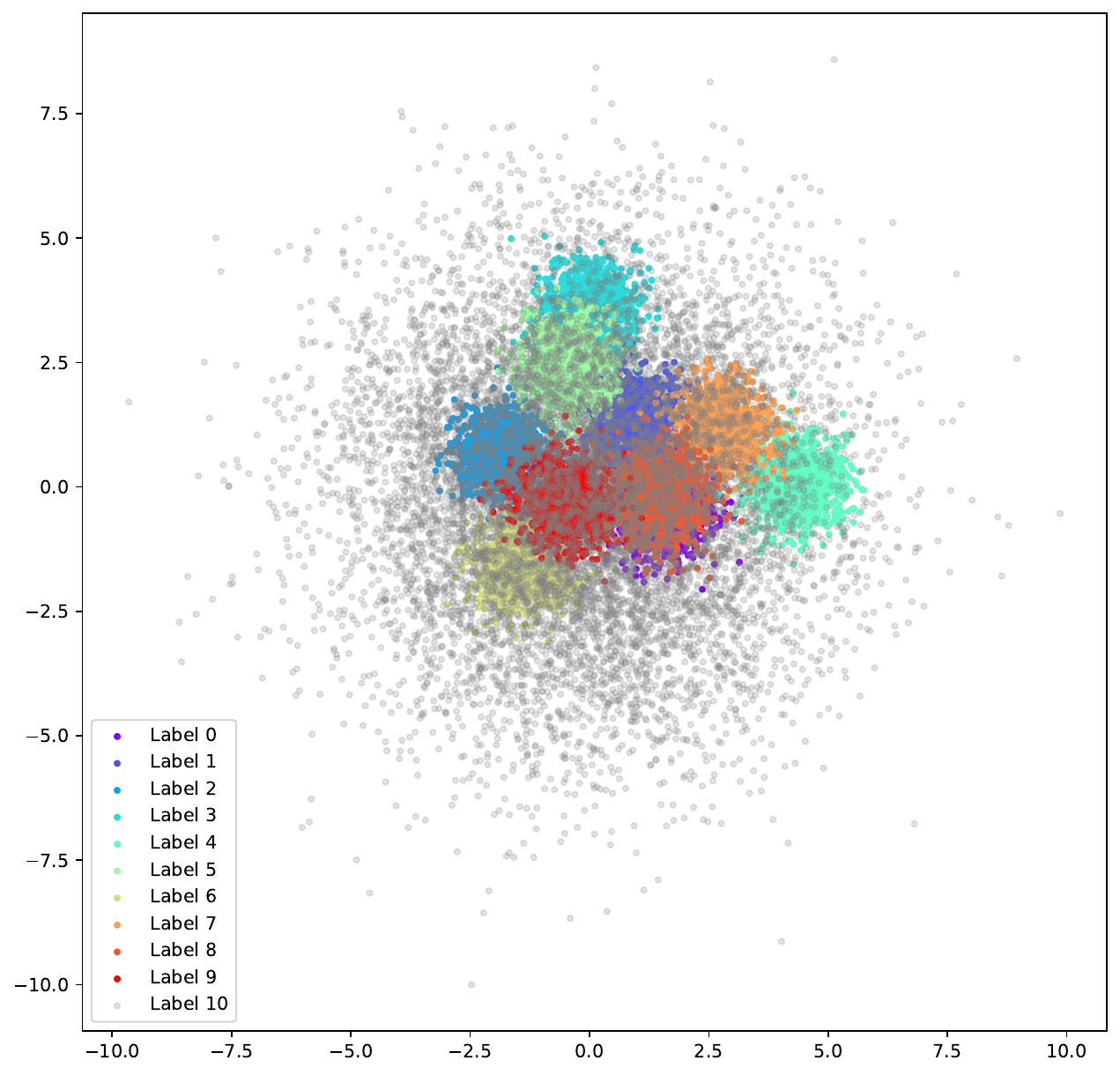}
        \caption{}
    \end{subfigure}
    \hfill
        \begin{subfigure}{0.235\textwidth}
        \centering
        \includegraphics[width=\textwidth]{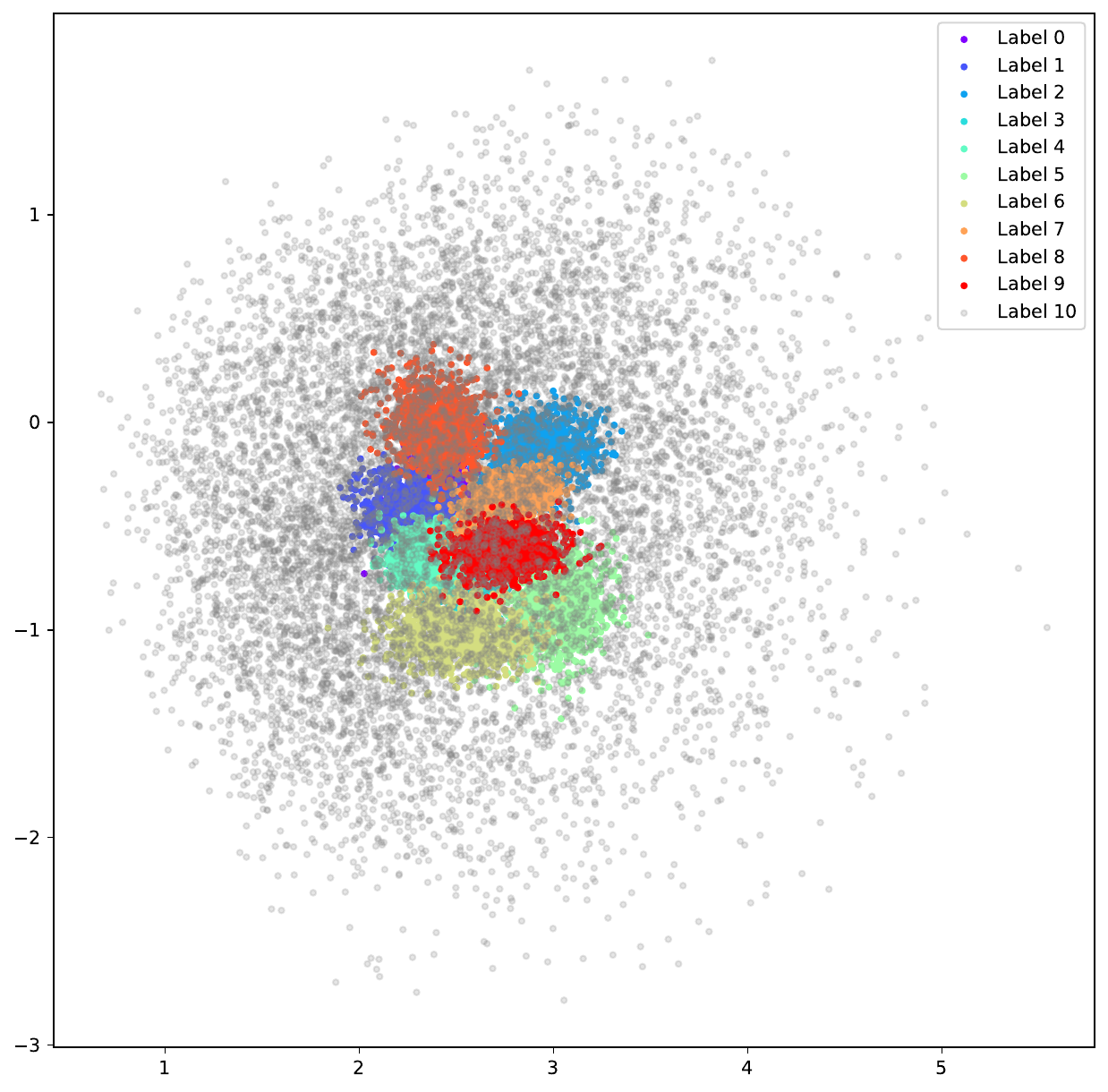}
        \caption{}
    \end{subfigure}
    \hfill
    \begin{subfigure}{0.24\textwidth}
        \centering
        \includegraphics[width=\textwidth]{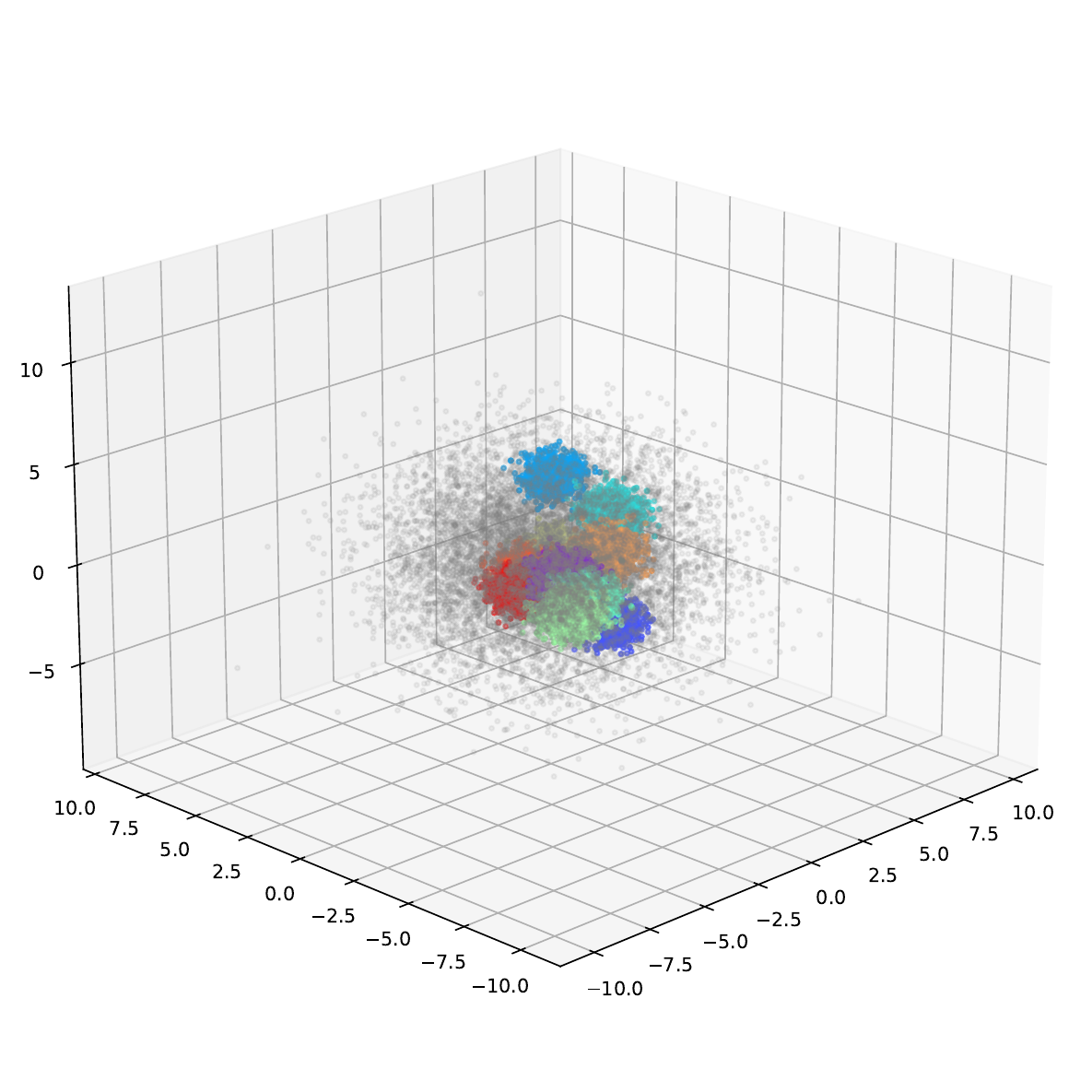}
        \caption{}
    \end{subfigure}
    \hfill
    \begin{subfigure}{0.24\textwidth}
        \centering
        \includegraphics[width=\textwidth]{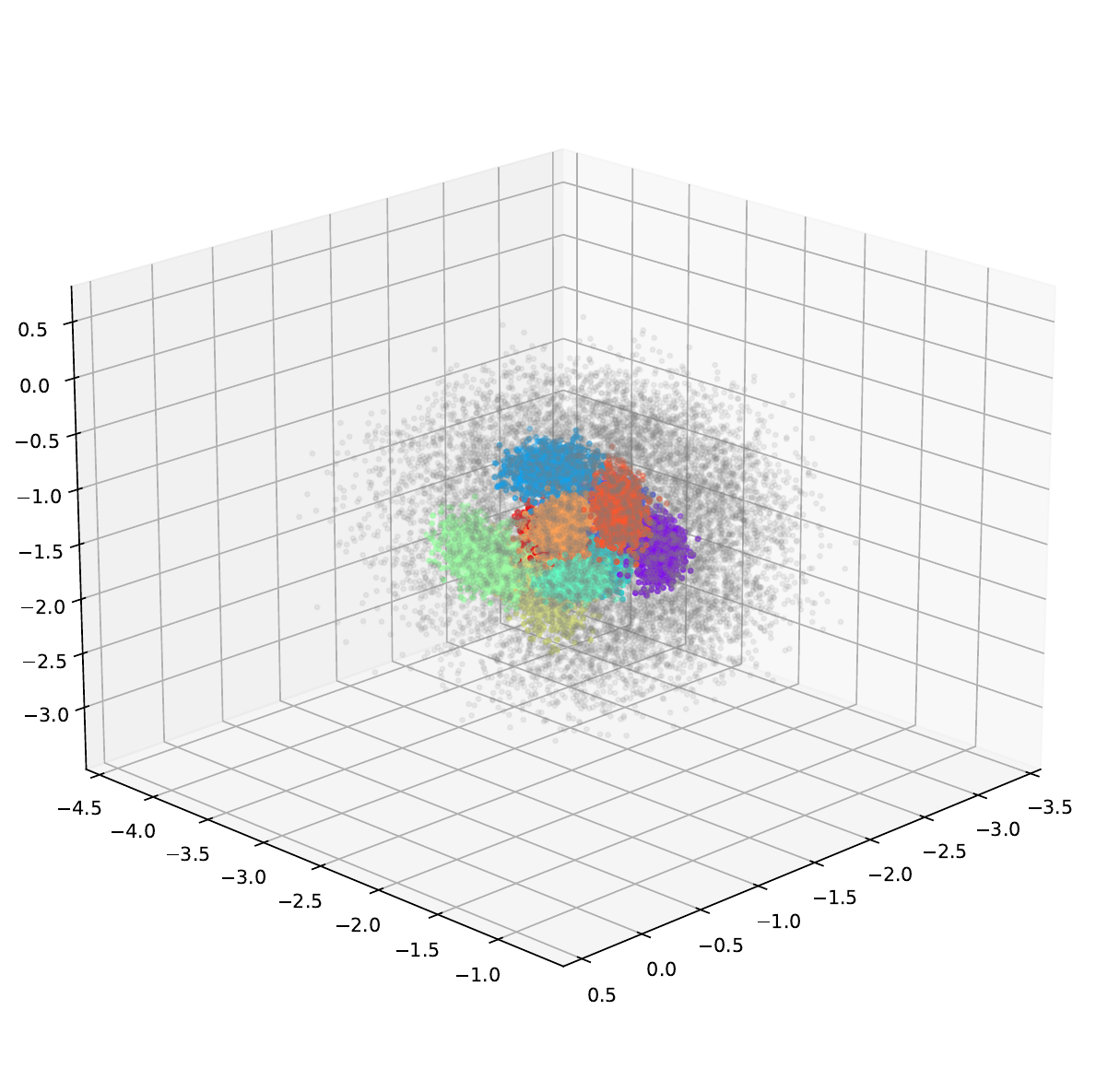}
        \caption{}
    \end{subfigure}
    \caption{Dimensionality reduction on the Spheres dataset with NSA-AE. (a) 2 randomly chosen dimensions from the 100 dimension dataset (b) 2D-Latent Space obtained from NSA-AE (c) 3 randomly chosen dimensions from the 101 dimension dataset (d) 3D-Latent Space obtained from NSA-AE}
    \label{fig:spheres}
\end{figure}

\newpage

\section{Empirical Analysis on Additional Datasets}\label{sec:app_gnn_analysis}

Empirical analysis of intermediate representations with similarity measures can help uncover the intricate workings of various neural network architectures. When two networks that are structurally identical but trained from different initial conditions are compared, we expect that, for each layer's intermediate representation, the most similar representation in the other model should be the corresponding layer that matches in structure. We can also perform cross-layer analysis and cross-downstream task analysis of the intermediate representations of a neural network to help uncover learning patterns of a network. We use 4 different GNN architectures; GCN \citep{GCNPaper}, GraphSAGE \citep{GSAGEPaper}, ClusterGCN \citep{CGCNPaper} and Graph Attention Network (GAT) \citep{GATPaper}. We evaluate with Sanity Tests, Cross Architecture Tests, Downstream Task tests and Convergence Tests. We utilize the Amazon Computers Dataset \citep{amazondata}, Cora \citep{cora}, Citeseer \citep{citeseer}, Pubmed \citep{pubmed} and Flickr \citep{flickrdata} Datasets for our experiments.

\subsection{Initialization Sanity Tests} 
The sanity test compares the intermediate representations obtained from each layer of the Graph Neural Network against all the other layers of another GNN with the exact same architecture but trained with a different initialization. we expect that, for each layer's intermediate representation, the most similar intermediate representation in the other model should be the corresponding layer that matches in structure. We only demonstrate the performance of NSA for both Node Classification and Link Prediction.

\begin{figure}[h]
    \centering
    \begin{subfigure}{0.09\textwidth}
        \centering
        \textbf{\medskip}
    \end{subfigure}
    \begin{subfigure}{0.22\textwidth}
        \centering
        \textbf{GCN}
    \end{subfigure}
    \begin{subfigure}{0.22\textwidth}
        \centering
        \textbf{GSAGE}
    \end{subfigure}
    \begin{subfigure}{0.22\textwidth}
        \centering
        \textbf{GAT}
    \end{subfigure}
    \begin{subfigure}{0.22\textwidth}
        \centering
        \textbf{CGCN}
    \end{subfigure}

    \begin{subfigure}{0.09\textwidth}
        \centering
        Amazon\\
        \textbf{}\\
        \textbf{}\\
        \textbf{}\\
    \end{subfigure}
    \begin{subfigure}{0.22\textwidth}
        \includegraphics[width=0.95\linewidth]{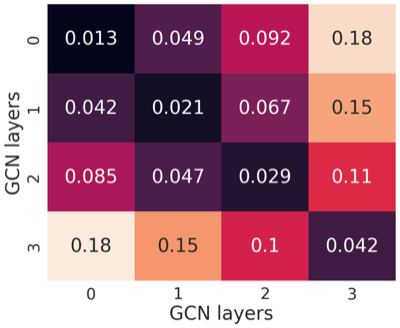}
    \end{subfigure}
    \begin{subfigure}{0.22\textwidth}
        \includegraphics[width=0.95\linewidth]{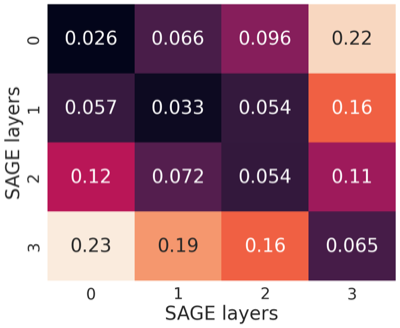}
    \end{subfigure}
    \begin{subfigure}{0.22\textwidth}
        \includegraphics[width=0.95\linewidth]{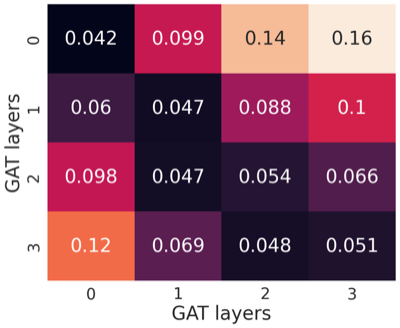}
    \end{subfigure}
    \begin{subfigure}{0.22\textwidth}
        \includegraphics[width=0.95\linewidth]{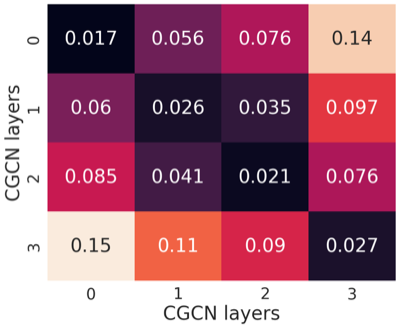}
    \end{subfigure}

    \begin{subfigure}{0.09\textwidth}
        \centering
        Cora\\
        \textbf{}\\
        \textbf{}\\
        \textbf{}\\
    \end{subfigure}
    \begin{subfigure}{0.22\textwidth}
        \includegraphics[width=0.95\linewidth]{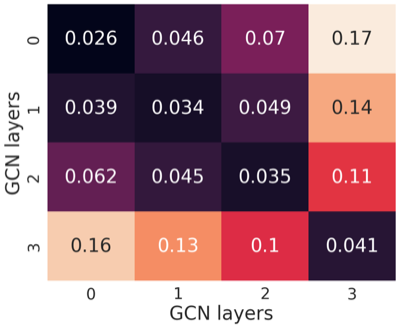}
    \end{subfigure}
    \begin{subfigure}{0.22\textwidth}
        \includegraphics[width=0.95\linewidth]{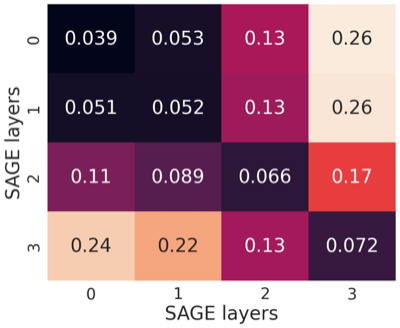}
    \end{subfigure}
    \begin{subfigure}{0.22\textwidth}
        \includegraphics[width=0.95\linewidth]{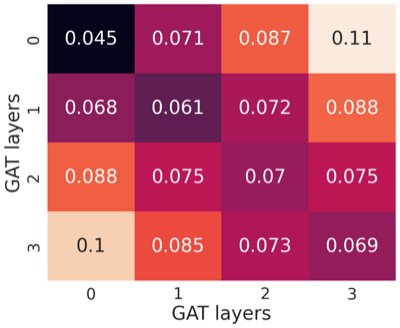}
    \end{subfigure}
    \begin{subfigure}{0.22\textwidth}
        \includegraphics[width=0.95\linewidth]{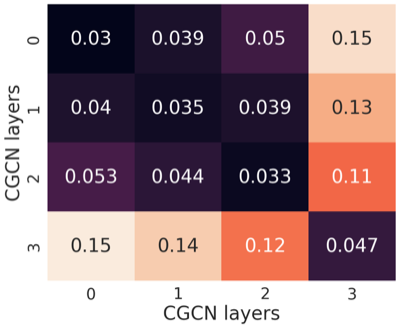}
    \end{subfigure}

    \begin{subfigure}{0.09\textwidth}
        \centering
        Citeseer\\
        \textbf{}\\
        \textbf{}\\
        \textbf{}\\
    \end{subfigure}
    \begin{subfigure}{0.22\textwidth}
        \includegraphics[width=0.95\linewidth]{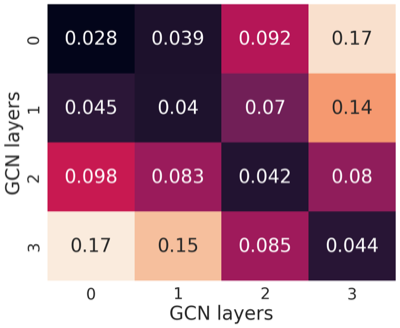}
    \end{subfigure}
    \begin{subfigure}{0.22\textwidth}
        \includegraphics[width=0.95\linewidth]{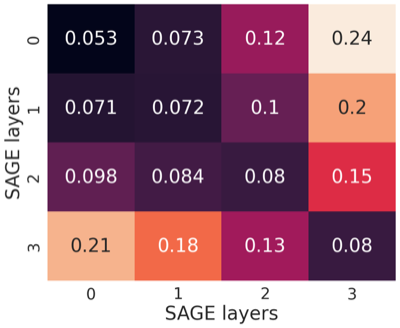}
    \end{subfigure}
    \begin{subfigure}{0.22\textwidth}
        \includegraphics[width=0.95\linewidth]{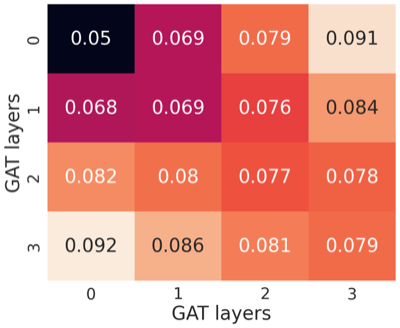}
    \end{subfigure}
    \begin{subfigure}{0.22\textwidth}
        \includegraphics[width=0.95\linewidth]{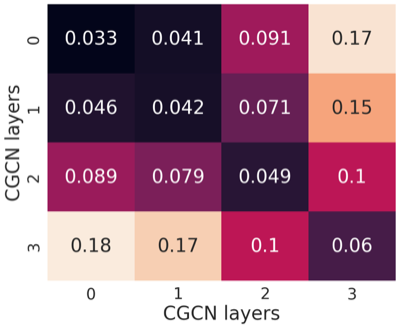}
    \end{subfigure}

    \begin{subfigure}{0.09\textwidth}
        \centering
        Pubmed\\
        \textbf{}\\
        \textbf{}\\
        \textbf{}\\
    \end{subfigure}
    \begin{subfigure}{0.22\textwidth}
        \includegraphics[width=0.95\linewidth]{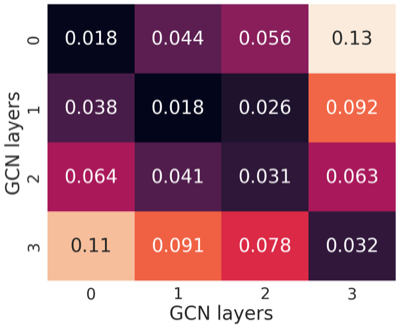}
    \end{subfigure}
    \begin{subfigure}{0.22\textwidth}
        \includegraphics[width=0.95\linewidth]{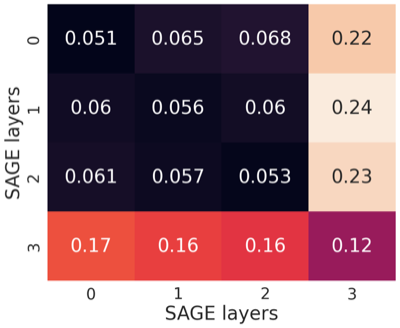}
    \end{subfigure}
    \begin{subfigure}{0.22\textwidth}
        \includegraphics[width=0.95\linewidth]{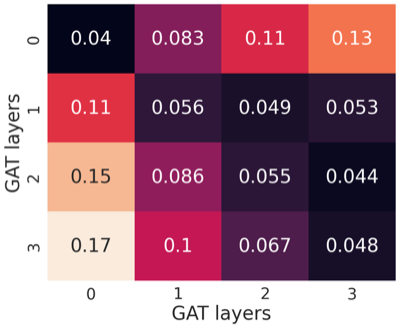}
    \end{subfigure}
    \begin{subfigure}{0.22\textwidth}
        \includegraphics[width=0.95\linewidth]{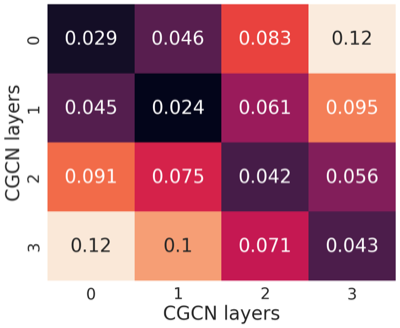}
    \end{subfigure}

    \caption{Sanity Tests for Link Prediction on 3 datasets. The heatmaps show layer-wise dissimilarity values for two different initialization of the same dataset on four different GNN architectures. }
    \label{fig:sanity_test_LP_others}
\end{figure}
\newpage

\begin{figure}[h]
    \centering
    \begin{subfigure}{0.09\textwidth}
        \centering
        \textbf{\medskip}
    \end{subfigure}
    \begin{subfigure}{0.22\textwidth}
        \centering
        \textbf{GCN}
    \end{subfigure}
    \begin{subfigure}{0.22\textwidth}
        \centering
        \textbf{GSAGE}
    \end{subfigure}
    \begin{subfigure}{0.22\textwidth}
        \centering
        \textbf{GAT}
    \end{subfigure}
    \begin{subfigure}{0.22\textwidth}
        \centering
        \textbf{CGCN}
    \end{subfigure}

    \begin{subfigure}{0.09\textwidth}
        \centering
        Amazon\\
        \textbf{}\\
        \textbf{}\\
        \textbf{}\\
    \end{subfigure}
    \begin{subfigure}{0.22\textwidth}
        \includegraphics[width=\linewidth]{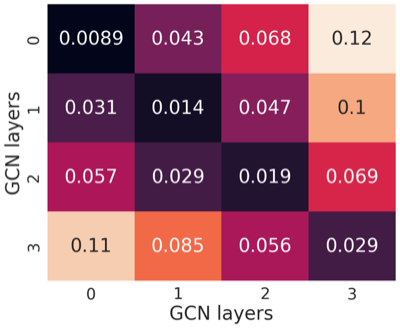}
    \end{subfigure}
    \begin{subfigure}{0.22\textwidth}
        \includegraphics[width=\linewidth]{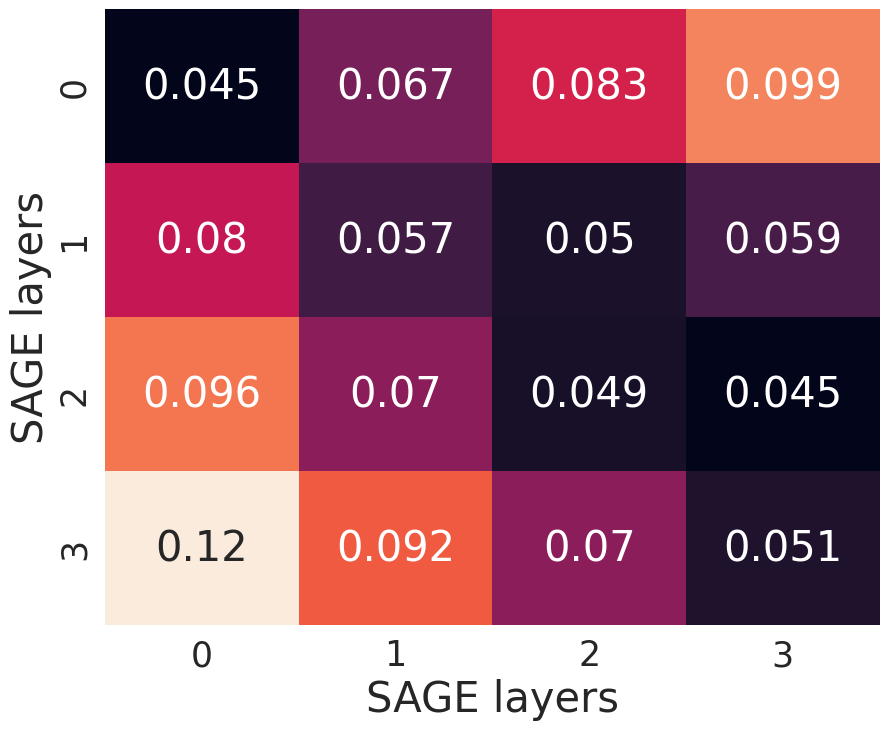}
    \end{subfigure}
    \begin{subfigure}{0.22\textwidth}
        \includegraphics[width=\linewidth]{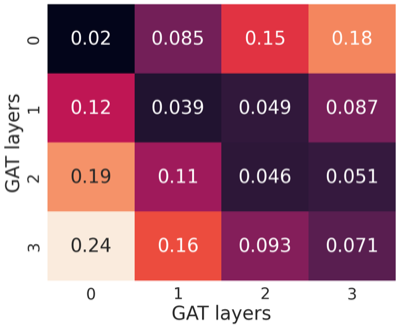}
    \end{subfigure}
    \begin{subfigure}{0.22\textwidth}
        \includegraphics[width=\linewidth]{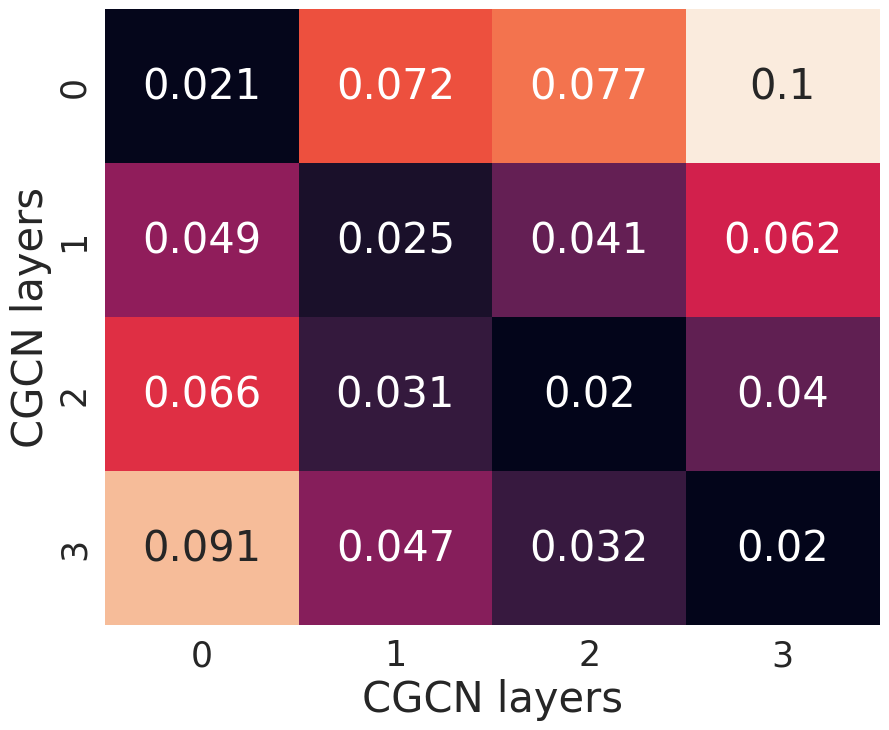}
    \end{subfigure}

    \begin{subfigure}{0.09\textwidth}
        \centering
        Cora\\
        \textbf{}\\
        \textbf{}\\
        \textbf{}\\
    \end{subfigure}
    \begin{subfigure}{0.22\textwidth}
        \includegraphics[width=\linewidth]{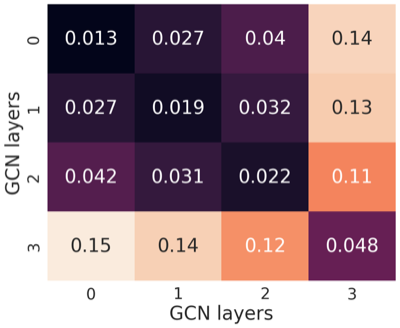}
    \end{subfigure}
    \begin{subfigure}{0.22\textwidth}
        \includegraphics[width=\linewidth]{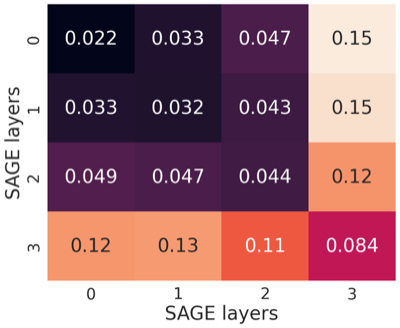}
    \end{subfigure}
    \begin{subfigure}{0.22\textwidth}
        \includegraphics[width=\linewidth]{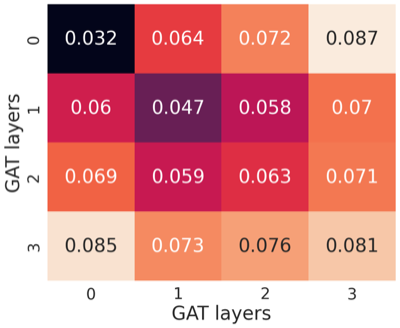}
    \end{subfigure}
    \begin{subfigure}{0.22\textwidth}
        \includegraphics[width=\linewidth]{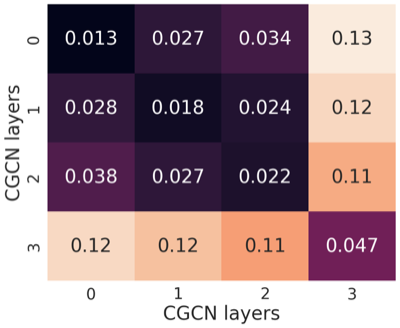}
    \end{subfigure}

    \begin{subfigure}{0.09\textwidth}
        \centering
        Citeseer\\
        \textbf{}\\
        \textbf{}\\
        \textbf{}\\
    \end{subfigure}
    \begin{subfigure}{0.22\textwidth}
        \includegraphics[width=\linewidth]{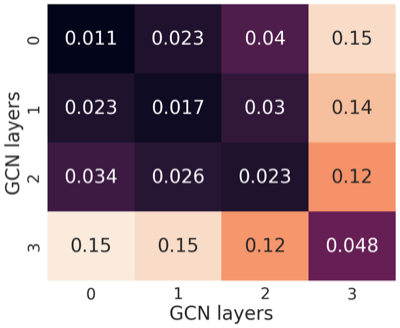}
    \end{subfigure}
    \begin{subfigure}{0.22\textwidth}
        \includegraphics[width=\linewidth]{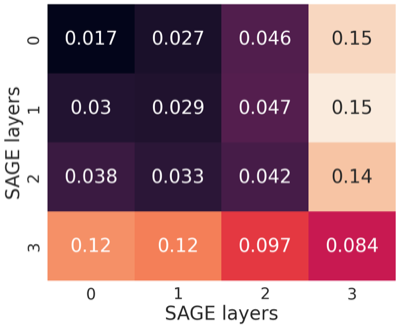}
    \end{subfigure}
    \begin{subfigure}{0.22\textwidth}
        \includegraphics[width=\linewidth]{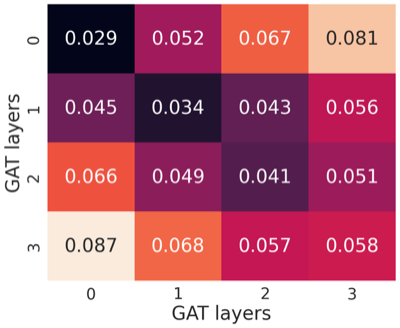}
    \end{subfigure}
    \begin{subfigure}{0.22\textwidth}
        \includegraphics[width=\linewidth]{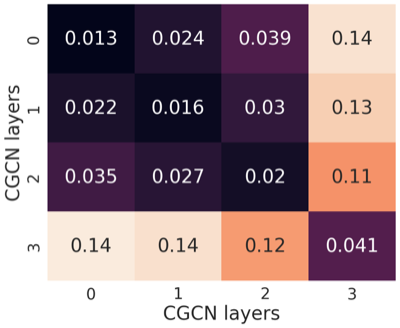}
    \end{subfigure}

    \begin{subfigure}{0.09\textwidth}
        \centering
        Pubmed\\
        \textbf{}\\
        \textbf{}\\
        \textbf{}\\
    \end{subfigure}
    \begin{subfigure}{0.22\textwidth}
        \includegraphics[width=\linewidth]{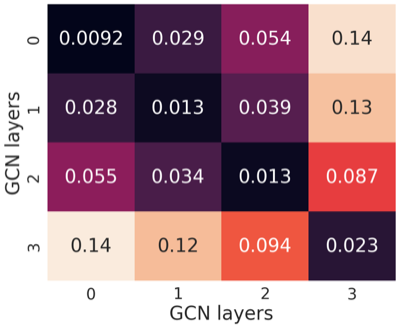}
    \end{subfigure}
    \begin{subfigure}{0.22\textwidth}
        \includegraphics[width=\linewidth]{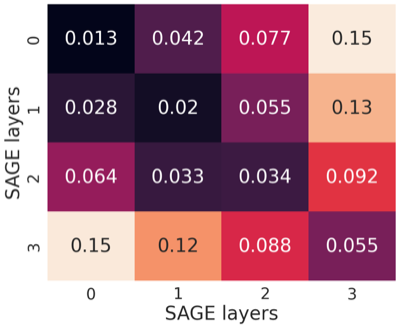}
    \end{subfigure}
    \begin{subfigure}{0.22\textwidth}
        \includegraphics[width=\linewidth]{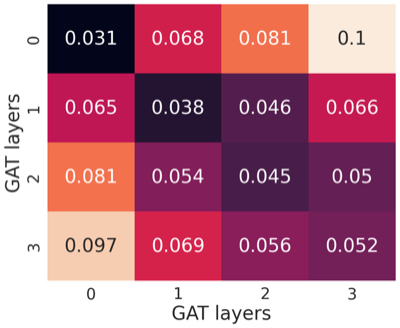}
    \end{subfigure}
    \begin{subfigure}{0.22\textwidth}
        \includegraphics[width=\linewidth]{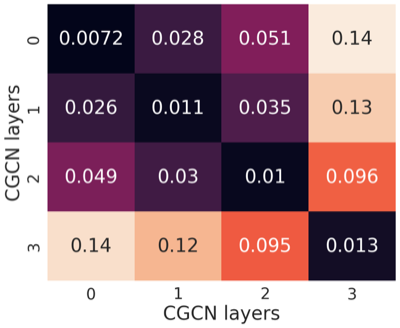}
    \end{subfigure}

    \begin{subfigure}{0.09\textwidth}
        \centering
        Flickr\\
        \textbf{}\\
        \textbf{}\\
        \textbf{}\\
    \end{subfigure}
    \begin{subfigure}{0.22\textwidth}
        \includegraphics[width=\linewidth]{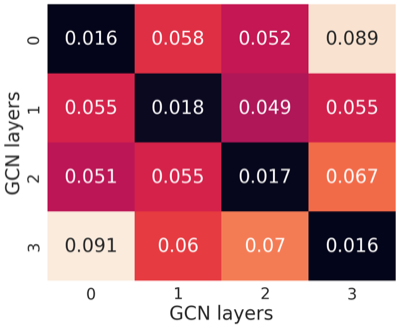}
    \end{subfigure}
    \begin{subfigure}{0.22\textwidth}
        \includegraphics[width=\linewidth]{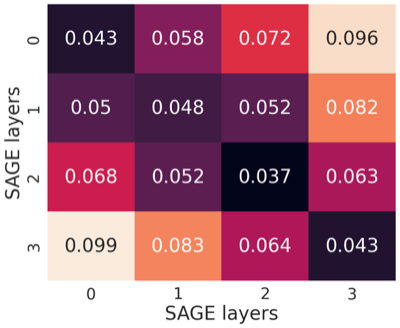}
    \end{subfigure}
    \begin{subfigure}{0.22\textwidth}
        \includegraphics[width=\linewidth]{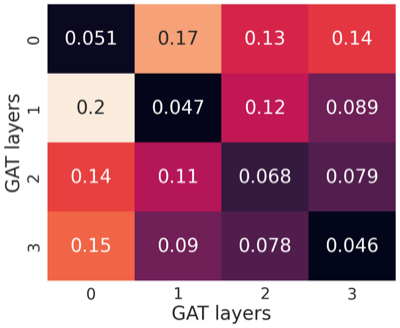}
    \end{subfigure}
    \begin{subfigure}{0.22\textwidth}
        \includegraphics[width=\linewidth]{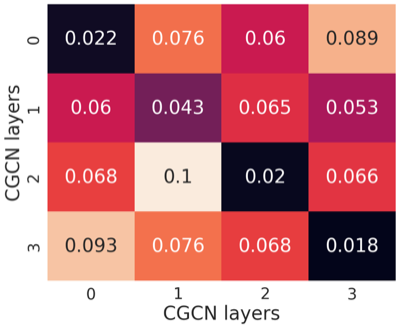}
    \end{subfigure}
    
    \caption{Sanity Tests for Node Classification on all 4 datasets. The heatmaps show layer-wise dissimilarity values for two different initialization of the same dataset on four GNN architectures. }
    \label{fig:sanity_test_NC_others}
\end{figure}

\newpage
\subsection{Cross Architecture Tests on Node Classification}
In this section we evaluate the similarity in representations across architectures for the Amazon Computers dataset and the three Planetoid datasets; Cora, Citeseer and Pubmed on Node Classification.%

\begin{figure}[H]
    \centering
    \begin{subfigure}{0.15\textwidth}
        \centering
        Normalized Space Alignment
    \end{subfigure}
    \begin{subfigure}{0.24\textwidth}
        \includegraphics[width=0.90\linewidth]{figures/GNN_analysis/Amazon/cross_arch_tests/NC/GCN_CGCN_NSA_heatmap.png}
        \caption{GCN vs ClusterGCN}
    \end{subfigure}
    \begin{subfigure}{0.24\textwidth}
        \includegraphics[width=0.90\linewidth]{figures/GNN_analysis/Amazon/cross_arch_tests/NC/SAGE_GAT_NSA_heatmap.png}
        \caption{GraphSAGE vs GAT}
    \end{subfigure}
    \begin{subfigure}{0.24\textwidth}
        \includegraphics[width=0.90\linewidth]{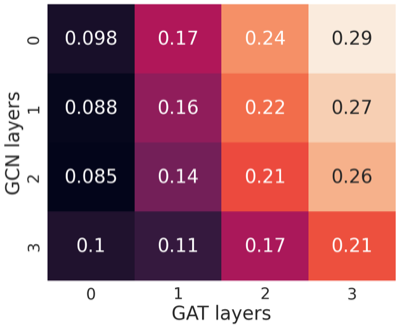}
        \caption{GCN vs GAT}
    \end{subfigure}

    \begin{subfigure}{0.15\textwidth}
        \centering
        \textbf{\medskip}
    \end{subfigure}
    \begin{subfigure}{0.24\textwidth}
        \includegraphics[width=0.90\linewidth]{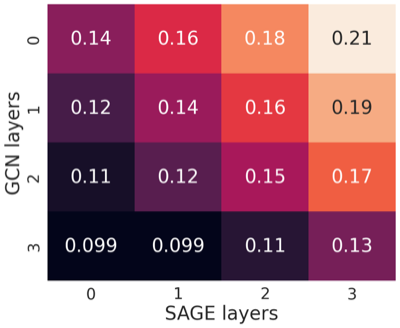}
        \caption{GCN vs GraphSAGE}
    \end{subfigure}
    \begin{subfigure}{0.24\textwidth}
        \includegraphics[width=0.90\linewidth]{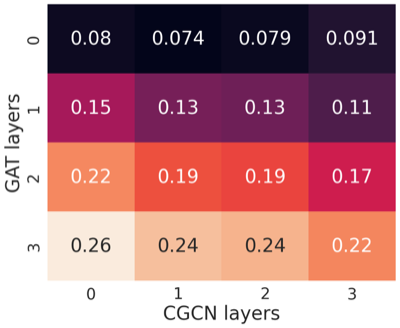}
        \caption{GAT vs ClusterGCN}
    \end{subfigure}
    \begin{subfigure}{0.24\textwidth}
        \includegraphics[width=0.90\linewidth]{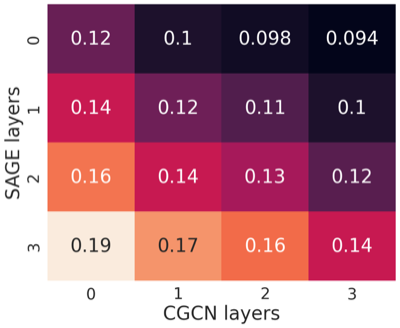}
        \caption{GSAGE vs CGCN}
    \end{subfigure}

    \caption{Cross Architecture Tests using NSA for Node Classification on the Amazon Dataset}
    \label{fig:cross_arch_NC_NSA_Amazon}
\end{figure}

\begin{figure}[H]
    \centering
    \begin{subfigure}{0.15\textwidth}
        \centering
        Normalized Space Alignment
    \end{subfigure}
    \begin{subfigure}{0.24\textwidth}
        \includegraphics[width=0.90\linewidth]{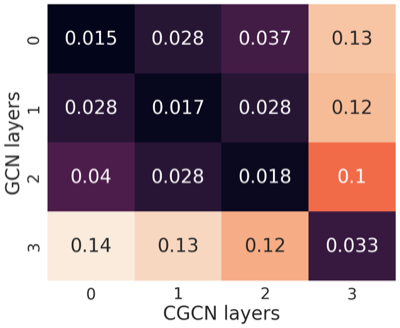}
        \caption{GCN vs ClusterGCN}
    \end{subfigure}
    \begin{subfigure}{0.24\textwidth}
        \includegraphics[width=0.90\linewidth]{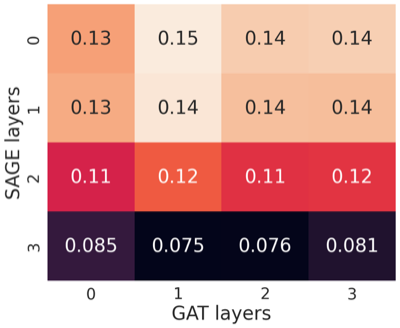}
        \caption{GraphSAGE vs GAT}
    \end{subfigure}
    \begin{subfigure}{0.24\textwidth}
        \includegraphics[width=0.90\linewidth]{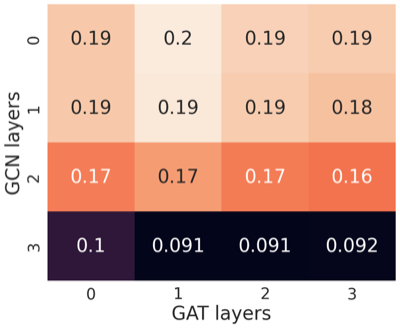}
        \caption{GCN vs GAT}
    \end{subfigure}

    \begin{subfigure}{0.15\textwidth}
        \centering
        \textbf{\medskip}
    \end{subfigure}
    \begin{subfigure}{0.24\textwidth}
        \includegraphics[width=0.90\linewidth]{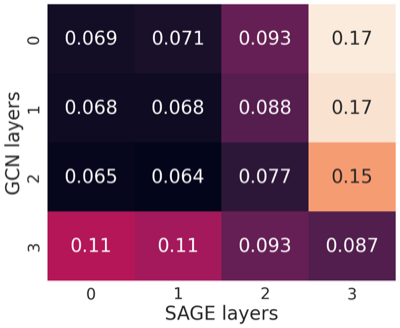}
        \caption{GCN vs GraphSAGE}
    \end{subfigure}
    \begin{subfigure}{0.24\textwidth}
        \includegraphics[width=0.90\linewidth]{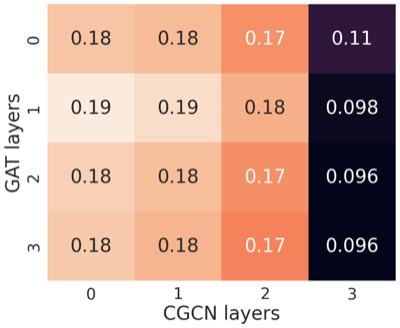}
        \caption{GAT vs ClusterGCN}
    \end{subfigure}
    \begin{subfigure}{0.24\textwidth}
        \includegraphics[width=0.90\linewidth]{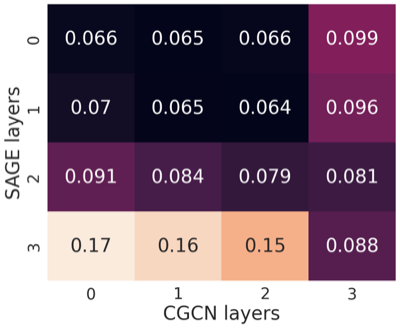}
        \caption{GSAGE vs CGCN}
    \end{subfigure}

    \caption{Cross Architecture Tests using NSA for Node Classification on the Cora Dataset}
    \label{fig:cross_arch_NC_NSA_Cora}
\end{figure}
\newpage
\begin{figure}
    \centering
    \begin{subfigure}{0.15\textwidth}
        \centering
        Normalized Space Alignment
    \end{subfigure}
    \begin{subfigure}{0.24\textwidth}
        \includegraphics[width=\linewidth]{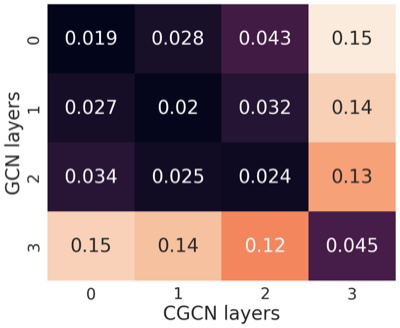}
        \caption{GCN vs ClusterGCN}
    \end{subfigure}
    \begin{subfigure}{0.24\textwidth}
        \includegraphics[width=\linewidth]{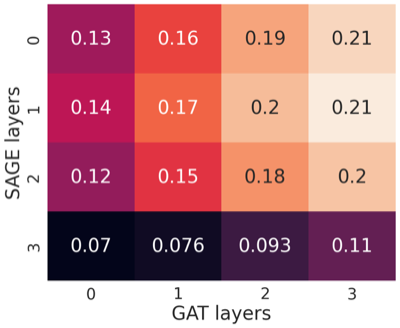}
        \caption{GraphSAGE vs GAT}
    \end{subfigure}
    \begin{subfigure}{0.24\textwidth}
        \includegraphics[width=\linewidth]{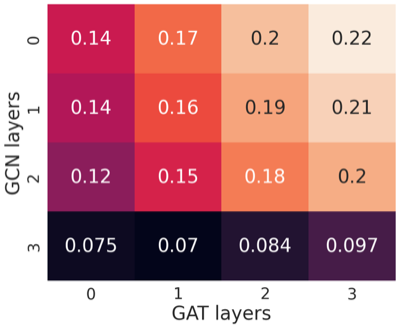}
        \caption{GCN vs GAT}
    \end{subfigure}

    \begin{subfigure}{0.15\textwidth}
        \centering
        \textbf{\medskip}
    \end{subfigure}
    \begin{subfigure}{0.24\textwidth}
        \includegraphics[width=\linewidth]{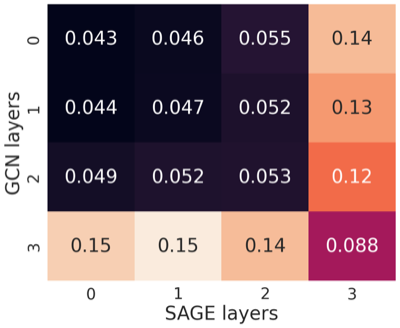}
        \caption{GCN vs GraphSAGE}
    \end{subfigure}
    \begin{subfigure}{0.24\textwidth}
        \includegraphics[width=\linewidth]{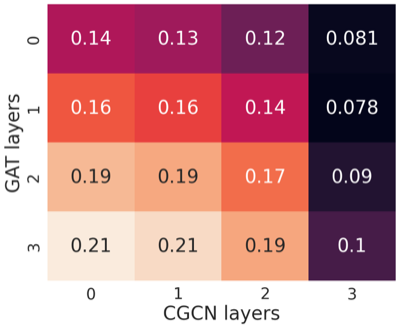}
        \caption{GAT vs ClusterGCN}
    \end{subfigure}
    \begin{subfigure}{0.24\textwidth}
        \includegraphics[width=\linewidth]{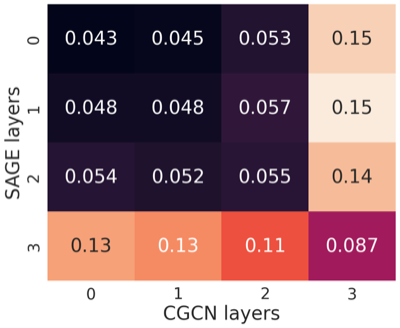}
        \caption{GSAGE vs CGCN}
    \end{subfigure}

    \caption{Cross Architecture Tests using NSA for Node Classification on the Citeseer Dataset}
    \label{fig:cross_arch_NC_NSA_Citeseer}
\end{figure}

\begin{figure}[H]
    \centering
    \begin{subfigure}{0.15\textwidth}
        \centering
        Normalized Space Alignment
    \end{subfigure}
    \begin{subfigure}{0.24\textwidth}
        \includegraphics[width=0.95\linewidth]{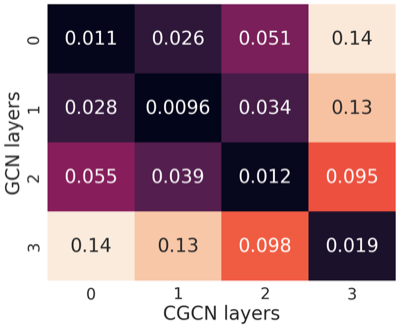}
        \caption{GCN vs ClusterGCN}
    \end{subfigure}
    \begin{subfigure}{0.24\textwidth}
        \includegraphics[width=0.95\linewidth]{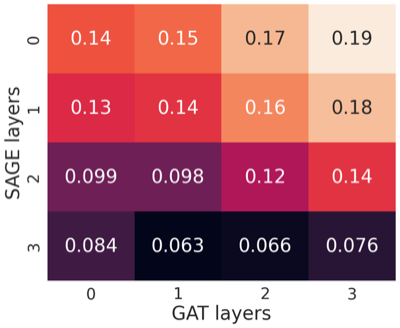}
        \caption{GraphSAGE vs GAT}
    \end{subfigure}
    \begin{subfigure}{0.24\textwidth}
        \includegraphics[width=0.95\linewidth]{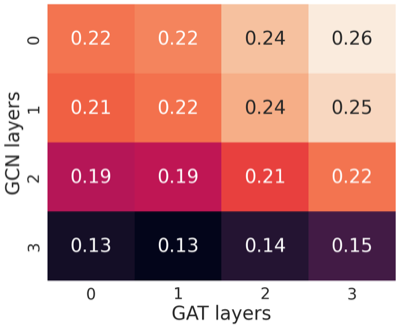}
        \caption{GCN vs GAT}
    \end{subfigure}

    \begin{subfigure}{0.15\textwidth}
        \centering
        \textbf{\medskip}
    \end{subfigure}
    \begin{subfigure}{0.24\textwidth}
        \includegraphics[width=0.95\linewidth]{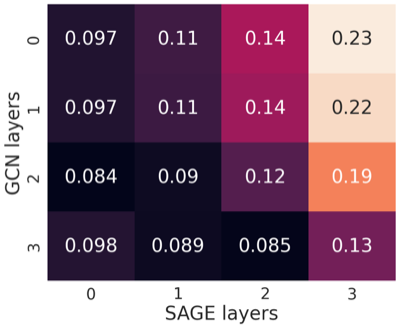}
        \caption{GCN vs GraphSAGE}
    \end{subfigure}
    \begin{subfigure}{0.24\textwidth}
        \includegraphics[width=0.95\linewidth]{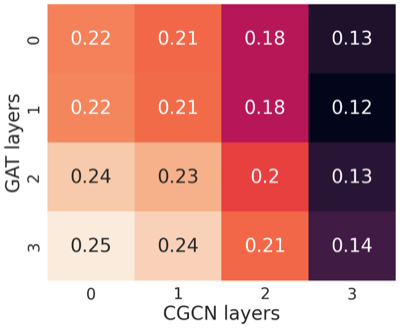}
        \caption{GAT vs ClusterGCN}
    \end{subfigure}
    \begin{subfigure}{0.24\textwidth}
        \includegraphics[width=0.95\linewidth]{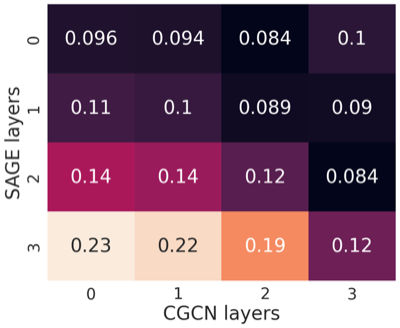}
        \caption{GSAGE vs CGCN}
    \end{subfigure}

    \caption{Cross Architecture Tests using NSA for Node Classification on the Pubmed Dataset}
    \label{fig:cross_arch_NC_NSA_Pubmed}
\end{figure}

\newpage
\subsection{Cross Architecture Tests on Link Prediction}
In this section we evaluate the similarity in representations across architectures for the Amazon Computers dataset and the three Planetoid datasets; Cora, Citeseer and Pubmed on Link Prediction.%

\begin{figure}[H]
    \centering
    \begin{subfigure}{0.15\textwidth}
        \centering
        Normalized Space Alignment
    \end{subfigure}
    \begin{subfigure}{0.24\textwidth}
        \includegraphics[width=\linewidth]{figures/GNN_analysis/Amazon/cross_arch_tests/LP/GCN_CGCN_NSA_heatmap.png}
        \caption{GCN vs ClusterGCN}
    \end{subfigure}
    \begin{subfigure}{0.24\textwidth}
        \includegraphics[width=\linewidth]{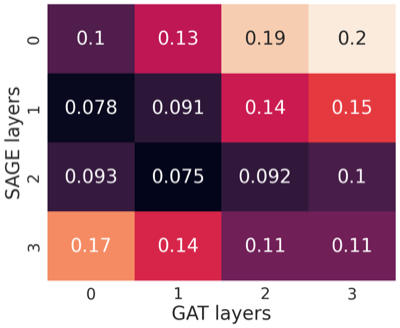}
        \caption{GraphSAGE vs GAT}
    \end{subfigure}
    \begin{subfigure}{0.24\textwidth}
        \includegraphics[width=\linewidth]{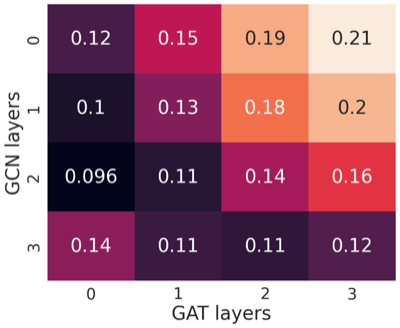}
        \caption{GCN vs GAT}
    \end{subfigure}

    \begin{subfigure}{0.15\textwidth}
        \centering
        \textbf{\medskip}
    \end{subfigure}
    \begin{subfigure}{0.24\textwidth}
        \includegraphics[width=\linewidth]{figures/GNN_analysis/Amazon/cross_arch_tests/LP/GCN_SAGE_NSA_heatmap.png}
        \caption{GCN vs GraphSAGE}
    \end{subfigure}
    \begin{subfigure}{0.24\textwidth}
        \includegraphics[width=\linewidth]{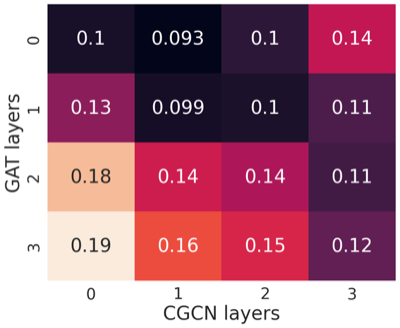}
        \caption{GAT vs ClusterGCN}
    \end{subfigure}
    \begin{subfigure}{0.24\textwidth}
        \includegraphics[width=\linewidth]{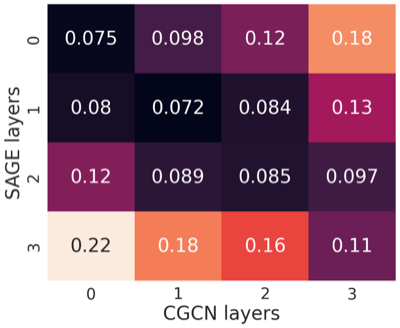}
        \caption{GSAGE vs CGCN}
    \end{subfigure}

    \caption{Cross Architecture Tests using NSA for Link Prediction on the Amazon Dataset}
    \label{fig:cross_arch_LP_NSA_Amazon}
\end{figure}

\begin{figure}[H]
    \centering
    \begin{subfigure}{0.15\textwidth}
        \centering
        Normalized Space Alignment
    \end{subfigure}
    \begin{subfigure}{0.24\textwidth}
        \includegraphics[width=\linewidth]{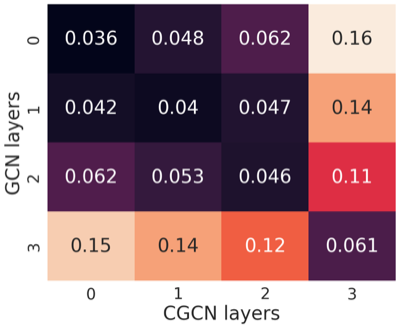}
        \caption{GCN vs ClusterGCN}
    \end{subfigure}
    \begin{subfigure}{0.24\textwidth}
        \includegraphics[width=\linewidth]{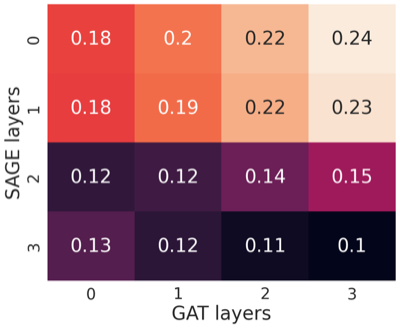}
        \caption{GraphSAGE vs GAT}
    \end{subfigure}
    \begin{subfigure}{0.24\textwidth}
        \includegraphics[width=\linewidth]{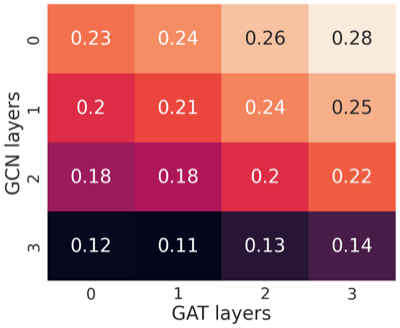}
        \caption{GCN vs GAT}
    \end{subfigure}

    \begin{subfigure}{0.15\textwidth}
        \centering
        \textbf{\medskip}
    \end{subfigure}
    \begin{subfigure}{0.24\textwidth}
        \includegraphics[width=\linewidth]{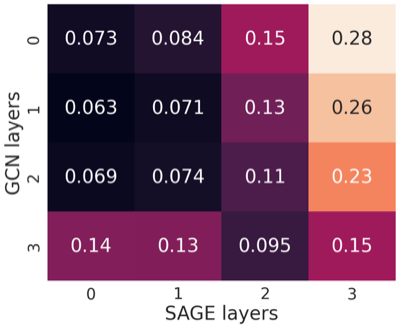}
        \caption{GCN vs GraphSAGE}
    \end{subfigure}
    \begin{subfigure}{0.24\textwidth}
        \includegraphics[width=\linewidth]{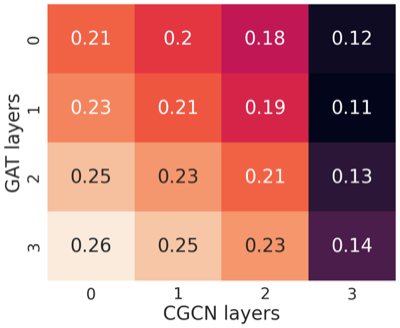}
        \caption{GAT vs ClusterGCN}
    \end{subfigure}
    \begin{subfigure}{0.24\textwidth}
        \includegraphics[width=\linewidth]{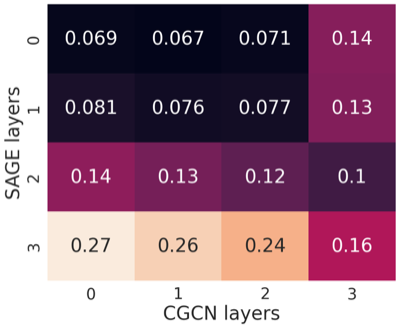}
        \caption{GSAGE vs CGCN}
    \end{subfigure}

    \caption{Cross Architecture Tests using NSA for Link Prediction on the Cora Dataset}
    \label{fig:cross_arch_LP_NSA_Cora}
\end{figure}

\begin{figure}[H]
    \centering
    \begin{subfigure}{0.15\textwidth}
        \centering
        Normalized Space Alignment
    \end{subfigure}
    \begin{subfigure}{0.24\textwidth}
        \includegraphics[width=\linewidth]{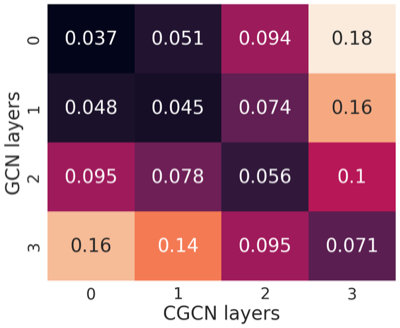}
        \caption{GCN vs ClusterGCN}
    \end{subfigure}
    \begin{subfigure}{0.24\textwidth}
        \includegraphics[width=\linewidth]{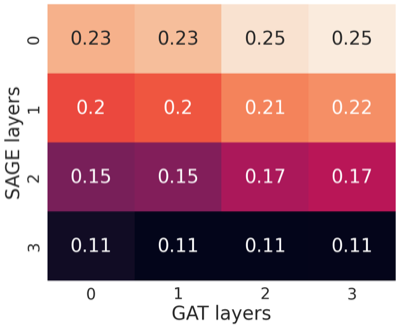}
        \caption{GraphSAGE vs GAT}
    \end{subfigure}
    \begin{subfigure}{0.24\textwidth}
        \includegraphics[width=\linewidth]{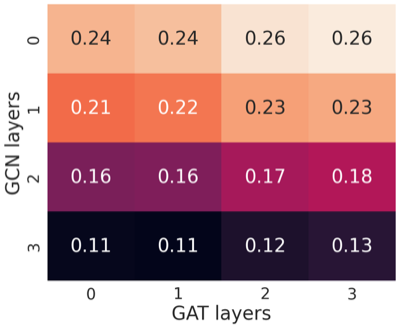}
        \caption{GCN vs GAT}
    \end{subfigure}

    \begin{subfigure}{0.15\textwidth}
        \centering
        \textbf{\medskip}
    \end{subfigure}
    \begin{subfigure}{0.24\textwidth}
        \includegraphics[width=\linewidth]{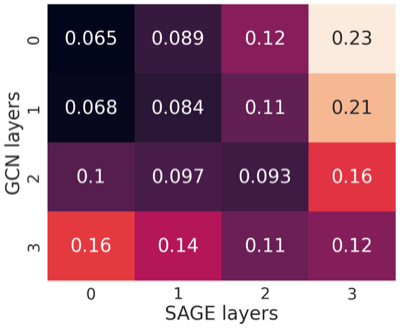}
        \caption{GCN vs GraphSAGE}
    \end{subfigure}
    \begin{subfigure}{0.24\textwidth}
        \includegraphics[width=\linewidth]{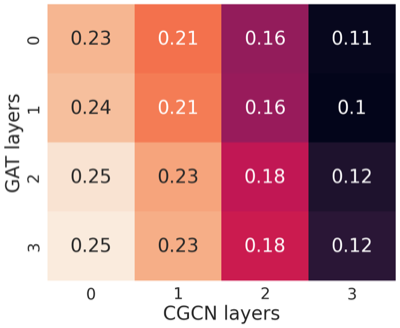}
        \caption{GAT vs ClusterGCN}
    \end{subfigure}
    \begin{subfigure}{0.24\textwidth}
        \includegraphics[width=\linewidth]{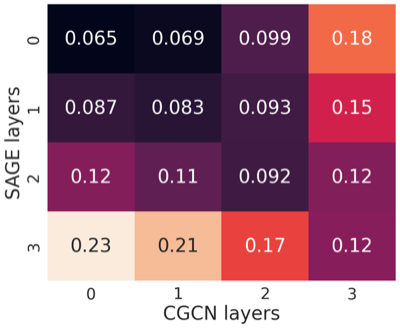}
        \caption{GSAGE vs CGCN}
    \end{subfigure}

    \caption{Cross Architecture Tests using NSA for Link Prediction on the Citeseer Dataset}
    \label{fig:cross_arch_LP_NSA_Citeseer}
\end{figure}

\begin{figure}[H]
    \centering
    \begin{subfigure}{0.15\textwidth}
        \centering
        Normalized Space Alignment
    \end{subfigure}
    \begin{subfigure}{0.24\textwidth}
        \includegraphics[width=0.95\linewidth]{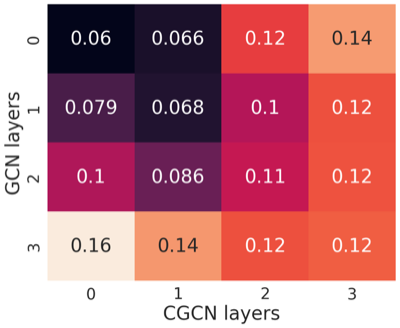}
        \caption{GCN vs ClusterGCN}
    \end{subfigure}
    \begin{subfigure}{0.24\textwidth}
        \includegraphics[width=0.95\linewidth]{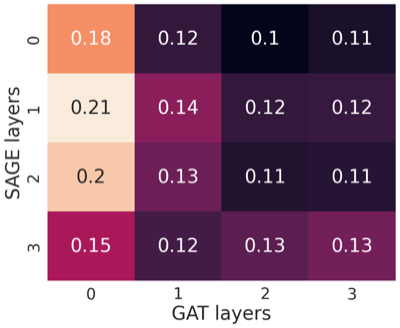}
        \caption{GraphSAGE vs GAT}
    \end{subfigure}
    \begin{subfigure}{0.24\textwidth}
        \includegraphics[width=0.95\linewidth]{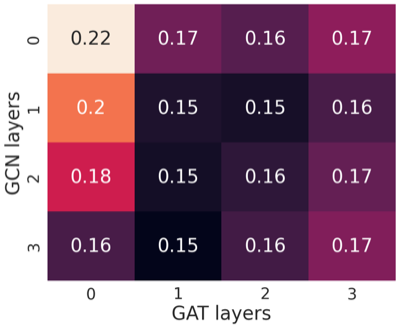}
        \caption{GCN vs GAT}
    \end{subfigure}

    \begin{subfigure}{0.15\textwidth}
        \centering
        \textbf{\medskip}
    \end{subfigure}
    \begin{subfigure}{0.24\textwidth}
        \includegraphics[width=0.95\linewidth]{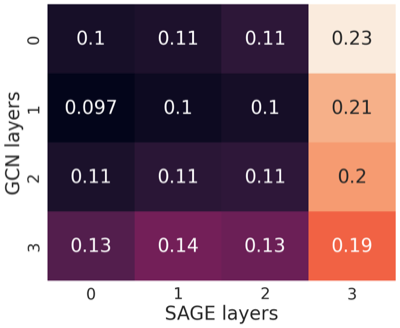}
        \caption{GCN vs GraphSAGE}
    \end{subfigure}
    \begin{subfigure}{0.24\textwidth}
        \includegraphics[width=0.95\linewidth]{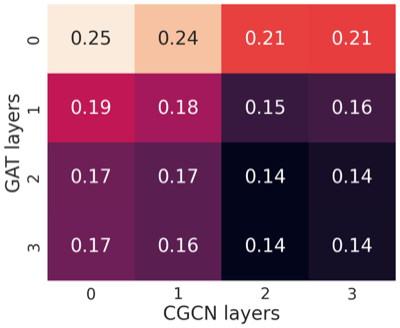}
        \caption{GAT vs ClusterGCN}
    \end{subfigure}
    \begin{subfigure}{0.24\textwidth}
        \includegraphics[width=0.95\linewidth]{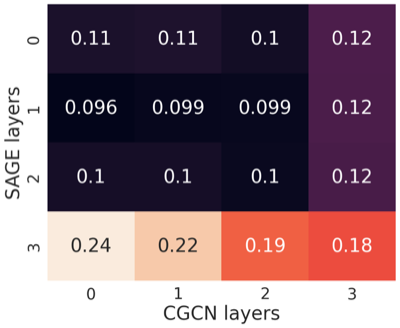}
        \caption{GSAGE vs CGCN}
    \end{subfigure}

    \caption{Cross Architecture Tests using NSA for Link Prediction on the Pubmed Dataset}
    \label{fig:cross_arch_LP_NSA_Pubmed}
\end{figure}

\newpage
\subsection{Cross Downstream Task Tests}
We perform cross downstream task tests where we compare the layerwise intermediate representations between two models of the same architecture type but trained on different tasks. The two tasks we use are Node Classification and Link Prediction. The size of the intermediate embeddings is the same between the two models (Node Classification model vs Link Prediction model) except for the last layer. Since NSA is agnostic to the dimension of the embedding, we can compare all the layers regardless of their dimensionality. Just like the previous experiments, we test on the Amazon Computers Dataset and the three Planetoid Datasets. We observe that there is no strong correlation as we go further down the layers between the two models. The highest relative similarity is in the early layers and as we get to the final layers the similarity is almost non existent. 

\begin{figure}[H]
    \centering
    \begin{subfigure}{0.09\textwidth}
        \centering
        \textbf{\bigskip}
    \end{subfigure}
    \begin{subfigure}{0.22\textwidth}
        \centering
        \textbf{GCN}
    \end{subfigure}
    \begin{subfigure}{0.22\textwidth}
        \centering
        \textbf{GSAGE}
    \end{subfigure}
    \begin{subfigure}{0.22\textwidth}
        \centering
        \textbf{GAT}
    \end{subfigure}
    \begin{subfigure}{0.22\textwidth}
        \centering
        \textbf{CGCN}
    \end{subfigure}

    \begin{subfigure}{0.09\textwidth}
        \centering
        Amazon\\
        \textbf{}\\
        \textbf{}\\
        \textbf{}\\
    \end{subfigure}
    \begin{subfigure}{0.22\textwidth}
        \includegraphics[width=\linewidth]{figures/GNN_analysis/Amazon/cross_task_tests/GCN_NSA_heatmap.png}
    \end{subfigure}
    \begin{subfigure}{0.22\textwidth}
        \includegraphics[width=\linewidth]{figures/GNN_analysis/Amazon/cross_task_tests/SAGE_NSA_heatmap.png}
    \end{subfigure}
    \begin{subfigure}{0.22\textwidth}
        \includegraphics[width=\linewidth]{figures/GNN_analysis/Amazon/cross_task_tests/GAT_NSA_heatmap.png}
    \end{subfigure}
    \begin{subfigure}{0.22\textwidth}
        \includegraphics[width=\linewidth]{figures/GNN_analysis/Amazon/cross_task_tests/CGCN_NSA_heatmap.png}
    \end{subfigure}

    \begin{subfigure}{0.09\textwidth}
        \centering
        Cora\\
        \textbf{}\\
        \textbf{}\\
        \textbf{}\\
    \end{subfigure}
    \begin{subfigure}{0.22\textwidth}
        \includegraphics[width=\linewidth]{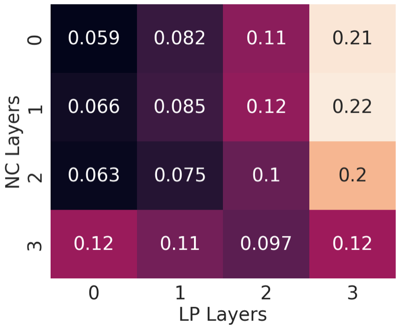}
    \end{subfigure}
    \begin{subfigure}{0.22\textwidth}
        \includegraphics[width=\linewidth]{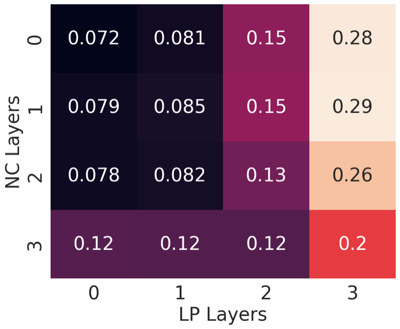}
    \end{subfigure}
    \begin{subfigure}{0.22\textwidth}
        \includegraphics[width=\linewidth]{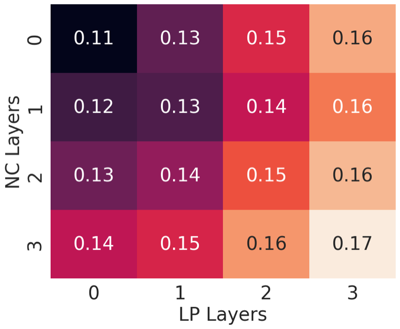}
    \end{subfigure}
    \begin{subfigure}{0.22\textwidth}
        \includegraphics[width=\linewidth]{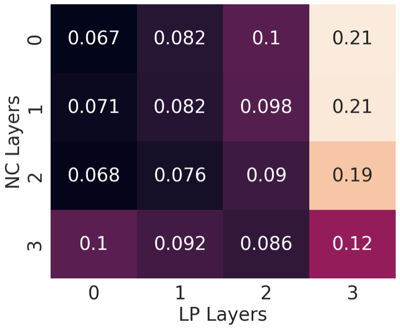}
    \end{subfigure}

    \begin{subfigure}{0.09\textwidth}
        \centering
        Citeseer\\
        \textbf{}\\
        \textbf{}\\
        \textbf{}\\
    \end{subfigure}
    \begin{subfigure}{0.22\textwidth}
        \includegraphics[width=\linewidth]{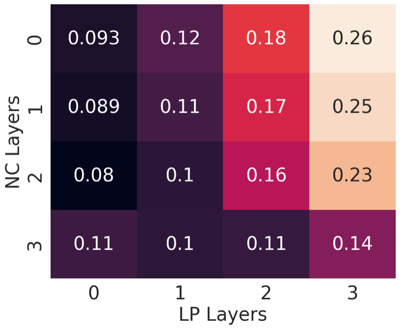}
    \end{subfigure}
    \begin{subfigure}{0.22\textwidth}
        \includegraphics[width=\linewidth]{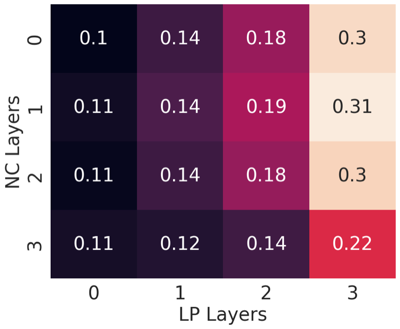}
    \end{subfigure}
    \begin{subfigure}{0.22\textwidth}
        \includegraphics[width=\linewidth]{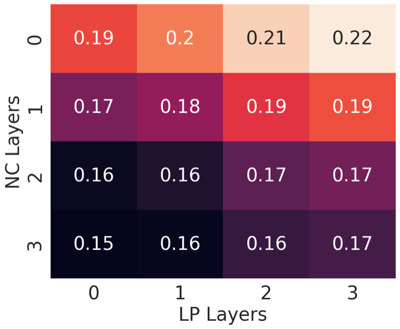}
    \end{subfigure}
    \begin{subfigure}{0.22\textwidth}
        \includegraphics[width=\linewidth]{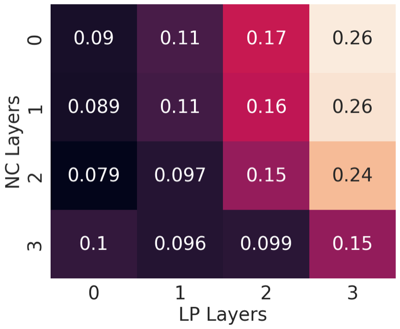}
    \end{subfigure}

    \begin{subfigure}{0.09\textwidth}
        \centering
        Pubmed\\
        \textbf{}\\
        \textbf{}\\
        \textbf{}\\
    \end{subfigure}
    \begin{subfigure}{0.22\textwidth}
        \includegraphics[width=\linewidth]{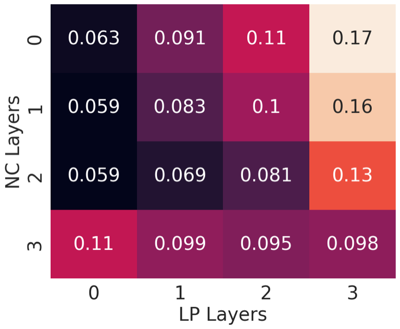}
    \end{subfigure}
    \begin{subfigure}{0.22\textwidth}
        \includegraphics[width=\linewidth]{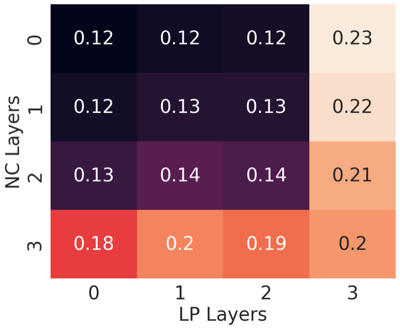}
    \end{subfigure}
    \begin{subfigure}{0.22\textwidth}
        \includegraphics[width=\linewidth]{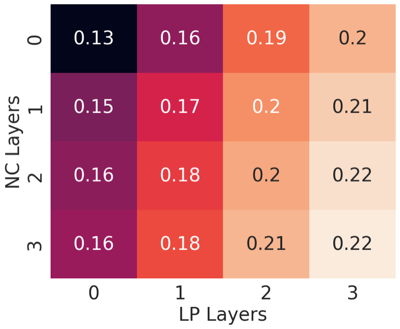}
    \end{subfigure}
    \begin{subfigure}{0.22\textwidth}
        \includegraphics[width=\linewidth]{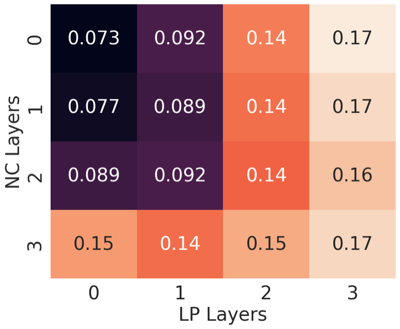}
    \end{subfigure}

    \caption{Cross Downstream Task Tests with NSA on the Amazon, Cora, Citeseer and Pubmed Datasets}
    \label{fig:cross_task_additional}
\end{figure}

\newpage
\section{Convergence Tests}\label{sec:app_conv_amazon}
As a neural network starts converging, the representation structure of the data stops changing significantly and the loss stops dropping drastically. Ideally, a similarity index should be capable of capturing this change. We examine epoch-wise convergence by comparing representations between the current and final epochs. We observe that NSA starts dropping to low values as the test accuracy stops increasing significantly. This shows that NSA can be used to help approximate convergence and even potentially be used as an early stopping measure during neural network training, stopping the weight updates once the structure of your intermediate representations stabilize. We show results for all 4 architectures on node classification and link prediction. Link Prediction models were trained for 200 epochs and show similar patterns to the ones observed with node classification. All convergence tests are performed with NSA only. We show the convergence tests on all 4 architectures on Node Classification in Figure \ref{fig:convergence_nc} and on Link Prediction in Figure \ref{fig:convergence_lp} .
\begin{figure}[H]
    \centering
    \begin{subfigure}{0.49\textwidth}
        \includegraphics[width=\linewidth]{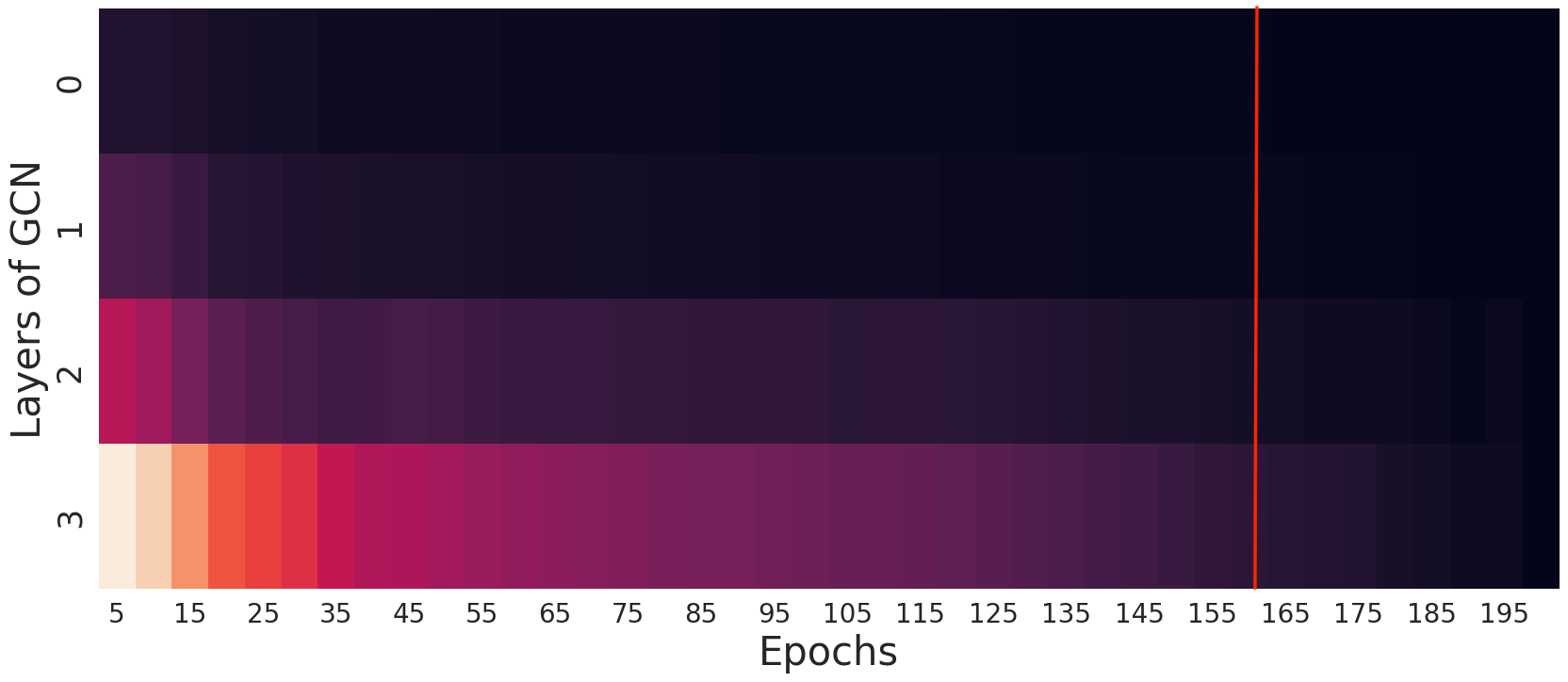}
        \caption{GCN LP Epochwise Convergence Heatmap}
    \end{subfigure}
    \begin{subfigure}{0.49\textwidth}
        \includegraphics[width=\linewidth]{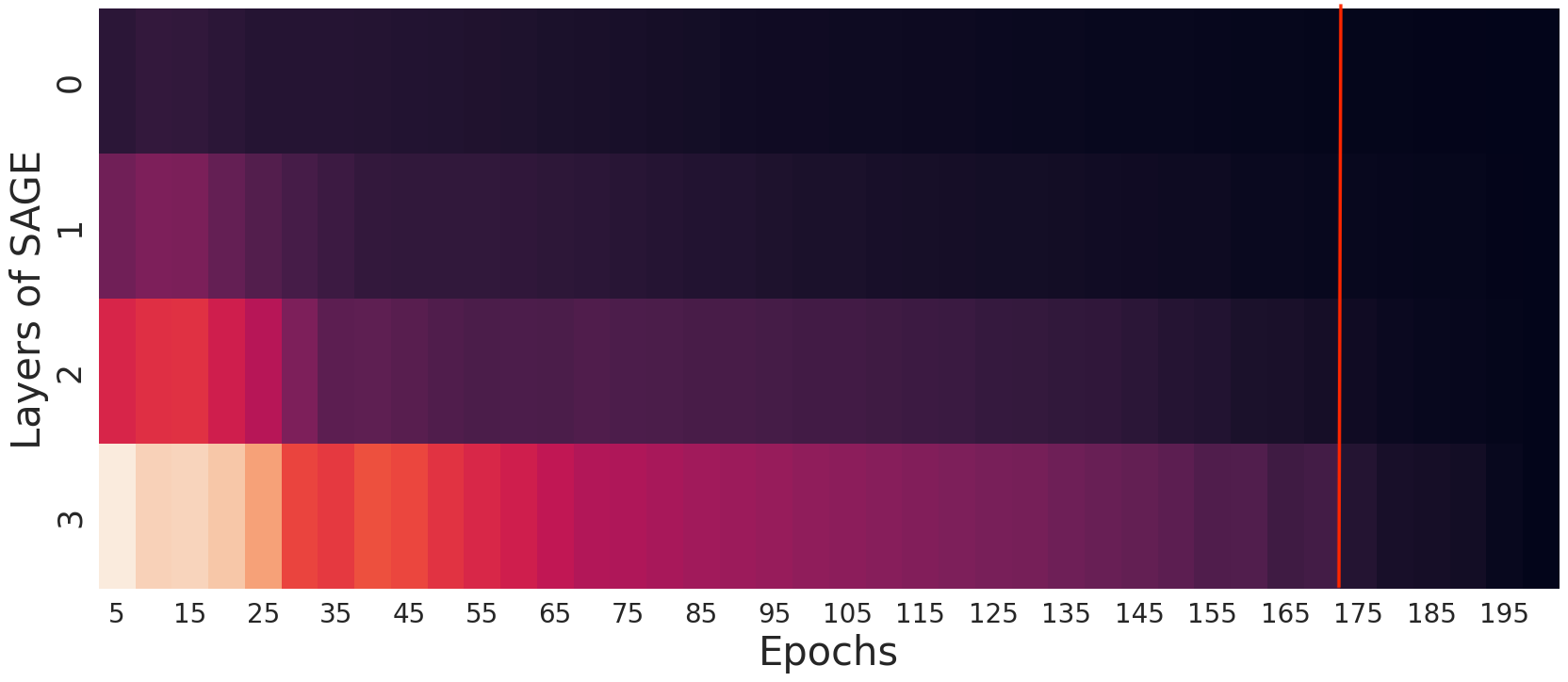}
        \caption{SAGE LP Epochwise Convergence Heatmap}
    \end{subfigure}
    \begin{subfigure}{0.49\textwidth}
        \includegraphics[width=\linewidth]{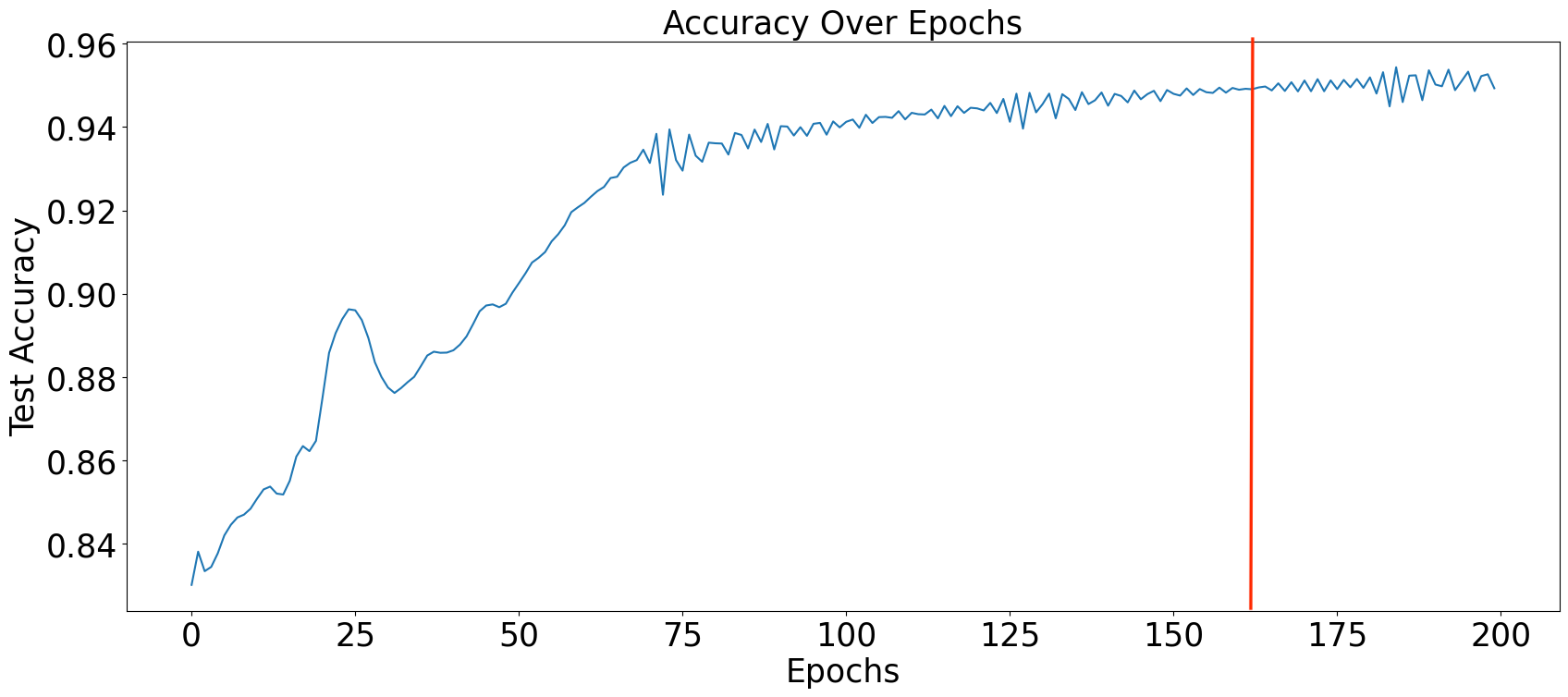}
        \caption{Test Accuracy}
    \end{subfigure}
    \begin{subfigure}{0.49\textwidth}
        \includegraphics[width=\linewidth]{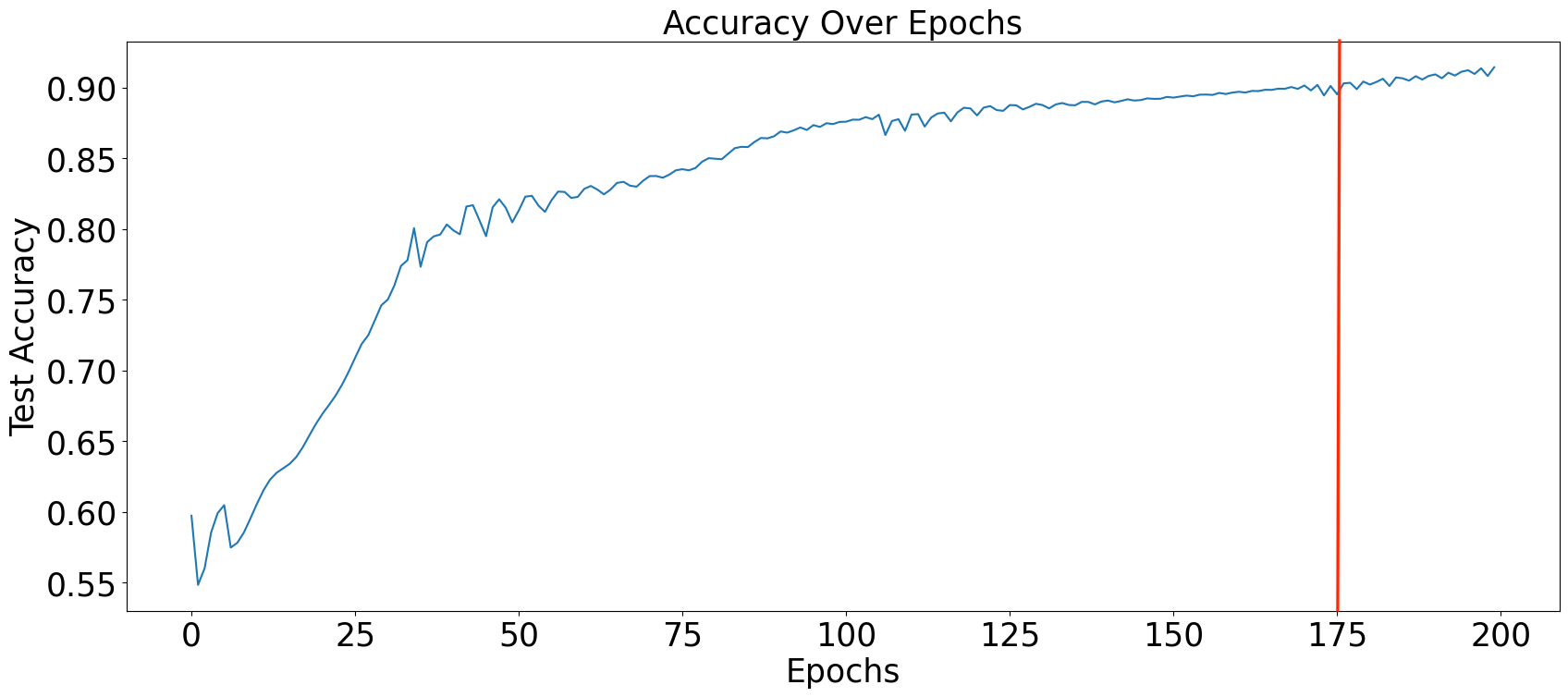}
        \caption{Test Accuracy}
    \end{subfigure}

    \begin{subfigure}{0.49\textwidth}
        \includegraphics[width=\linewidth]{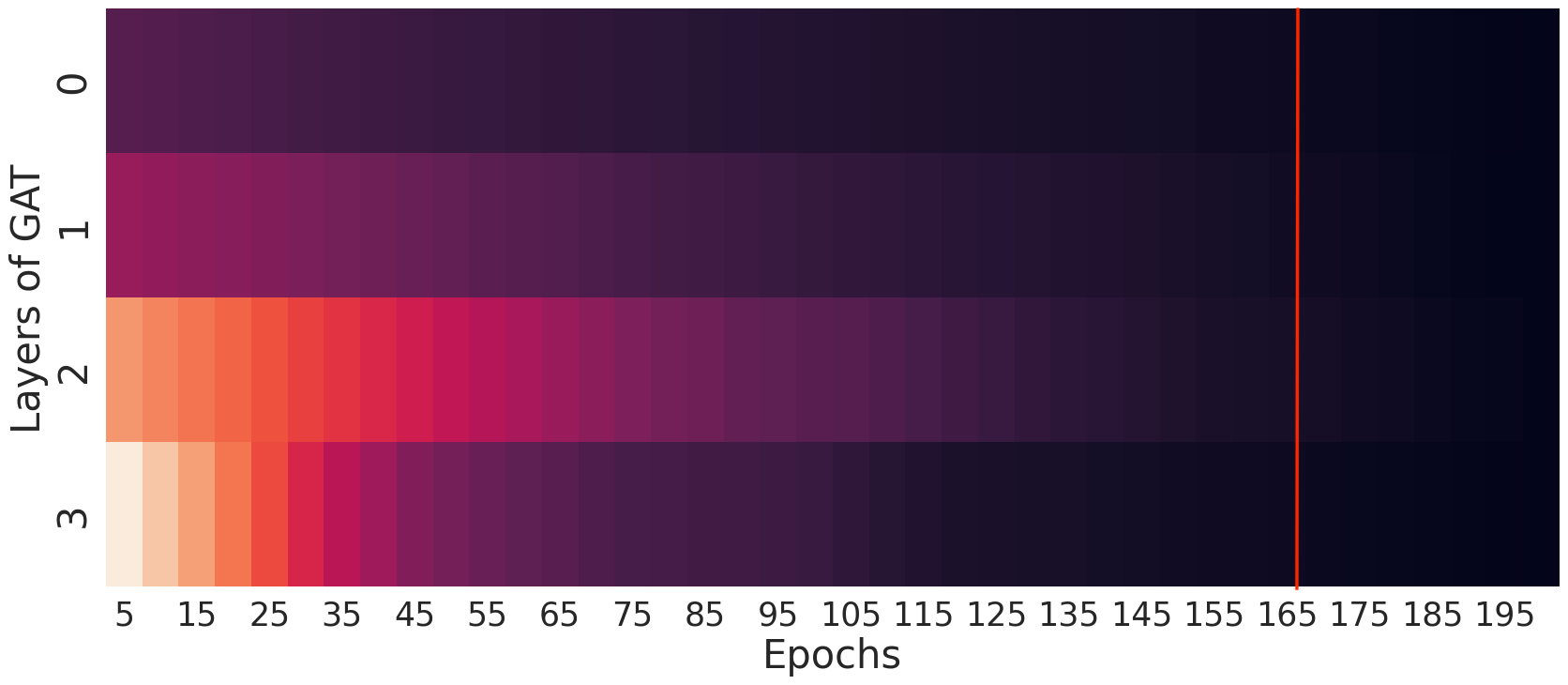}
        \caption{GAT LP Epochwise Convergence Heatmap}
    \end{subfigure}
    \begin{subfigure}{0.49\textwidth}
        \includegraphics[width=\linewidth]{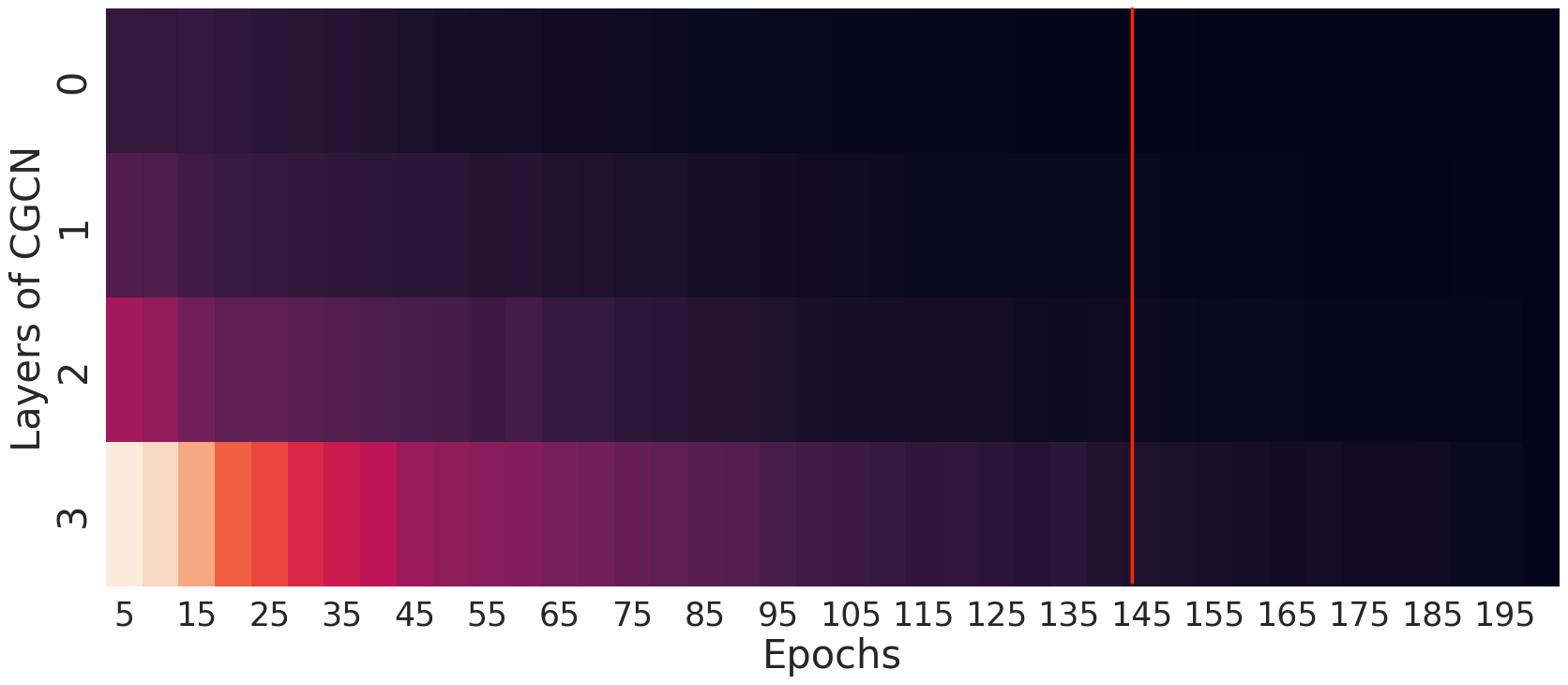}
        \caption{CGCN LP Epochwise Convergence Heatmap}
    \end{subfigure}
    \begin{subfigure}{0.49\textwidth}
        \includegraphics[width=\linewidth]{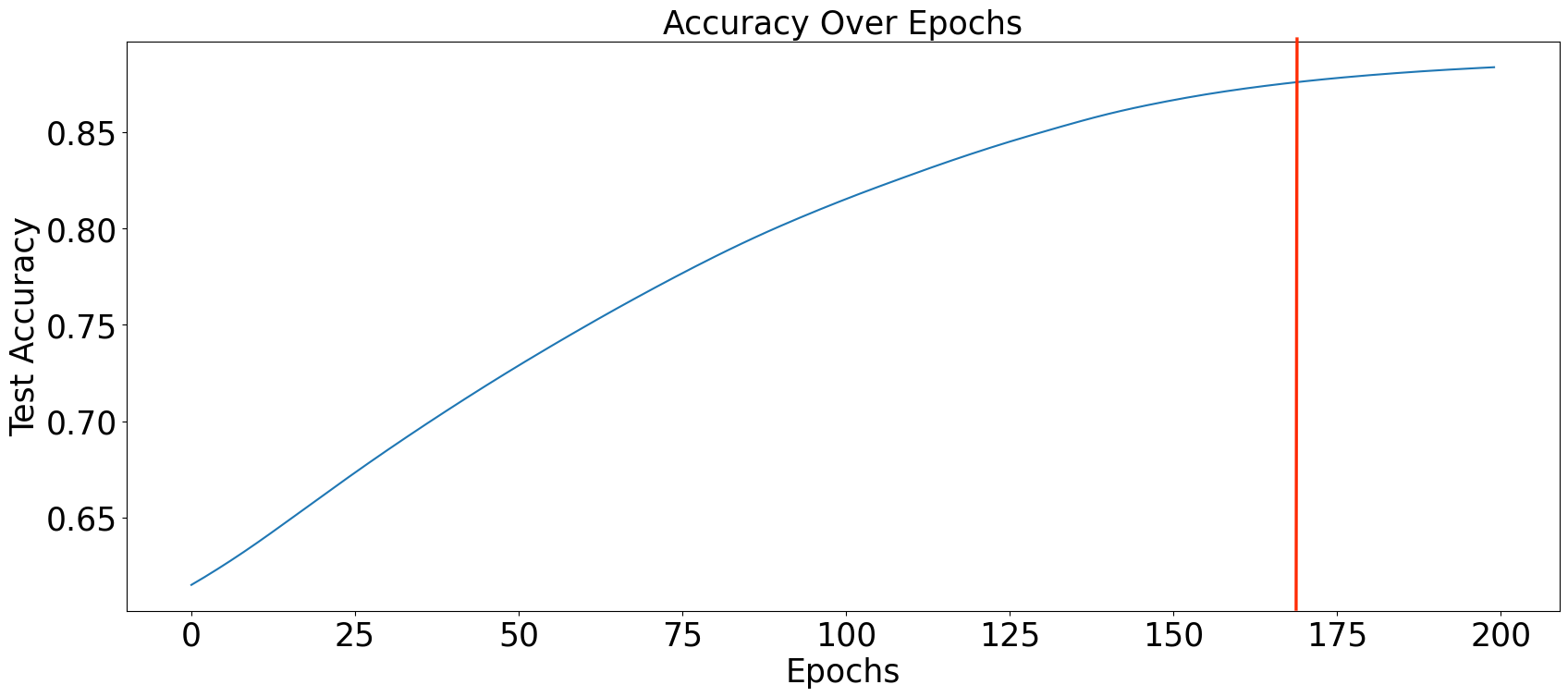}
        \caption{Test Accuracy}
    \end{subfigure}
    \begin{subfigure}{0.49\textwidth}
        \includegraphics[width=\linewidth]{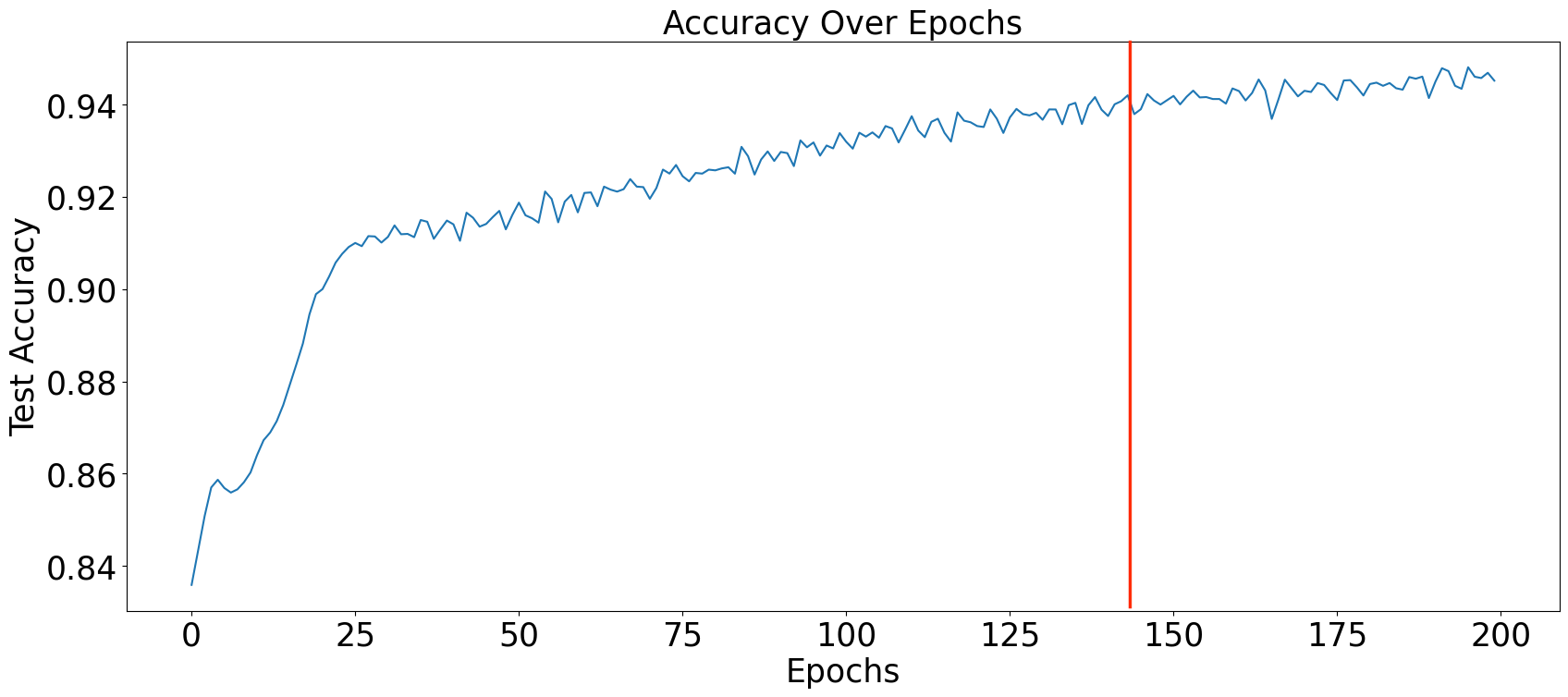}
        \caption{Test Accuracy}
    \end{subfigure}
    \caption{Convergence Tests on Link Prediction}
    \label{fig:convergence_lp}
\end{figure}

\begin{figure}[H]
    \centering
    \begin{subfigure}{0.49\textwidth}
        \includegraphics[width=\linewidth]{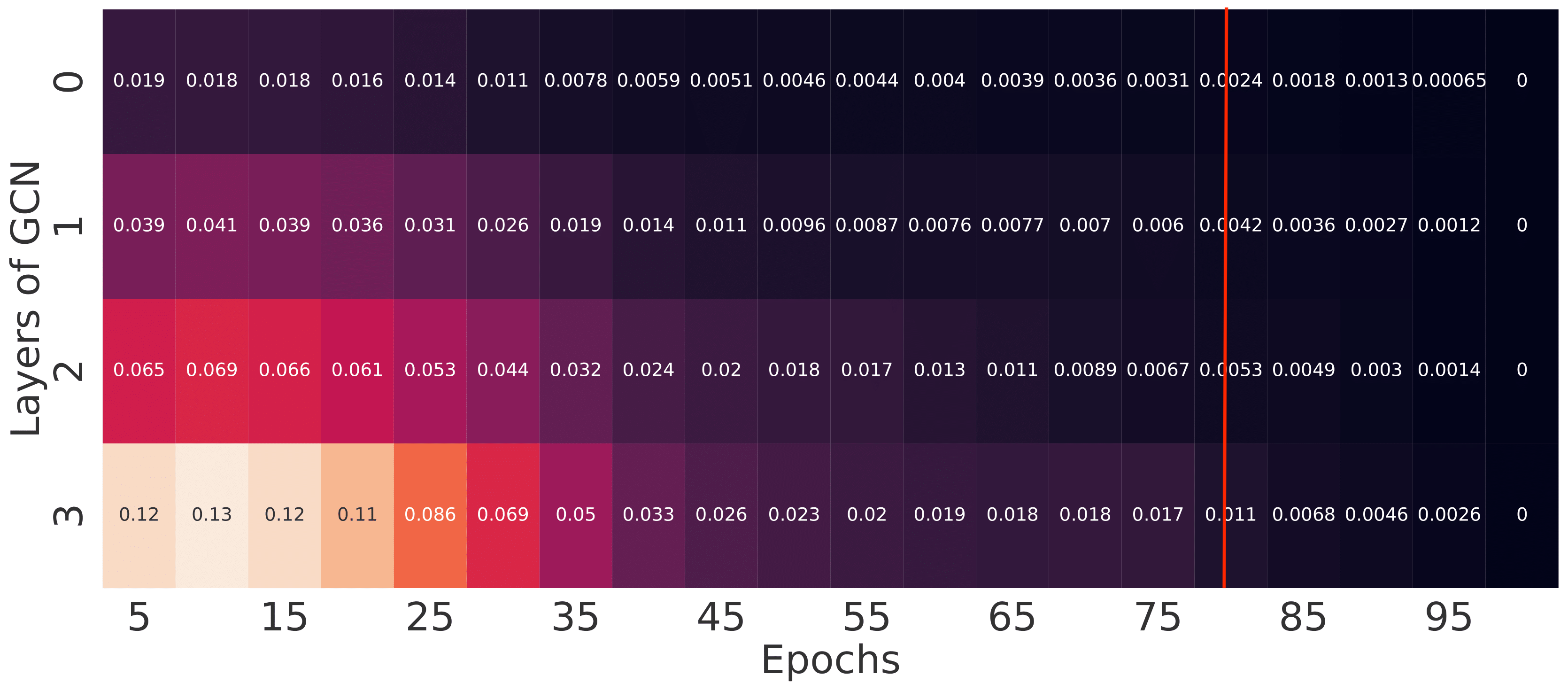}
        \caption{GCN NC Epochwise Convergence Heatmap}
    \end{subfigure}
    \begin{subfigure}{0.49\textwidth}
        \includegraphics[width=\linewidth]{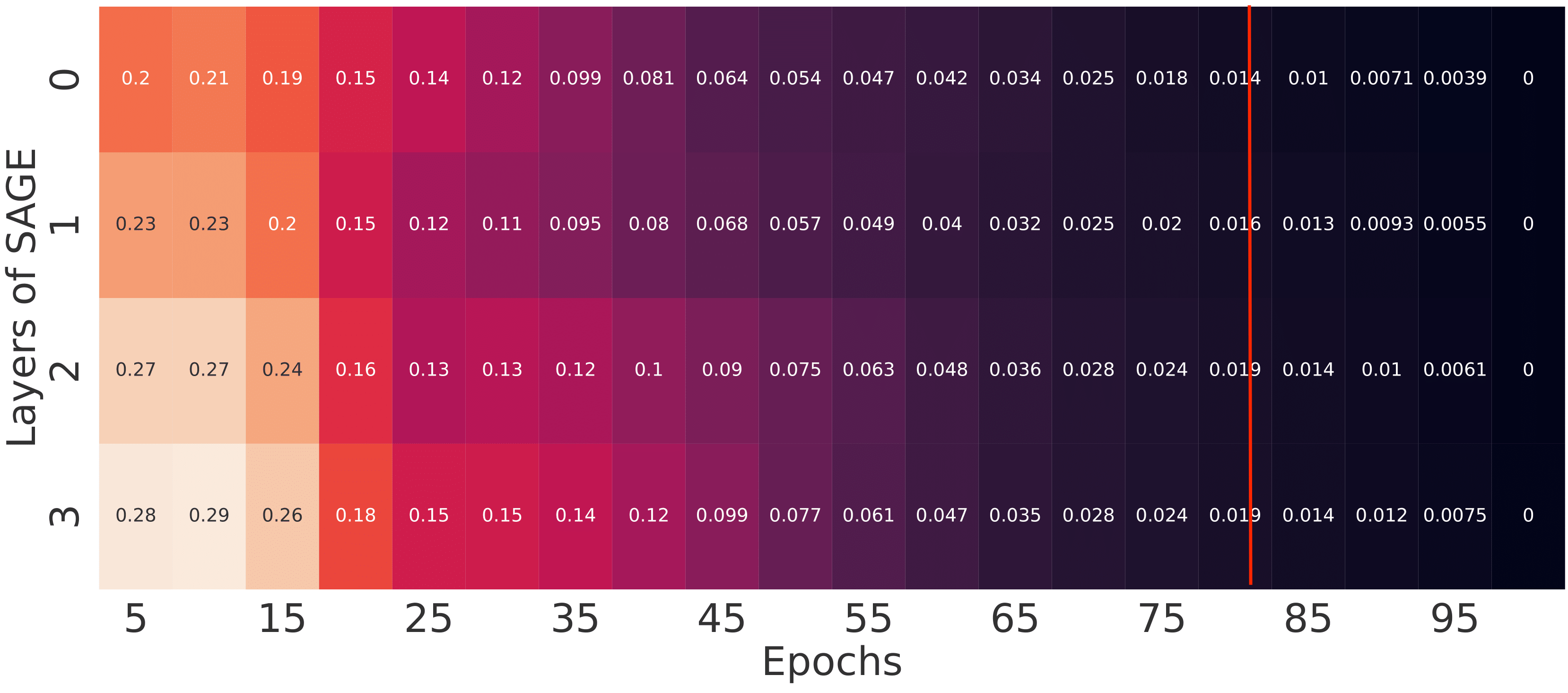}
        \caption{SAGE NC Epochwise Convergence Heatmap}
    \end{subfigure}
    \begin{subfigure}{0.49\textwidth}
        \includegraphics[width=\linewidth]{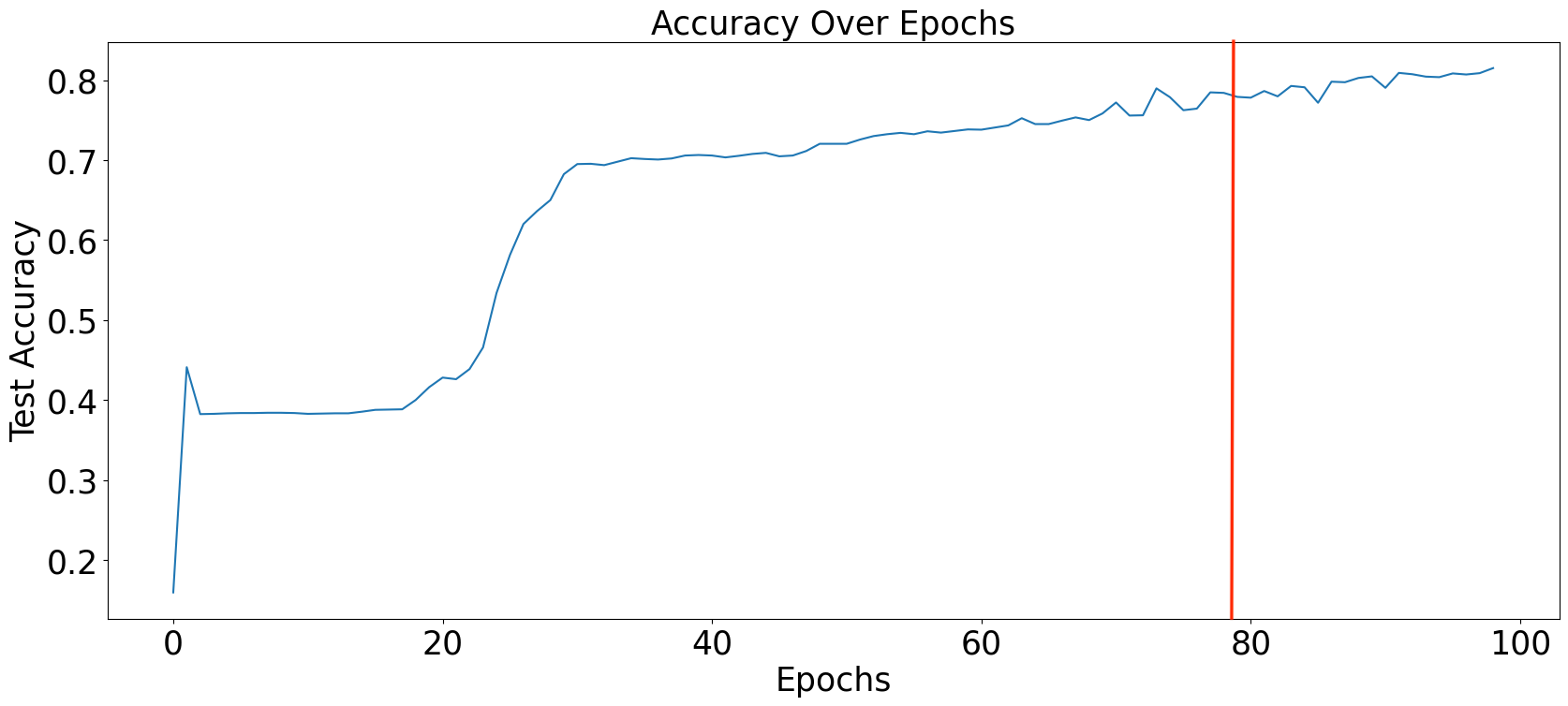}
        \caption{Test Accuracy}
    \end{subfigure}
    \begin{subfigure}{0.49\textwidth}
        \includegraphics[width=\linewidth]{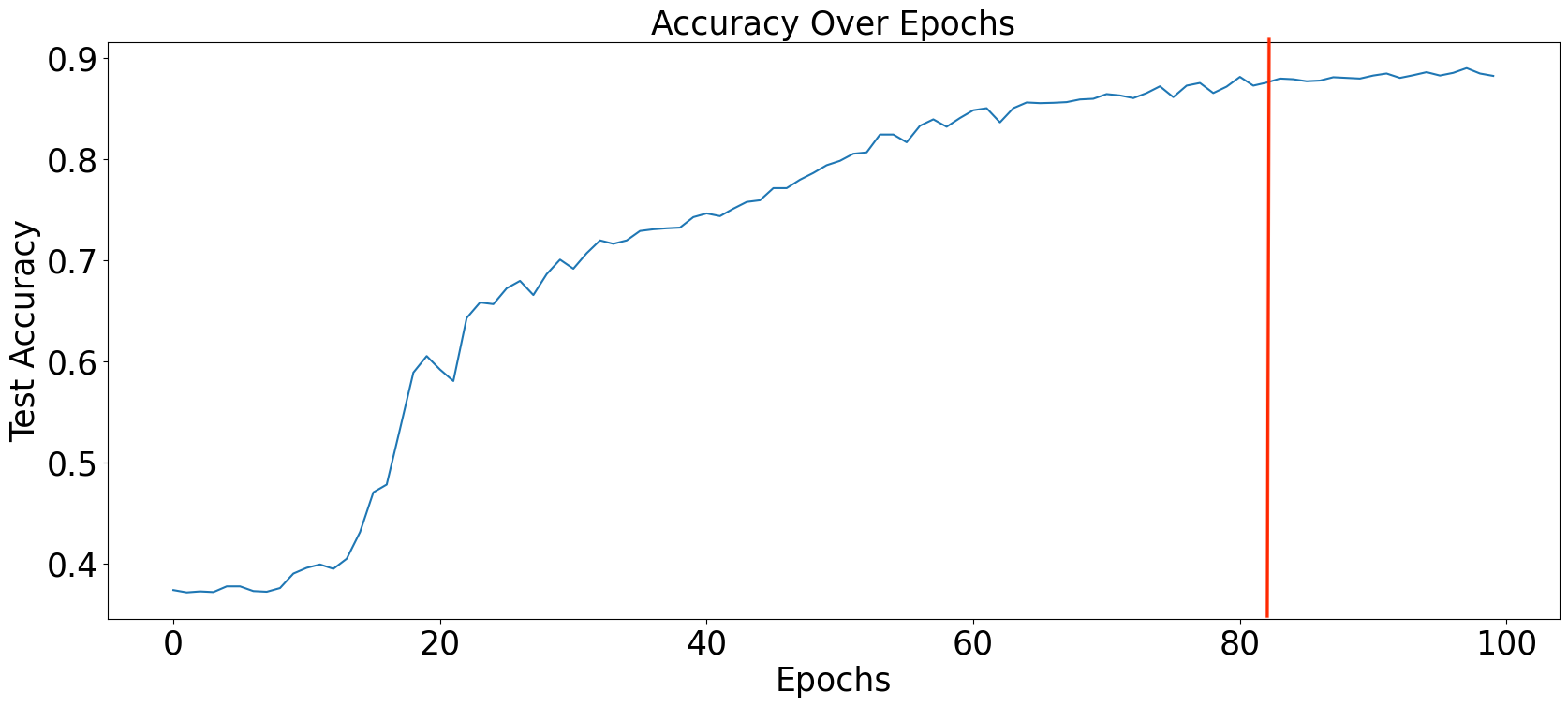}
        \caption{Test Accuracy}
    \end{subfigure}

    \begin{subfigure}{0.49\textwidth}
        \includegraphics[width=\linewidth]{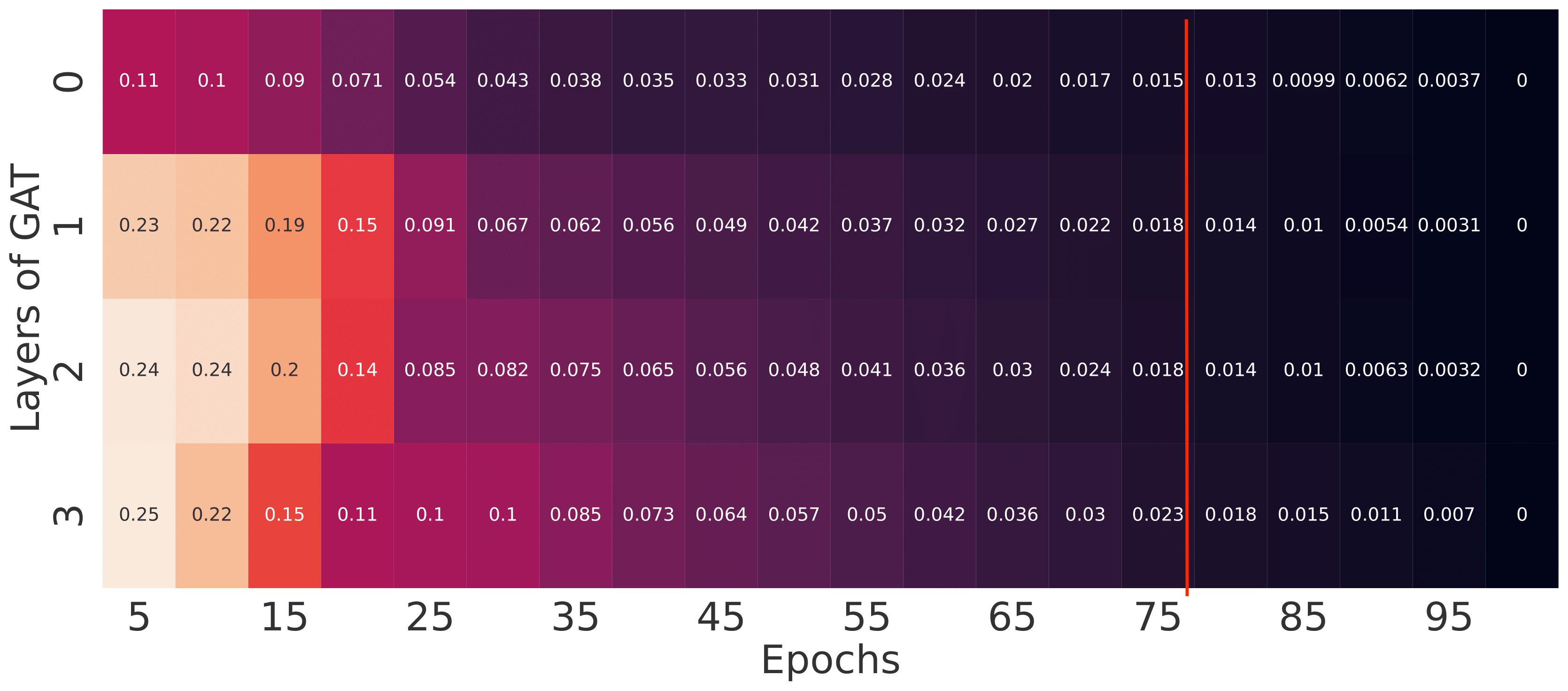}
        \caption{GAT NC Epochwise Convergence Heatmap}
    \end{subfigure}
    \begin{subfigure}{0.49\textwidth}
        \includegraphics[width=\linewidth]{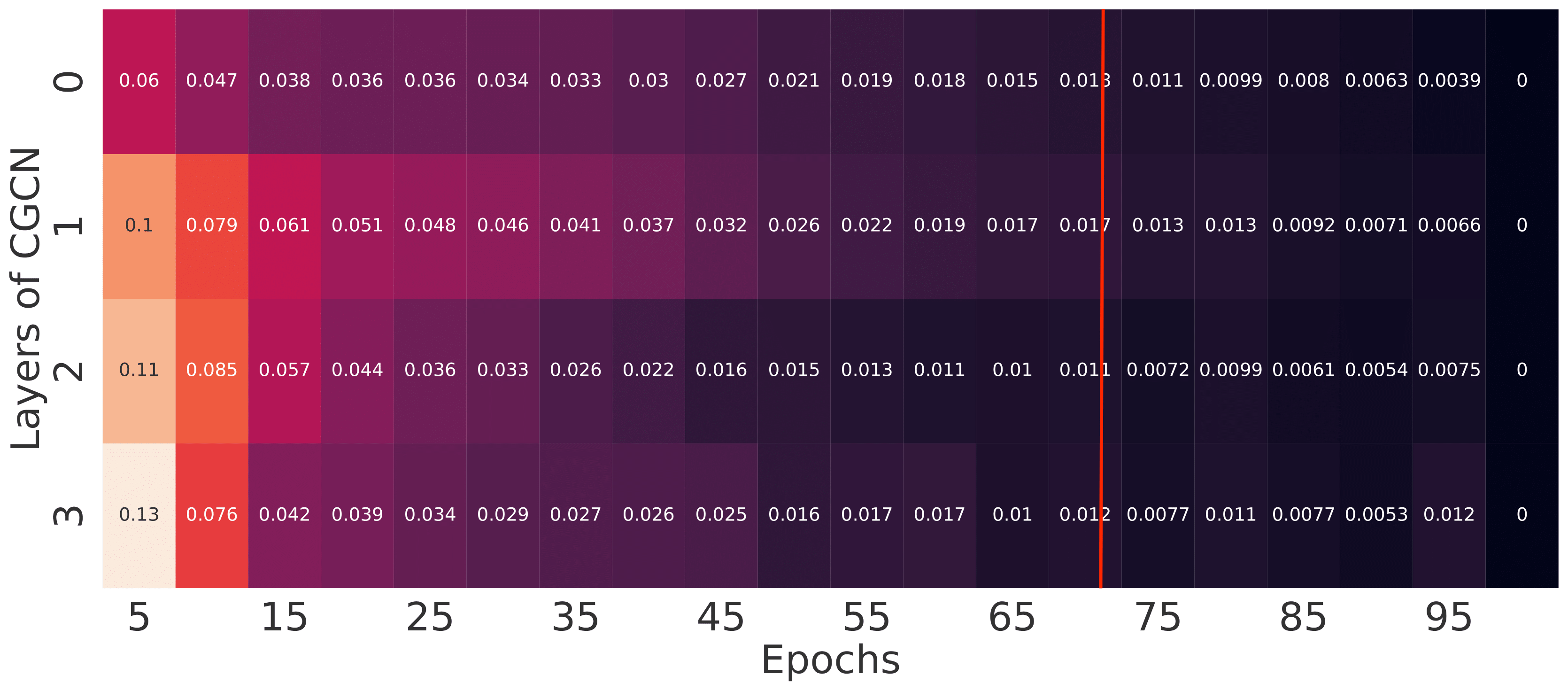}
        \caption{CGCN NC Epochwise Convergence Heatmap}
    \end{subfigure}
    \begin{subfigure}{0.49\textwidth}
        \includegraphics[width=\linewidth]{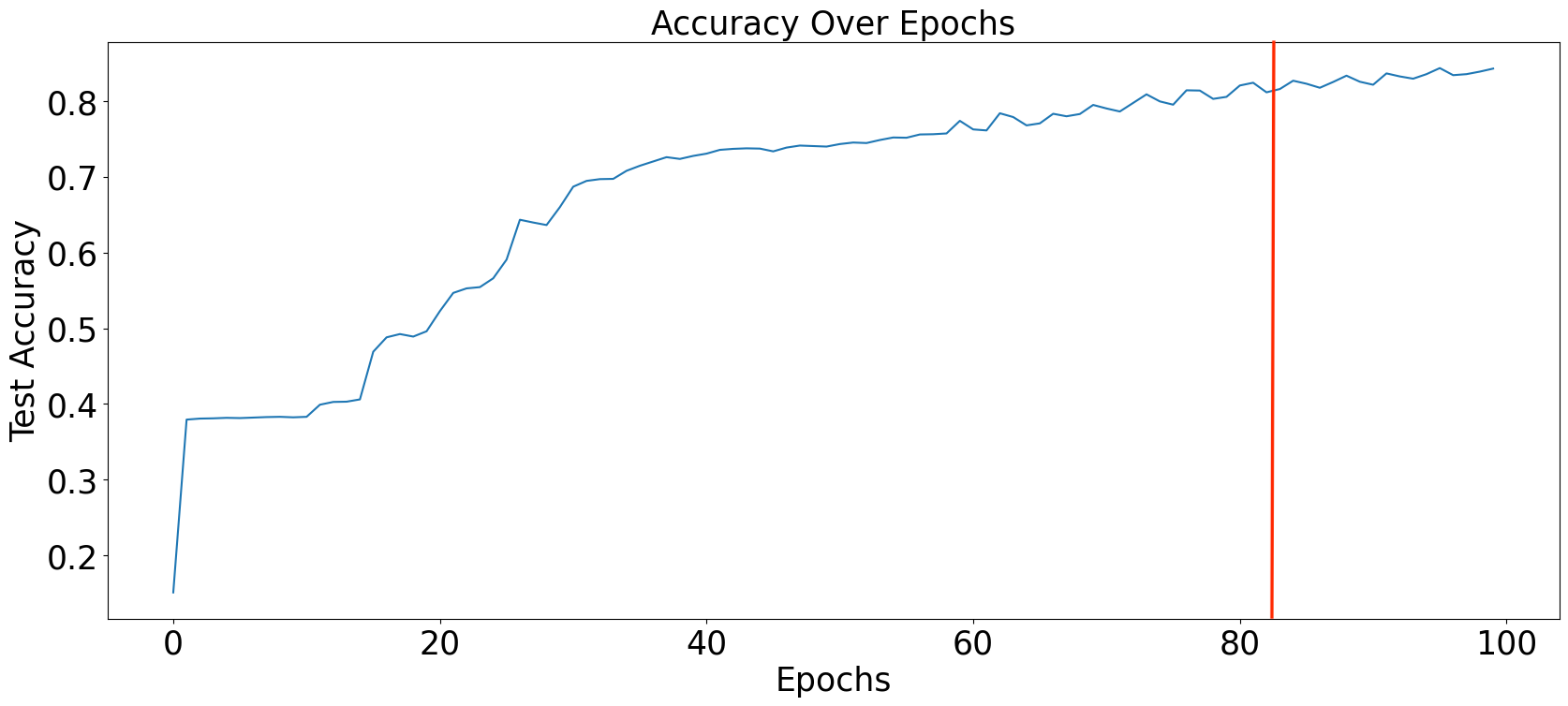}
        \caption{Test Accuracy}
    \end{subfigure}
    \begin{subfigure}{0.49\textwidth}
        \includegraphics[width=\linewidth]{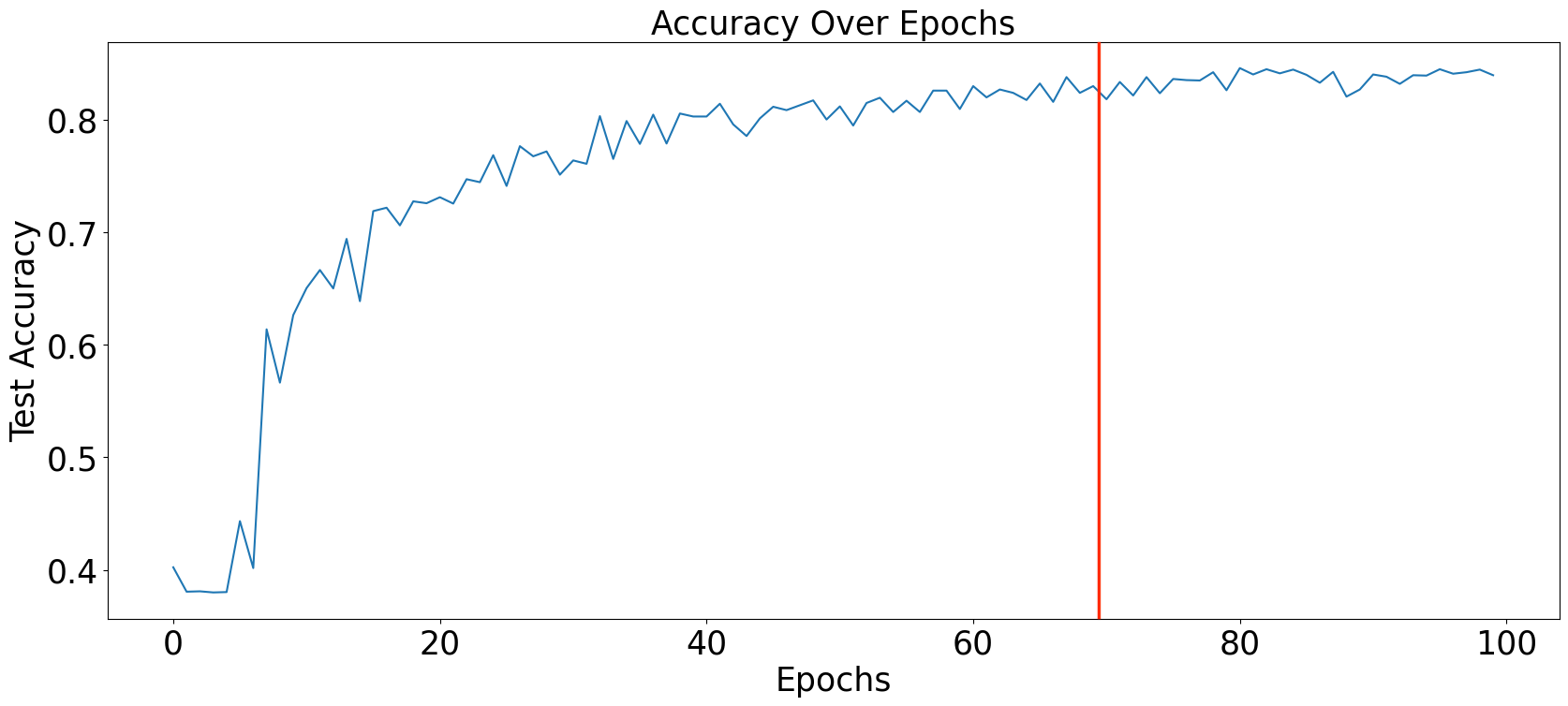}
        \caption{Test Accuracy}
    \end{subfigure}
    \caption{Convergence Tests on Node Classification}
    \label{fig:convergence_nc}
\end{figure}

\end{document}